\newcites{supp}{References}
\theoremstyle{plain}
\newtheorem{theorem}{Theorem}[section]
\theoremstyle{definition}
\newtheorem{assumption}[theorem]{Assumption}
\theoremstyle{remark}
\definecolor{darkorange}{rgb}{0.8, 0.4, 0.0}
\newcommand{\acronym}{STITCH-OPE}
\newcommand{\detlabeta}{\delta_{\beta}}
\newcommand{\Tau}{\mathcal{T}}
\newcommand{\indep}{\perp\!\!\!\perp}
\title{\acronym: Trajectory Stitching with Guided Diffusion for Off-Policy Evaluation}
\author{%
Hossein Goli$^{1,2,4}$ \quad Michael Gimelfarb$^{1,2,4}$ \quad Nathan Samuel de Lara$^{1,2,4}$ \quad
\textbf{Haruki Nishimura}$^3$ \\ \textbf{Masha Itkina$^3$} \quad
\textbf{Florian Shkurti}$^{1,2,4}$ \\
$^1$Department of Computer Science, University of Toronto \\ 
$^2$University of Toronto Robotics Institute, Toronto, Canada \\
$^3$Toyota Research Institute, Los Altos, California \\ 
$^4$Vector Institute, Toronto, Canada \\
\texttt{hossein.goli@mail.utoronto.ca} \quad
\texttt{mike.gimelfarb@mail.utoronto.ca} \\
\texttt{nathan.delara@mail.utoronto.ca} \quad
\texttt{haruki.nishimura@tri.global}\\
\texttt{masha.itkina@tri.global}\quad \texttt{florian@cs.toronto.edu}
}
\begin{document}

\maketitle

\begin{abstract}
Off-policy evaluation (OPE) estimates the performance of a target policy using offline data collected from a behavior policy, and is crucial in domains such as robotics or healthcare where direct interaction with the environment is costly or unsafe. Existing OPE methods are ineffective for high-dimensional, long-horizon problems, due to exponential blow-ups in variance from importance weighting or compounding errors from learned dynamics models. To address these challenges, we propose STITCH-OPE, a model-based generative framework that leverages denoising diffusion for long-horizon OPE in high-dimensional state and action spaces. Starting with a diffusion model pre-trained on the behavior data, STITCH-OPE generates synthetic trajectories from the target policy by guiding the denoising process using the score function of the target policy. STITCH-OPE proposes two technical innovations that make it advantageous for OPE: (1) prevents over-regularization by subtracting the score of the behavior policy during guidance, and (2) generates long-horizon trajectories by stitching partial trajectories together end-to-end. We provide a theoretical guarantee that under mild assumptions, these modifications result in an exponential reduction in variance versus long-horizon trajectory diffusion. Experiments on the D4RL and OpenAI Gym benchmarks show substantial improvement in mean squared error, correlation, and regret metrics compared to state-of-the-art OPE methods.
\end{abstract}

\section{Introduction}
\label{sec:intro}

\begin{table*}[!htb]
    \centering
    \begin{tabularx}{\textwidth}{ccccc}
         & \textbf{I. Behavior Data} & \textbf{II. Target Policy} & \textbf{III. PGD} & \textbf{IV. Ours} \\
            \raisebox{\height}{\parbox{0.17\textwidth}{\raggedleft \textbf{A: Conditional diffusion improves composition. }\par}}  &
         \includegraphics[width=0.16\linewidth]{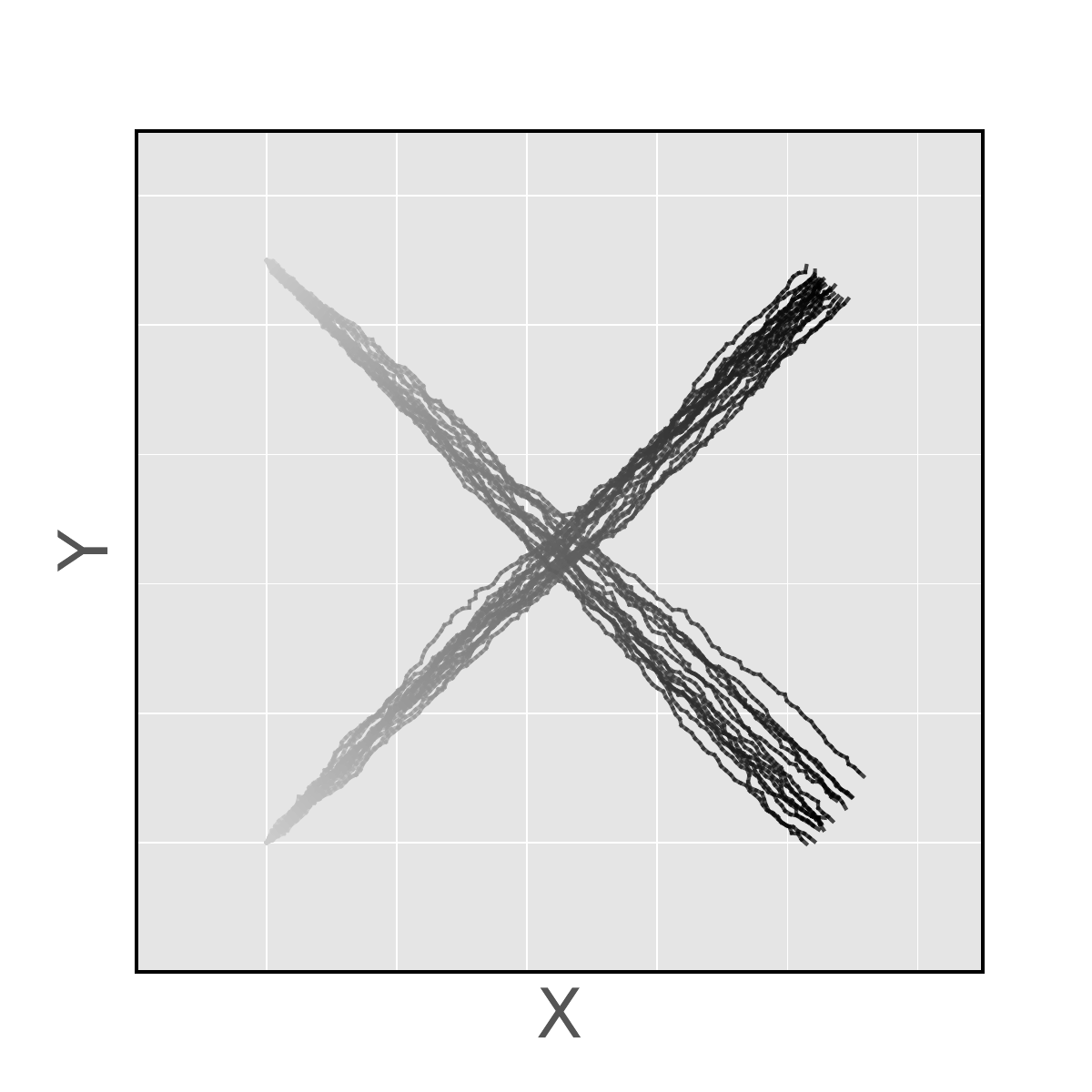} & 
         \includegraphics[width=0.16\linewidth]{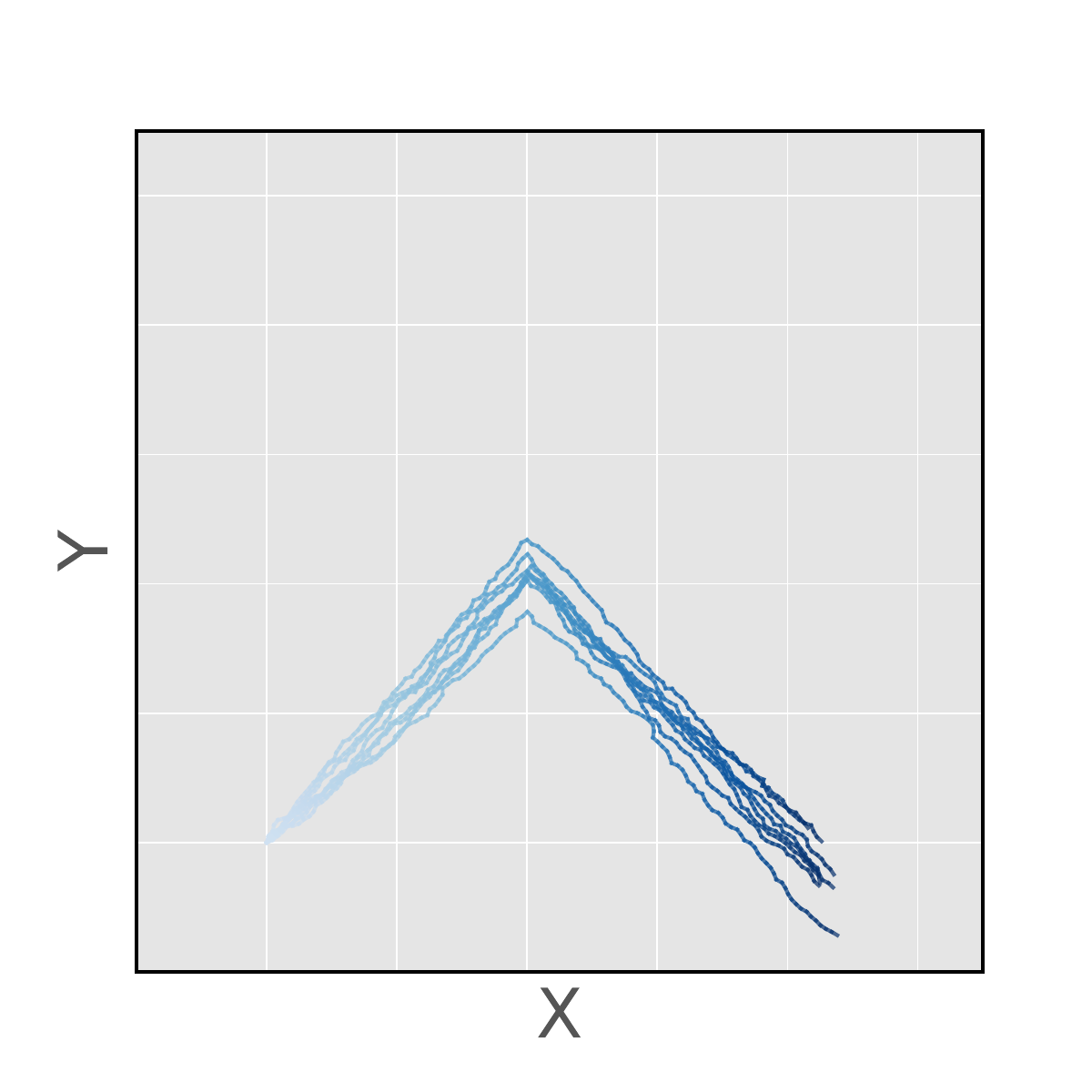} &
         \includegraphics[width=0.16\linewidth]{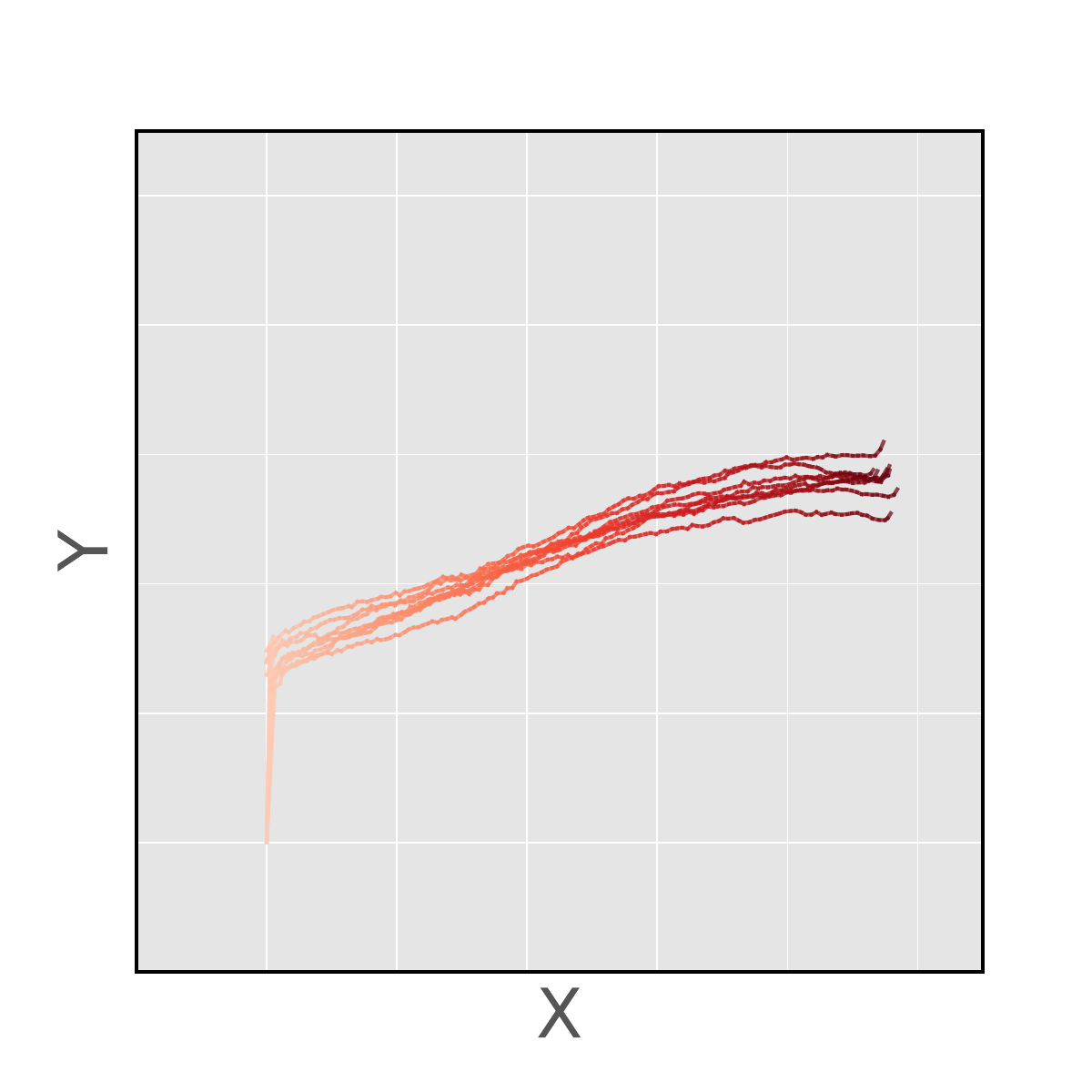} & 
         \includegraphics[width=0.16\linewidth]{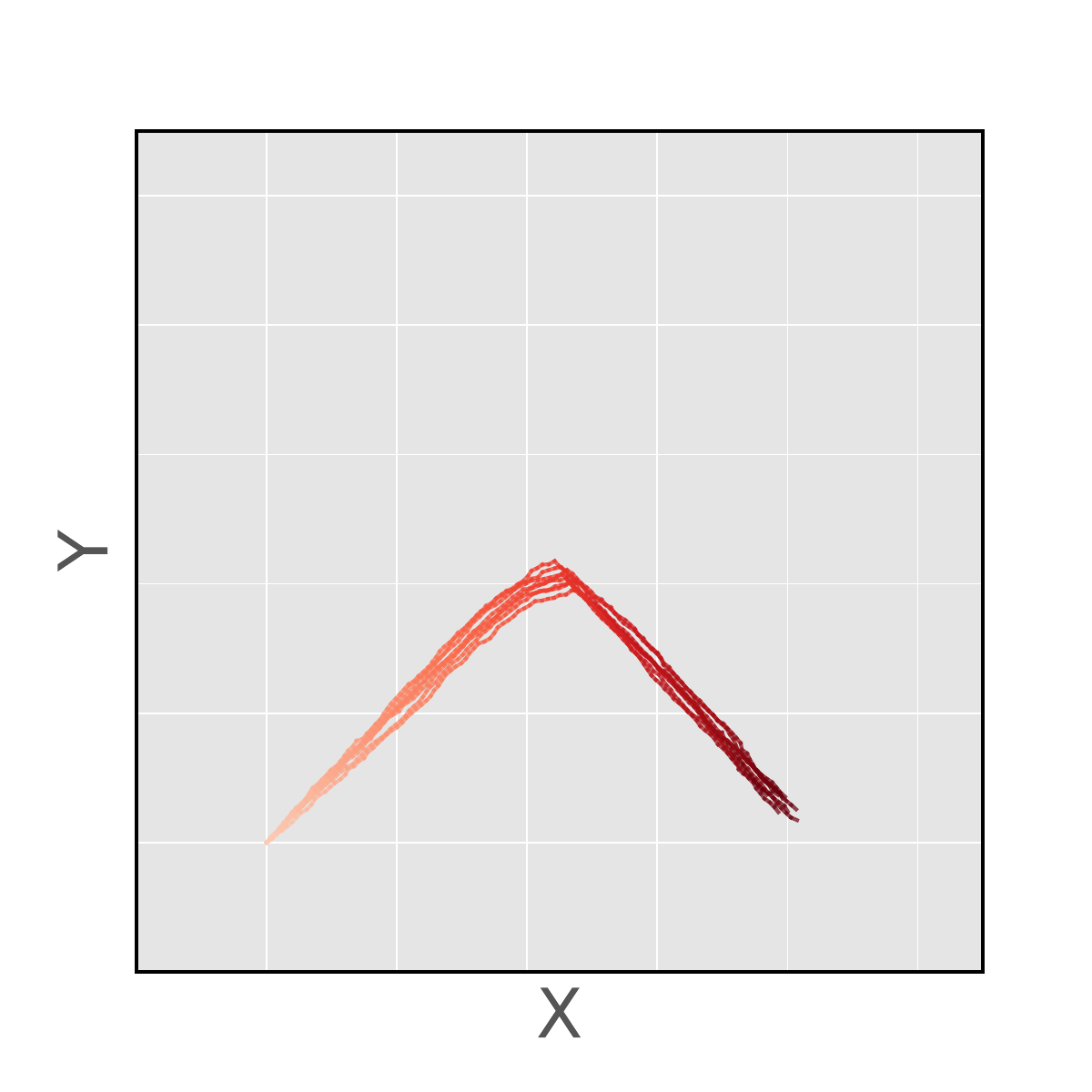} \\
         \raisebox{\height}{\parbox{0.17\textwidth}{\raggedleft \textbf{B: Conditional diffusion improves state generalization.}\par}} &
         \includegraphics[width=0.16\linewidth]{figures/introductory/stitching/training_full.pdf} & 
         \includegraphics[width=0.16\linewidth]{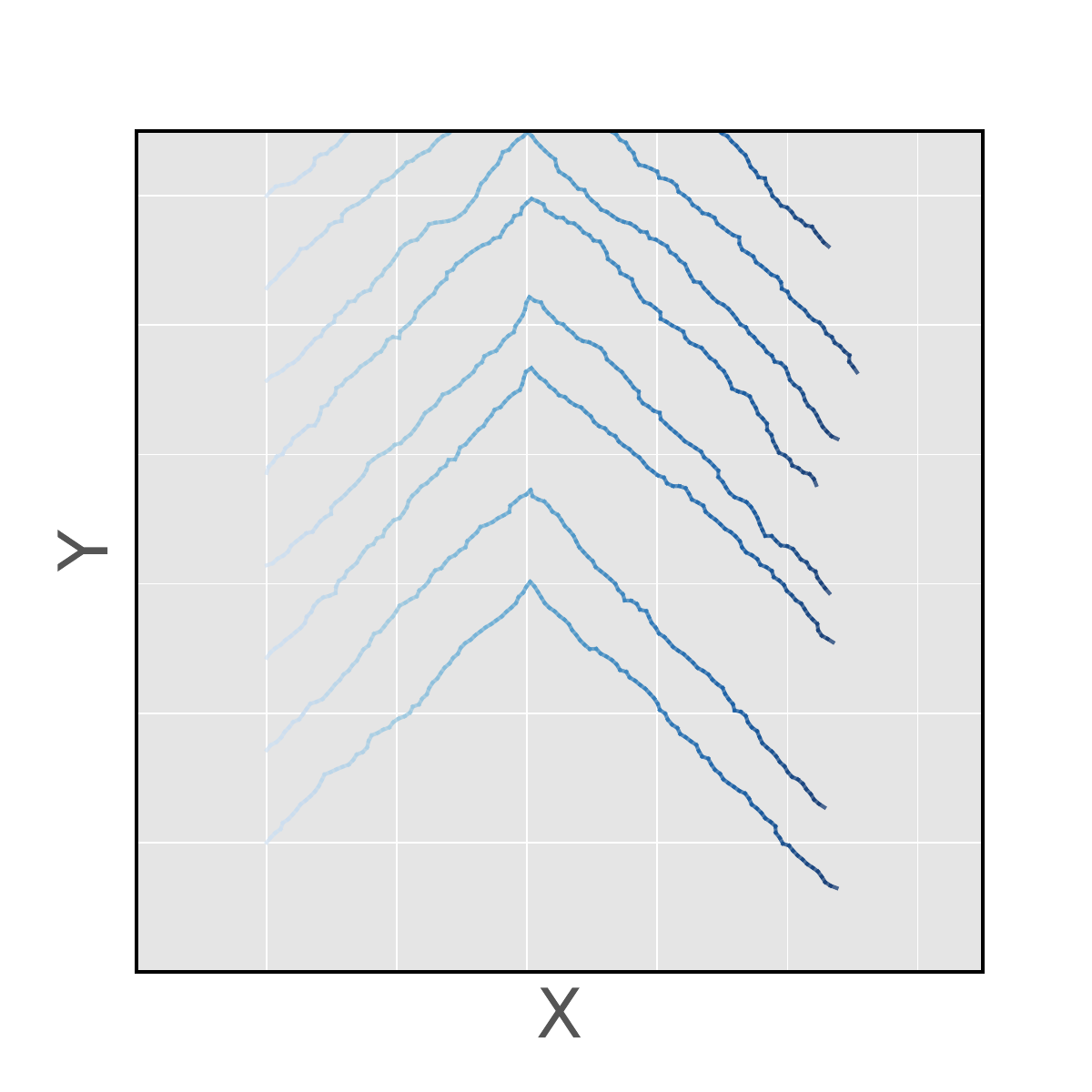}&
        \includegraphics[width=0.16\linewidth]{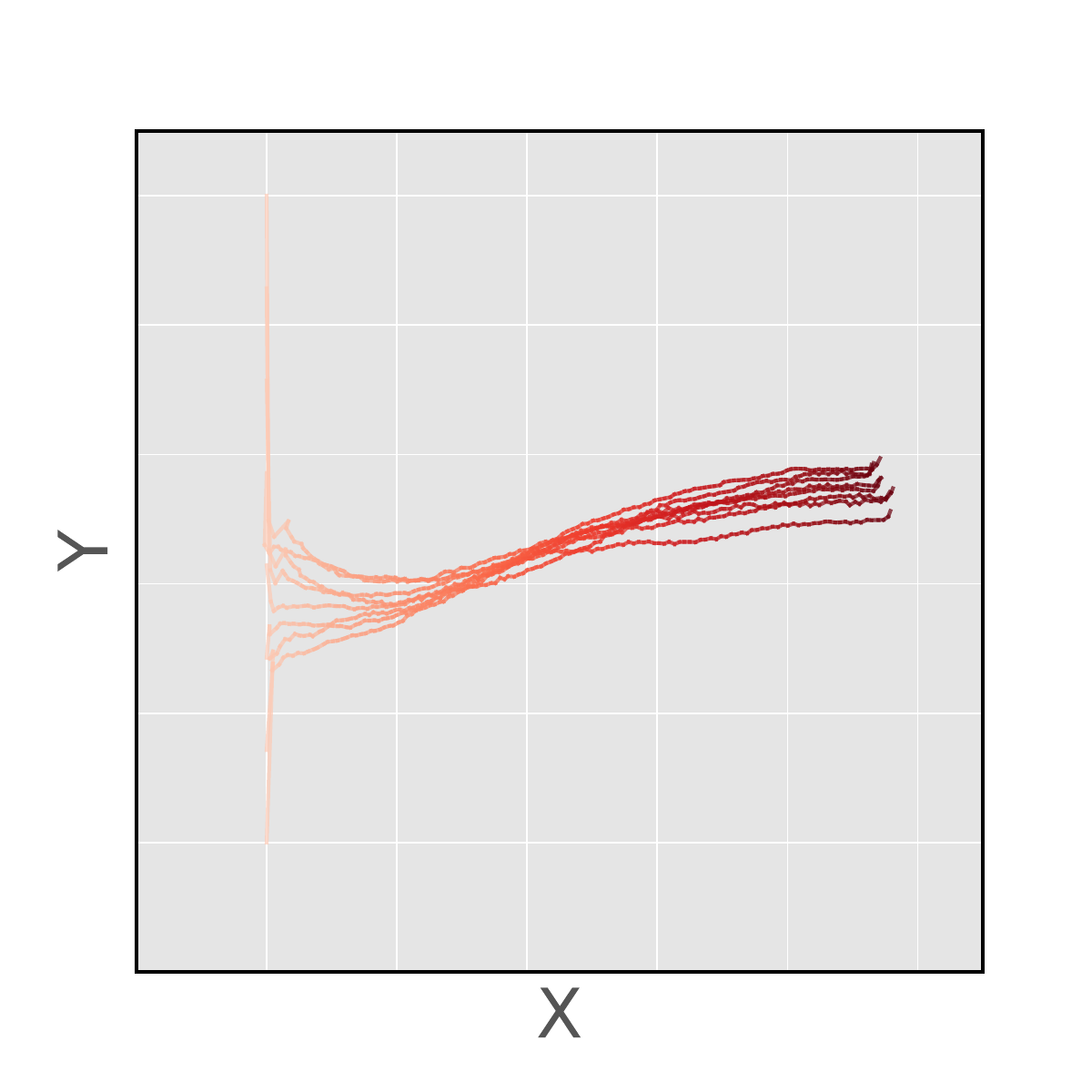}&   
        \includegraphics[width=0.16\linewidth]{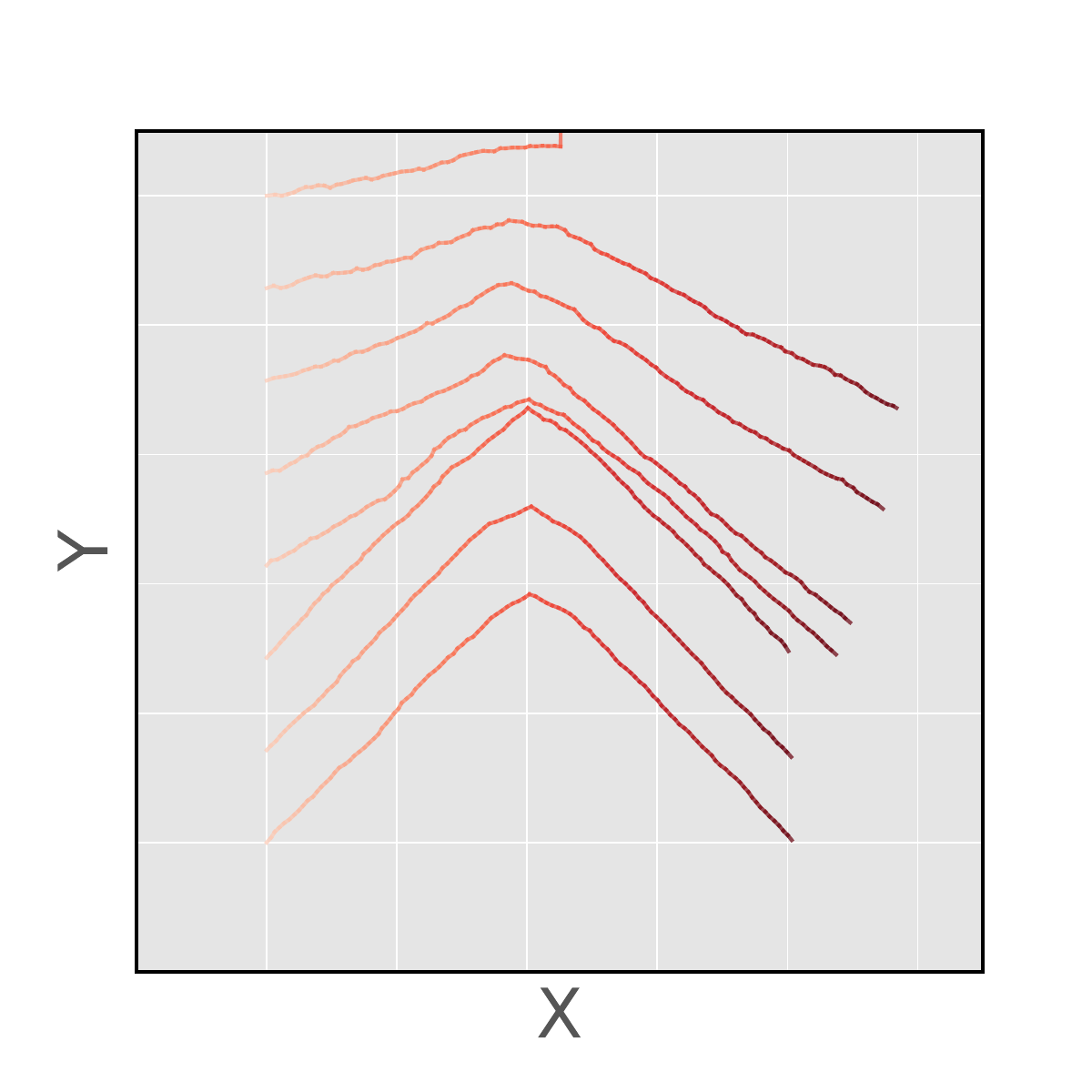} \\
        \raisebox{\height}{\parbox{0.17\textwidth}{\raggedleft \textbf{C: Behavior policy guidance addresses distribution shift.}\par}} &
        \includegraphics[width=0.16\linewidth]{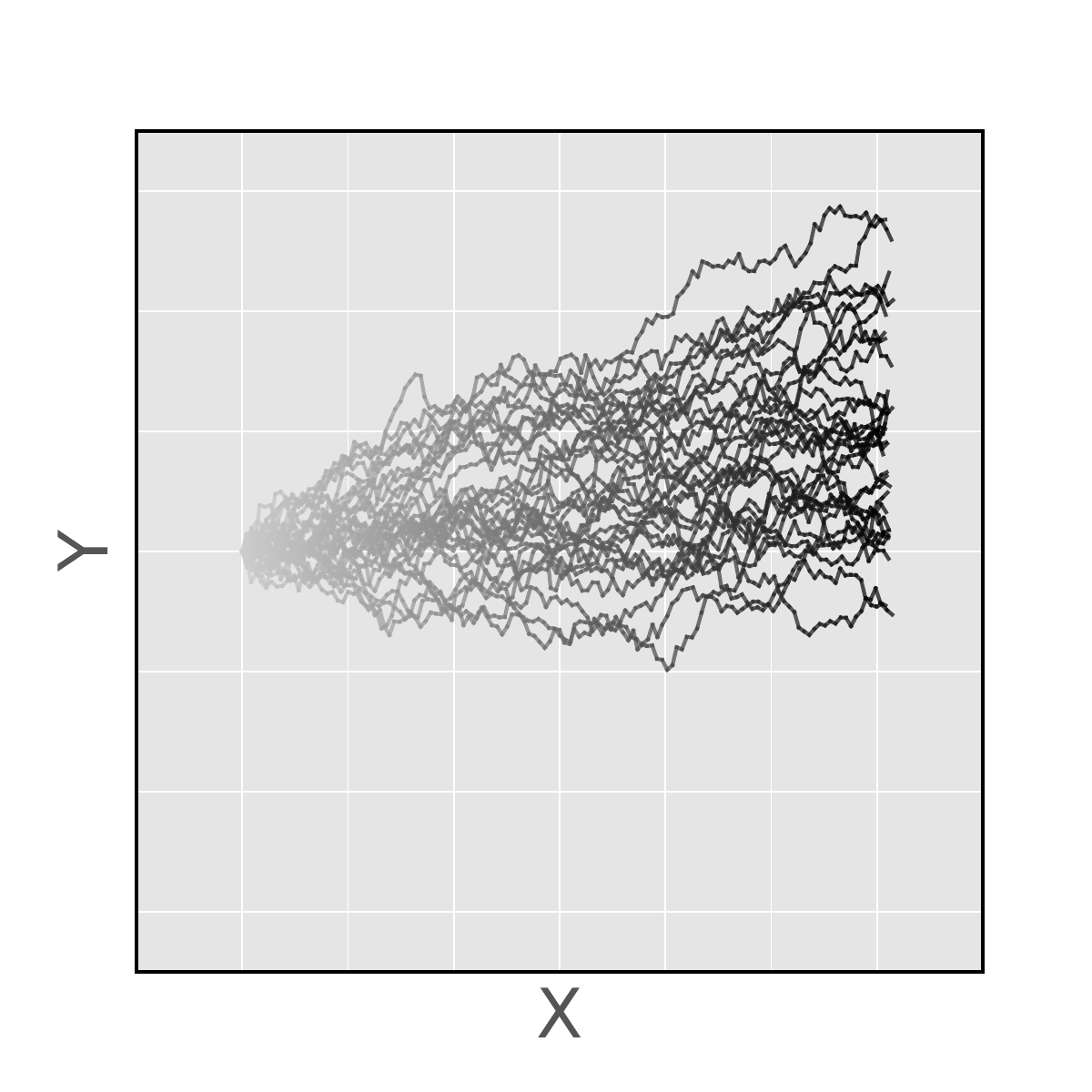} &
        \includegraphics[width=0.16\linewidth]{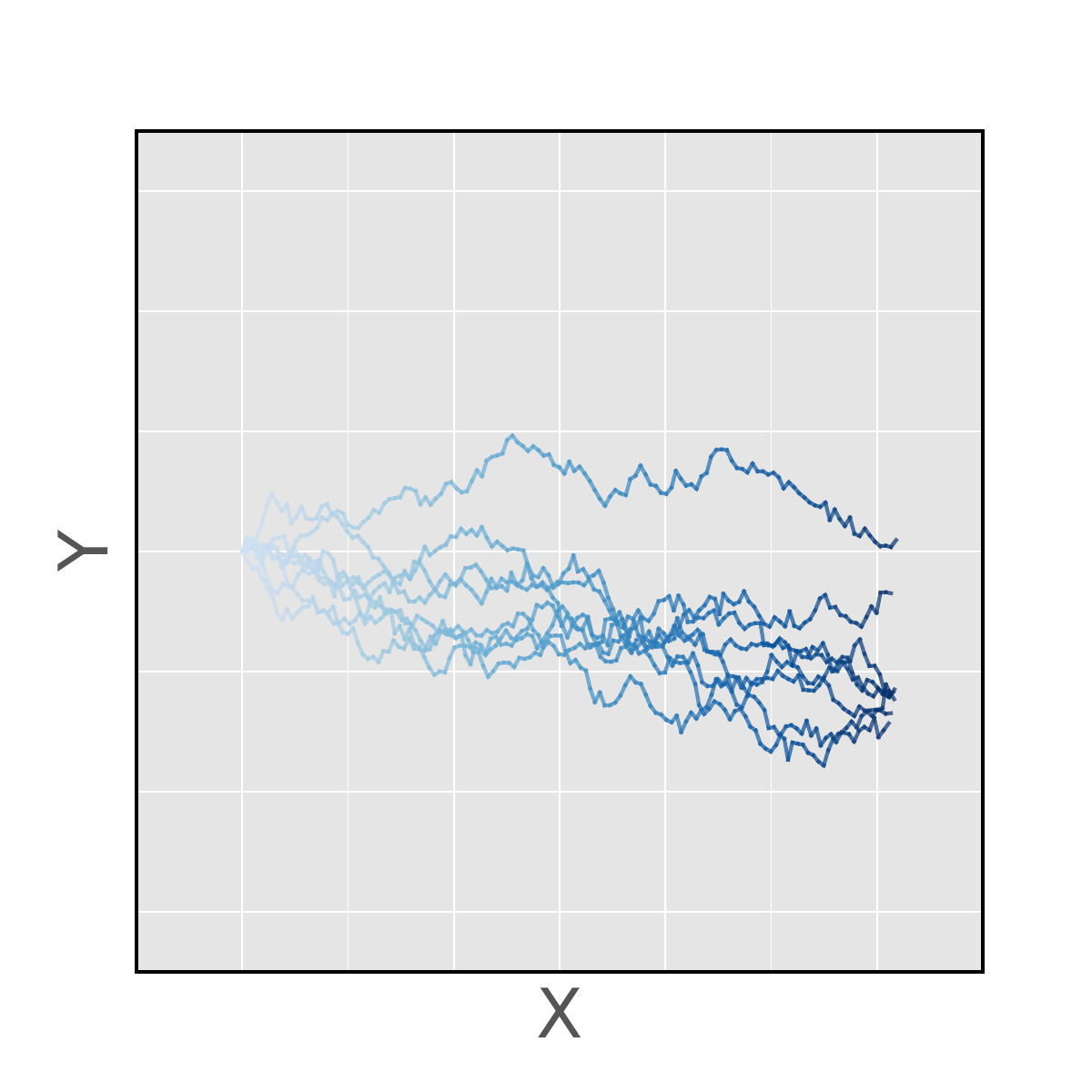}&
        \includegraphics[width=0.16\linewidth]{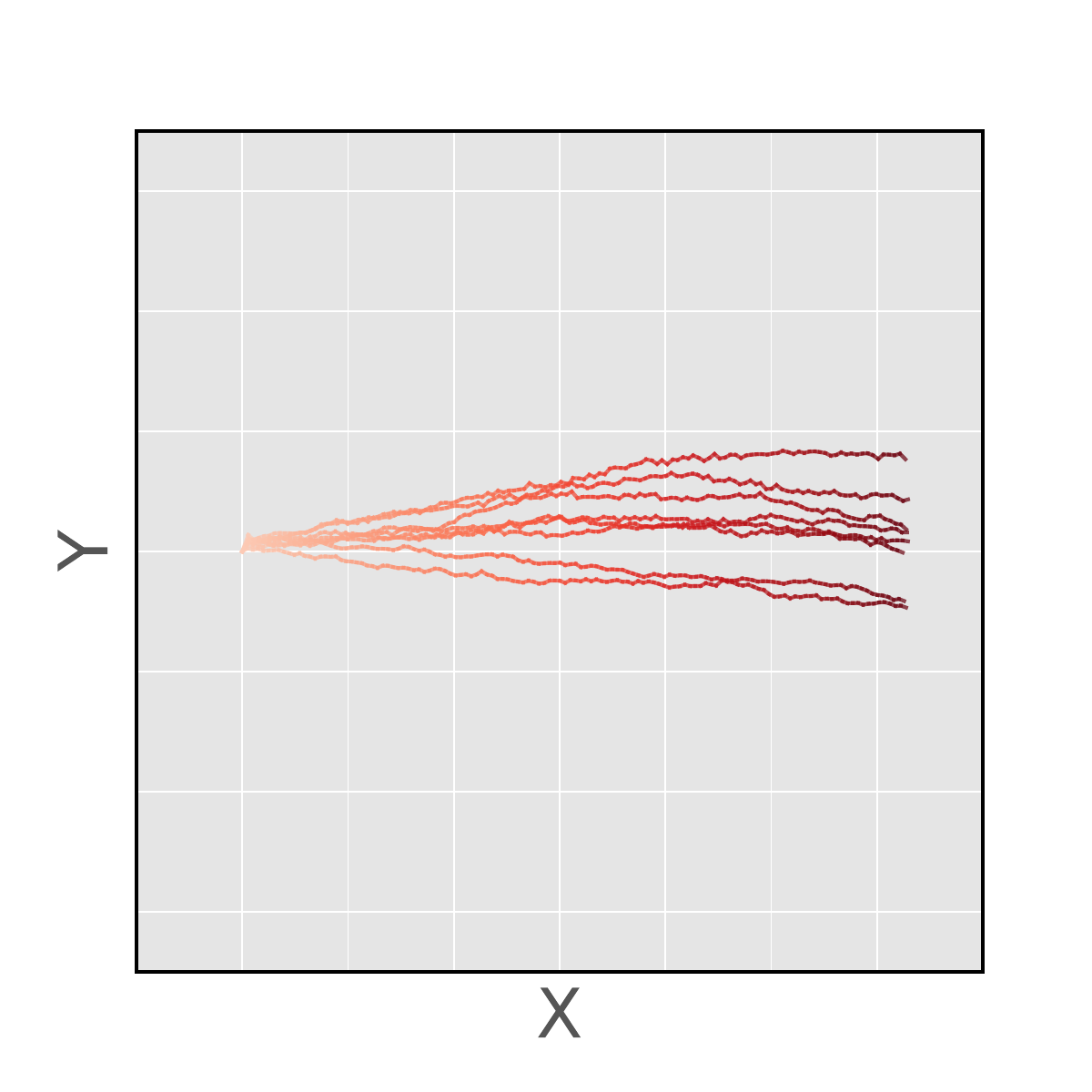}&     
        \includegraphics[width=0.16\linewidth]{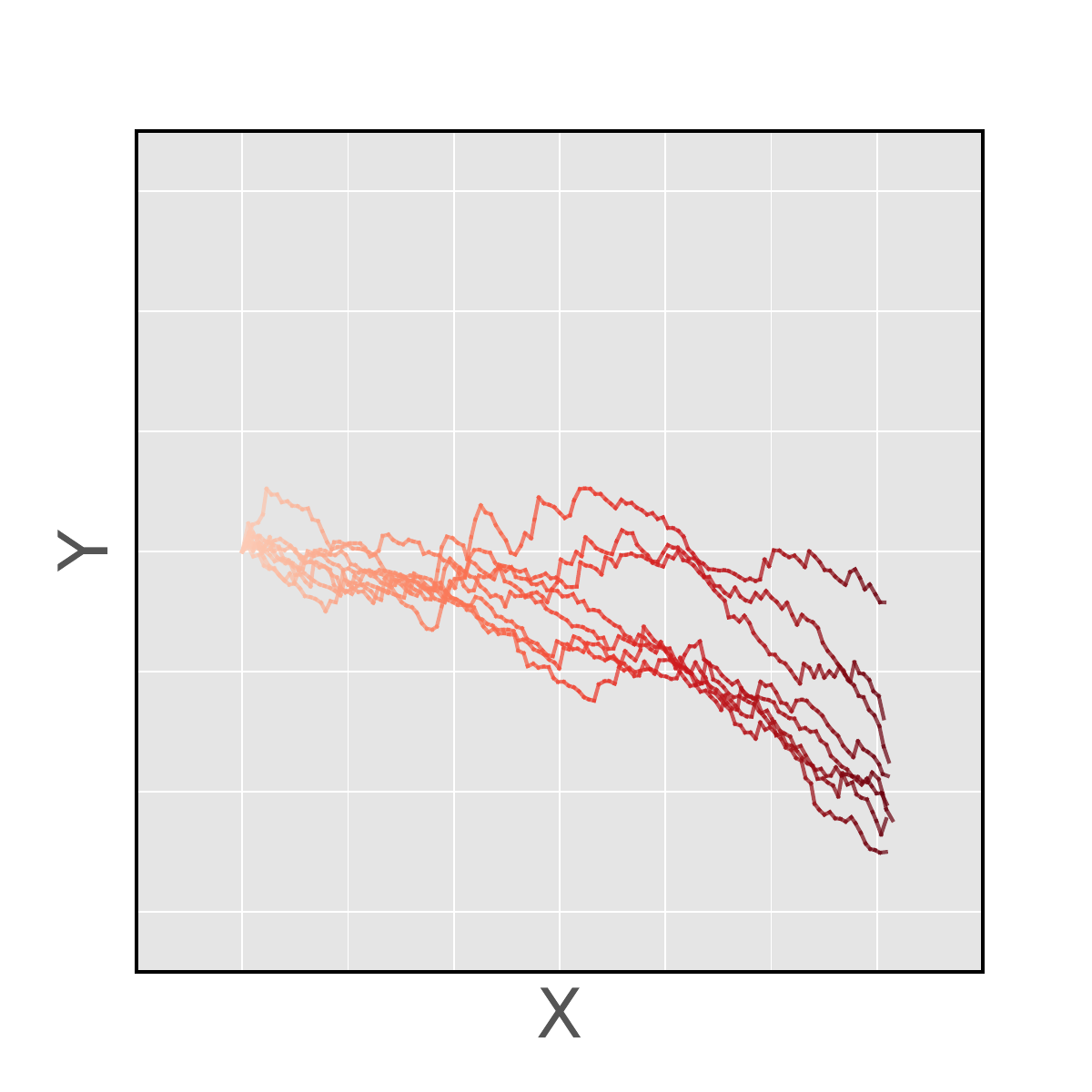}
    \end{tabularx}
    \caption{A 2D toy problem with Gaussian dynamics illustrates the advantages of \acronym. \textbf{Row A:} Behavior data is a mixture of two datasets generated by different behavior policies $\beta_1$ and $\beta_2$. The target policy in column II is a piecewise function following $\beta_1$ on the left half-space and $\beta_2$ on the right half-space. Policy-Guided Diffusion (PGD) in column III estimates the behavior trajectory distribution by training a diffusion model on the behavior dataset, then leverages guided diffusion to generate target policy trajectories \citep{jackson2024policy}. PGD is unable to stitch behavior trajectories correctly while \acronym~is. One explanation is that conditional diffusion provides a higher entropy sampling distribution than full-length diffusion, and thus ensures a broader coverage of the modes in the behavior data (see Section \ref{sec:conditional-diffusion} for details). \textbf{Row B:} The initial state is varied during trajectory generation. As shown in {column III}, PGD cannot generalize to new initial states. On the contrary, \acronym~is trained on sub-trajectories that start in arbitrary states in the behavior dataset as opposed to only initial states, which allows it to achieve better generalization. \textbf{Row C:} A scenario with severe distribution shift is presented, where the behavior and target policies move the agent in different directions. As shown in {columns III and IV}, the negative behavior guidance term is essential to prevent over-regularization, which can prevent guided diffusion from addressing distribution shift and lead to biased value estimation.}
    \label{tab:motivation}
\end{table*}

Given the slow and risky nature of online data collection, real-world applications of reinforcement learning often require {offline} data for policy learning and evaluation~\citep{levine2020offline,uehara2022review}. An important problem of working with offline data is \emph{off-policy evaluation} (OPE), which aims to evaluate the performance of target policies using offline data collected from other behavior policies. One practical advantage of OPE is that it saves the cost of evaluation on hardware in embodied applications in the real world~\citep{liu2024aligning}. However, a central challenge of OPE is the presence of \emph{distribution shift} induced by differences in behavior and target policies~\citep{levine2020offline,brandfonbrener2021offline}. This can lead to inaccurate estimates of policy values, making it difficult to trust or select between multiple target policies before they are deployed \citep{yang2022offline,liu2023offline}.

Numerous approaches have attempted to address the distribution shift in offline policy evaluation by reducing either the variance of the policy value or its bias, but they are typically ineffective in high-dimensional long-horizon problems. For example, Importance Sampling (IS) \citep{precup2000eligibility} estimates the value of the target policy by weighing the behavior policy rollouts according to the ratio of their likelihoods. However, it suffers from the so-called \emph{curse of horizon} where the variance of the estimate increases exponentially in the evaluation horizon \citep{liu2018breaking}. More recent model-free OPE estimators reduce or eliminate the explosion in variance by estimating the long-run state-action density ratio $d^\pi(s,a) / d^\beta(s,a)$ between the target and behavior policy \citep{liu2018breaking,mousaviblack2020,yang2020off}, yet they have demonstrated poor empirical performance on high-dimensional tasks where the behavior and target policies are different (i.e. the behavior policy is not a noisy version of the target policy) \citep{GooglePaper}. 

As an alternative approach, model-based OPE estimators typically learn an empirical autoregressive model of the environment and reward function from the behavior data, which is used to generate synthetic rollouts from the target policy for offline evaluation~\citep{jiang2016doubly,Uehara2021PessimisticMO,zhang2021autoregressive}. Some advantages of the model-based paradigm include sample efficiency \citep{li2024settling}, exploitation of prior knowledge about the dynamics \citep{fard2010pac}, and better generalization to unseen states \citep{yu2020mopo}. Although model-based OPE methods often scale well to high-dimensional short-horizon problems -- owing to the scalability of the deep model-based RL paradigm -- their robustness diminishes in long-horizon tasks due to the compounding of errors in the approximated dynamics model \citep{GooglePaper,janner2021sequence,janner2022planning}.

Driven by the recent successes of generative diffusion in RL~\citep{lu2023synthetic,zhu2023diffusion,Ajay2024cond,jackson2024policy,mao2024diffusiondice,PolyGrad}, we propose \emph{\textbf{S}ub-\textbf{T}rajectory \textbf{I}mportance-Weighted \textbf{T}rajectory \textbf{C}omposition for Long-\textbf{H}orizon \textbf{OPE}} for model-based off-policy evaluation in long-horizon high-dimensional problems. \acronym~first trains a diffusion model on behavior data, allowing it to generate dynamically feasible behavior trajectories~\citep{janner2022planning}. \acronym~differs from prior work by training the diffusion model on short sub-trajectories instead of full rollouts, where sub-trajectory generation is conditioned on the final state of the previous generated sub-trajectory. This enables accurate trajectory ``stitching" using short-horizon rollouts, while minimizing compounding error of full-trajectory rollouts, thus bridging the gap between model-based OPE and full-trajectory offline diffusion. 

\acronym~explicitly accounts for distribution shift in OPE by guiding the diffusion denoising process~\citep{dhariwal2021diffusion,janner2022planning} during inference. This can be achieved by selecting the guidance function to be the difference between the score functions of the target and behavior policies. A significant advantage of guided diffusion is that it eliminates the need to retrain the diffusion model for each new target policy. By pretraining the model on a variety of behavior datasets, generalization can be achieved during guided sampling to produce feasible trajectories under the target policy, leading to robust off-policy estimates for target policies that lack offline data. \acronym~contributes the following novel technical innovations that we consider critical to the successful and robust application of diffusion models for OPE:

\begin{itemize}
\item \textbf{Trajectory Stitching with Conditional Diffusion.} We propose a state-conditioned guided diffusion model for generating short sub-trajectories from the target policy (Figure \ref{fig:main_fig}). 
This {significantly improves the quality (Table \ref{tab:motivation}, row A) and generalization (Table \ref{tab:motivation}, row B) of trajectory generation in long-horizon tasks}. Theorem \ref{thm:main} also provides theoretical bounds on the bias and variance of our proposed approach.

\item  \textbf{Behavior Guidance.} We show that the negative score function of the behavior policy mitigates the diffusion model from collapsing to trajectories with large behavior likelihood, thus improving generalization out-of-data (Table \ref{tab:motivation}, row C). This negative score function naturally arises from viewing the likelihood ratio (i.e. in importance sampling) as the classification density in diffusion guidance \citep{dhariwal2021diffusion} -- a connection missed in prior work \citep{jackson2024policy}.

\item \textbf{Robustness Across Problem Difficulty.} Finally, we evaluate \acronym~on the OpenAI Gym control suite \citep{brockman2016openai} and the D4RL offline RL suite \citep{fu2020d4rl}, showing significant improvements compared to other recent OPE estimators across a variety of metrics (mean squared error, rank correlation and regret), problem dimension and evaluation horizon. To our knowledge, \acronym~is the first work to demonstrate robust off-policy evaluation in high-dimensional long-horizon tasks.
\end{itemize}

\section{Preliminaries}
\textbf{Markov Decision Processes.} We consider the standard \emph{Markov Decision Process} (MDP), which consists of a 6-tuple $\langle \mathcal{S}, \mathcal{A}, R, P, d_0, \gamma\rangle$ \citep{puterman2014markov} where: $\mathcal{S}$ is the continuous state space, $\mathcal{A}$ is the continuous action space, $R$ is the reward function, $P$ is the Markov transition probability distribution, $d_0$ is the initial state distribution and $\gamma \in [0, 1]$ is the discount factor. 

A policy $\pi : \mathcal{S} \times \mathcal{A} \to [0, \infty)$ observes the current state $s$, samples an action according to its conditional distribution $\pi(\cdot | s)$, and observes the immediate reward $R(s, a)$ and the next state $s' \sim P(\cdot | s, a)$. The interaction of a policy with an MDP generates a set of trajectories of states, actions, and rewards. The goal of reinforcement learning is to learn a policy $\pi$ that maximizes the expected return:
\begin{equation*}
    J(\pi) = \mathbb{E}_{\tau \sim p_\pi}\left[\sum_{t=0}^{T-1} \gamma^t R(s_t, a_t)\right], \qquad p_\pi(\tau) = d_0(s_0) \prod_{t=0}^{T-1} \pi(a_t | s_t)  P(s_{t+1} | s_t, a_t)
\end{equation*}
over length-$T$ trajectories $\tau = (s_0, a_0, s_1, a_1, \dots s_{T})$ induced by policy $\pi$. 

\textbf{Off-Policy Evaluation.} The goal of \emph{Off-Policy Evaluation} (OPE) is to estimate the expected return of some \emph{target policy} $\pi$ given only a data set of trajectories $\mathcal{D}_\beta$ from some \emph{behavior policy} $\beta$. To estimate $J(\pi)$, it is necessary to approximate the distribution over trajectories $p_\pi(\tau)$ induced by $\pi$ using only samples from the distribution $p_{\beta}(\tau)$. Hence, the phenomenon of \emph{distribution shift} arises whenever $\beta$ is sufficiently different from $\pi$, in which case $p_{\beta}(\tau)$ is not a suitable proxy for obtaining samples from $p_\pi(\tau)$ and corrections must be made to account for the distribution shift. 

\textbf{Denoising Diffusion Models.} 
\emph{Denoising Diffusion Probabilistic Models} (DDPMs) \citep{sohl2015deep, ho2020denoising} are a class of generative models to sample from a given distribution. Given a dataset $\{x_i\}_{i=1}^N$ and the $K$-step (forward) noise process $x_i^k = \sqrt{\alpha_k}x_{i}^{k-1} + \sqrt{1 - \alpha_k} \epsilon$, where $\epsilon \sim \mathcal{N}(0, I)$ is independent, DDPMs are trained to perform the $K$-step (backward) denoising process to recover $x_i$ from $x_i^K\sim \mathcal{N}(0, I)$. This is accomplished by running the forward process $x_i^0 \to x_i^k$ on the original $x_i$ and training the DDPM $\epsilon_\theta$ on the denoising process $x_{i}^{k-1} | x_i^{k} \sim \mathcal{N}(\mu(x_{i}^k, k), \sigma_k^2I)$ to predict the noise $\epsilon$ so that the distribution over denoised samples $x_i^0$ matches $x_i$. The standard reparameterization $\mu(x_i^k, k) = (x_i^k -\epsilon_\theta(x_i^k, k)(1 - \alpha_k)/\sigma_k)/\sqrt{\alpha_k}$ maps $x^k$ and the predicted noise $\epsilon_\theta(x_i^k, k)$ to $x^{k-1}$ to undo the noise accumulated during the forward process. The following loss function is typically used to train a diffusion model \citep{ho2020denoising}
\begin{equation}
\label{eq:diffusion-loss}
    \mathcal{L}(\theta) = \mathbb{E}_{k, x_i, \epsilon \sim \mathcal{N}(0, I)}\left[\| \epsilon - \epsilon_{\theta}(x_i^k, k) \|^2 \right].
\end{equation}

\textbf{Guided Diffusion.} It is possible to guide the sampling from a trained diffusion model to maximize some classifier $p(y|x)$ \citep{dhariwal2021diffusion}. A key observation of diffusion is that the backward diffusion process can be well approximated by a Gaussian when the noise is small, that is, $x^{k} | x^{k+1} \sim \mathcal{N}(\mu_k, \Sigma_k)$. Next, observe that $p(x^{k}|x^{k+1}, y) \propto p(x^{k} |x^{k+1})p(y|x^k)$ by Bayes' rule. Applying the first-order Taylor approximation $\log p(y|x^k) \approx \log p(y|\mu_k) + (x^k - \mu_k) g(\mu_k)$ at $\mu_k$, where $g(u) = \nabla_{x} \log p(y|x)|_{x=u}$, it can be shown that:
\begin{align}
\label{eq:diffusion-guidance}
    \log(p(x^k |x^{k+1})p(y|x^k)) 
    &\indent \propto -\frac{1}{2}(x^k - \mu_k - \Sigma_k g(\mu_k))^T\Sigma_k^{-1}(x^k - \mu_k - \Sigma_k g(\mu_k)) \nonumber  \\
    &\propto \log \mathcal{N}(x^k; \mu_k + \Sigma_k g(\mu_k), \Sigma_k).
\end{align}
In other words, it is possible to sample from the conditional (guided) distribution $p(x^{k}|x^{k+1}, y)$ by sampling from the original diffusion model with its mean shifted to $\mu_k + \Sigma_k g(\mu_k)$. Appendix~\ref{sec:app-example} provides a worked example illustrating guided diffusion for a mixture of Gaussians.


\section{Proposed Methodology}

The \emph{direct method} for off-policy evaluation \citep{farajtabar2018more} estimates the single-step autoregressive model $\hat{P}(s_t | s_{t-1}, a_{t-1})$ and the reward function $\hat{R}(s_t, a_t)$ from the behavior data. Then, it draws target policy trajectories $\tau \sim p_\pi(\tau)$ by forward sampling, that is, $s_0 \sim d_0, a_0 \sim \pi(\cdot | s_0), s_1 \sim \hat{P}(\cdot | s_0, a_0), \dots s_T \sim \hat{P}(\cdot | s_{T-1}, a_{T-1})$. However, even small errors in $\hat{P}$ can lead to significant bias in $J(\pi)$ due to the compounding of errors over long horizon $T$ \citep{jiang2015dependence,janner2021sequence}. \acronym~avoids the compounding problem by generating the partial trajectory in a single (backward diffusion) pass, leading to more accurate OPE estimates over a long horizon.

\begin{figure*}[!htb]
    \centering
    \includegraphics[width=\textwidth]{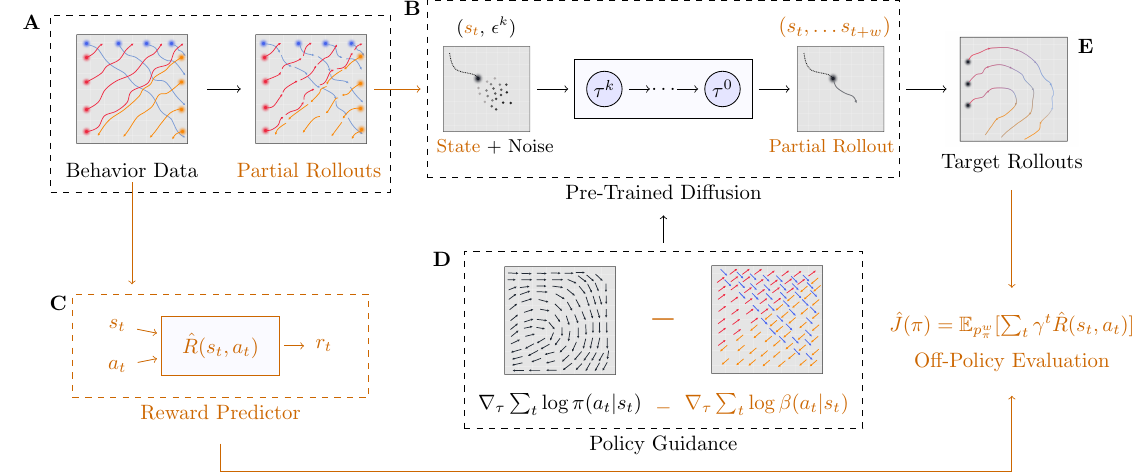}
    \caption{A conceptual illustration of \acronym, with novel contributions highlighted in \textcolor{darkorange}{orange}. \textbf{A:} Behavior data is sliced into partial trajectories of length $w$. \textbf{B:} The data is fed to a conditional diffusion model taking a $w$-length sequence of Gaussian noise $\epsilon$ and state $s_{t}$ as inputs, and applies the backward diffusion process to predict the behavior trajectory of length $w$ beginning in state $s_{t}$. \textbf{C:} To evaluate policies, \acronym~also trains a neural network on the behavior transitions to predict the immediate reward. \textbf{D:} It then applies guided diffusion on the pretrained diffusion model to generate a batch of partial target trajectories of length $w$, where the guidance function incorporates the score function of the target policy and the behavior policy. \textbf{E}: The guided partial trajectories are stitched end-to-end to produce full-length target trajectories. Finally, the guided trajectories are evaluated using the empirical reward function $\hat{R}(s,a)$, and averaged to estimate the value of the target policy.}
    \label{fig:main_fig}
\end{figure*}

\subsection{Guided Diffusion for Off-Policy Evaluation}

It is possible to approximate $p_\pi$ using guided diffusion by interpreting each data point $x_i$ as a full trajectory $\tau$. Given a behavior policy $\beta$ and corresponding length-$T$ trajectory distribution $p_\beta(\tau)$, the corresponding length-$T$ trajectory distribution of target policy $\pi$ can be written as:
\begin{align}
\label{eq:p-pi}
   p_\pi(\tau) 
   &= d_0(s_0)\prod_{t=0}^{T-1} \beta(a_t|s_t) P(s_{t+1} | s_{t}, a_{t}) \frac{\pi(a_t|s_t)}{\beta(a_t | s_t)} 
   = p_\beta(\tau) \prod_{t=0}^{T-1}\frac{\pi(a_t|s_t)}{\beta(a_t | s_t)},
\end{align}
which is the standard importance sampling correction \citep{precup2000eligibility}. We address the question of tractably learning $p_\beta(\tau)$ by training a diffusion model $\hat{p}_\beta(\tau)$ on the offline behavior data set $\mathcal{D}_\beta$ \citep{janner2022planning}, thus approximating $\hat{p}_\beta(\tau) \approx p_\beta(\tau)$. Specifically, the diffusion model learns to map a trajectory consisting of pure noise, $\tau^k = (s_0^k, a_0^k,  \dots s_{T}^k)$, to a noiseless behavior trajectory $\tau^0 = (s_0^0, a_0^0,  \dots s_{T}^0)$. 

A key observation is that we can bypass importance sampling in (\ref{eq:p-pi}) by guiding the generation process $\hat{p}_\beta(\tau)$ towards $p_\pi(\tau)$ using diffusion guidance (\ref{eq:diffusion-guidance}) \citep{janner2022planning}. Specifically, let $x^k = \tau^k$ denote a noisy behavior trajectory at step $k$ of the forward diffusion process, and let $y \in \lbrace 0, 1 \rbrace$ be a binary outcome with $p(y = 1 | \tau) \propto \prod_{t=0}^{T-1}\frac{\pi(a_t|s_t)}{\beta(a_t | s_t)}$. Intuitively, $y$ indicates whether the trajectory $\tau$ is generated by the target policy $\pi$ ($y = 1$) or the behavior policy $\beta$ ($y = 0$), and the likelihood ratio determines the odds that $y = 1$ given $\tau$. By (\ref{eq:p-pi}),
\begin{equation*}
    p_\pi(\tau)  \propto p_\beta(\tau) p(y = 1 | \tau),
\end{equation*}
and thus the backward diffusion process for generating target policy trajectories for OPE can be approximated with guidance (\ref{eq:diffusion-guidance}):
\begin{align}
\label{eq:dope-sampling-distribution}
    \log p_\pi(\tau^{k} | \tau^{k+1}) 
    &\propto \log( p_\beta(\tau^{k} | \tau^{k+1}) p(y = 1 | \tau^{k+1})) \nonumber\\
    &\indent\approx \log \mathcal{N}(\tau^{k+1}; \mu_k + \Sigma_k \nabla_\tau \log p(y = 1 | \tau)|_{\tau^{k+1}}, \Sigma_k),
\end{align}
where $p_\beta(\tau^{k} | \tau^{k+1}) = \mathcal{N}(\mu_k, \Sigma_k)$ is the backward diffusion process. Therefore, we can obtain feasible target policy trajectories using the guidance function:
\begin{align}
\label{eq:dope-guidance-unscaled}
    g(\tau) 
    &= \nabla_\tau \log p(y = 1 | \tau) 
    =  \nabla_\tau \sum_{t=0}^{T-1}\log \pi(a_t | s_t) - \nabla_\tau\sum_{t=0}^{T-1} \log \beta(a_t | s_t).
\end{align}
Given the approximate sampling distribution over the trajectories of the target policy described above, $\hat{p}_\pi(\tau^0) = \int \dots \int \mathcal{N}(\tau^K; 0, I) \prod_{k=1}^K p_\pi(\tau^{k-1} | \tau^k) \,\mathrm{d}\tau^K\,\dots \mathrm{d}\tau^1$, and an empirical reward function $\hat{R}(s,a)$, it is straightforward to estimate the expected return (or a statistic such as variance or quantile) given \emph{any} target policy, i.e. $\hat{J}(\pi)= \mathbb{E}_{\tau \sim \hat{p}_\pi}\left[\sum_t \gamma^t \hat{R}(s_t, a_t)\right] \approx J(\pi).$

\subsection{Negative Behavior Guidance}
\label{sec:negative-guidance}

 The target policy score function, $g_{simple}(\tau) =  \nabla_\tau \sum_{t=0}^{T-1}\log \pi(a_t | s_t)$ \citep{jackson2024policy}, provides a simple guidance function for OPE. However, it corresponds to a biased estimator of $p_\pi(\tau)$ in the context of (\ref{eq:p-pi}) and can generate trajectories that are unlikely under the target policy, as illustrated using the GaussianWorld domain in Table \ref{tab:motivation} (see Appendix \ref{app:gaussianworld} for details). The behavior policy $\beta$ returns a positive angle in each state and the target policy $\pi$ returns a negative angle, to test the performance of both guidance functions under distribution shift. Conclusions are summarized in row C of Table \ref{tab:motivation}. The omission of the negative guidance term results in a sampling distribution that collapses to a high-density region under $p_\beta(\tau)$, where $p_\pi(\tau)$ could be small. In other words, \textbf{behavior guidance prevents the guided sampling distribution $\hat{p}_\pi$ from becoming over-regularized.}

In our empirical evaluation, we employ the following generalization of (\ref{eq:dope-guidance-unscaled}) to allow fine-grained control over the relative importance of the target and behavior policy guidance
\begin{equation}
\label{eq:dope-guidance-scaled}
    g(\tau) = \alpha \nabla_\tau \sum_{t=0}^{T-1} \log \pi(a_t | s_t) - \lambda \nabla_\tau\sum_{t=0}^{T-1} \log \beta(a_t | s_t).
\end{equation}
Ignoring the normalizing constant which does not dependent on $\tau$, (\ref{eq:dope-guidance-scaled}) is equivalent to sampling from the following re-weighted trajectory distribution
\begin{equation}
    q_\pi(\tau) \propto p_\beta(\tau) \prod_{t=0}^{T-1} \frac{\pi(a_t | s_t)^\alpha}{\beta(a_t | s_t)^\lambda},
\end{equation}
which can be interpreted as a \emph{tempered posterior distribution} \citep{alquier2017concentration,cerou2024adaptive} over trajectories; the importance of the likelihood terms associated with $\pi$ and $\beta$ are controlled by the choice of $\alpha$ and $\lambda$, respectively. Note that the choice $\alpha = \lambda = 1$ reduces to the standard guidance function $g(\tau)$. This is not a good empirical choice because it can push the backward diffusion process too far from the behavior distribution, leading to infeasible trajectories or instability. Instead, typical choices satisfy $\lambda < \alpha$ (see Appendix \ref{sec:app-sensitivity-guidance} for an additional experiment confirming this). The choice $\alpha = 1, \lambda = 0$ reduces to $g_{simple}(\tau)$ and is unsuitable for OPE. 

\subsection{Sub-Trajectory Stitching with Conditional Diffusion}
\label{sec:conditional-diffusion}

Recent work has shown that full-length diffusion models do not provide sufficient compositionality for accurate long-horizon sequence generation \citep{chendiffusion}. In addition, full-length prediction requires the generation of sequences of length $T \cdot (\mathrm{dim}(\mathcal{A}) + \mathrm{dim}(\mathcal{S}))$; this may be infeasible or inefficient on resource-constrained systems, when $T$ is large or when $\mathcal{A}$ or $\mathcal{S}$ is high-dimensional. 

To tackle these limitations, \acronym~trains a conditional diffusion model to generate behavior sub-trajectories of length $w \ll T$. To allow for a more flexible composition of behavior trajectories during guidance, generation in \acronym~is performed in a semi-autoregressive manner from the diffusion model, which is conditioned on the last state of the previously generated sub-trajectory. 

Specifically, the conditional diffusion model in \acronym, denoted as $\epsilon_\theta(\tau_{t:t+w}^k,k | s_{t}^0)$, denoises a length-$w$ noisy sub-trajectory 
$
    \tau_{t:t+w}^k = (s_{t}^k, a_{t}^k, s_{t+1}^k, a_{t+1}^k, \dots s_{t + w - 1}^k, a_{t + w - 1}^k, s_{t+w}^k)
$
conditioned on the last state $s_t^0$ of the previously generated sub-trajectory $\tau_{t-w:t}^0$. Generalizing (\ref{eq:diffusion-loss}), the loss function of \acronym~is thus
\begin{align*}
    \mathcal{L}_{\acronym}(\theta) = \mathbb{E}_{k,t,\tau_{t:t+w}\sim\mathcal{D}_\beta,\epsilon\sim\mathcal{N}(0, I)}\left[\|\epsilon -  \epsilon_\theta(\tau_{t:t+w}^k,k|s_{t}^0)\|^2 \right].
\end{align*}

Next, writing ${p}_\beta(\tau_{t:t+w} | s_{t}^0)$ to denote the sampling distribution over fully denoised sub-trajectories $\tau_{t:t+w}^0$ conditioned on $s_{t}^0$, the sampling process (\ref{eq:p-pi}) of \acronym~can be written as:
\begin{align}
\label{eq:dope-p-pi}
    p_\pi^w(\tau) 
    = \prod_{t=0}^{ T/w   -1} \left( {p}_\beta(\tau_{wt:w(t+1)} | s_{wt}^0) \cdot \prod_{u=wt}^{w(t+1)-1} \frac{\pi(a_u|s_u)}{\beta(a_u|s_u)} \right),
\end{align}
where target trajectories are generated by guiding the conditional diffusion model (analogous to (\ref{eq:dope-sampling-distribution})) according to
\begin{align*}
    g(\tau_{wt:w(t+1)}) = \alpha \nabla_{\tau_{wt:w(t+1)}} \sum_{u=wt}^{w(t+1)-1} \log \pi(a_u | s_u) - \lambda \nabla_{\tau_{wt:w(t+1)}} \sum_{u=wt}^{w(t+1)-1} \log \beta(a_u | s_u).
\end{align*}
A complete algorithm description of \acronym~is provided in Appendix \ref{sec:app-pseudocode}.

To understand the intuition that the conditional diffusion model offers better compositionality than the full-horizon prediction, we decompose the behavior trajectory distribution as a mixture over the trajectories $\tau_j$ in $\mathcal{D}_\beta$:
\begin{align*}
    p_\beta(s_{t}, a_t, \dots s_{T-1} | s_0, a_0, \dots s_{t}) 
    \approx \sum_{\tau_j \in \mathcal{D}_\beta} p_\beta(s_{t}, a_t, \dots s_{T - 1} | s_{t}, \tau_j) p(\tau_j | s_0, a_0, \dots s_{t}).
\end{align*}
Meanwhile, the conditional diffusion model ignores the full history of past states, i.e.:
\begin{align*}
    p_\beta(s_{t}, a_t, \dots s_{T- 1} | s_0, a_0, \dots s_{t}) 
    \approx \sum_{\tau_j \in \mathcal{D}_\beta} p_\beta(s_{t}, a_t, \dots s_{T-1} | s_{t}, \tau_j) p(\tau_j | s_{t}).
\end{align*}
$p(\tau_j | s_{t})$ has higher entropy than $p(\tau_j | s_0, a_0, \dots s_{t})$ since it is conditioned on less information (see Appendix \ref{sec:app-proof-conditional} for a proof), and thus provides a broader coverage of the diverse modes in the behavior dataset. This improves the compositionality of guided long-horizon trajectory generation. Row A of Table \ref{tab:motivation} illustrates this claim empirically using the GaussianWorld problem. A further claim is that the \acronym~model can generalize better across initial states with low, or even zero, probability under $d_0$ (see row B of Table \ref{tab:motivation}). This occurs because $p_\beta^w$ is trained on sub-trajectories starting in arbitrary states in $\mathcal{D}_\beta$, as opposed to states sampled only from $d_0$. Therefore, \textbf{sliding windows strike an optimal balance between autoregressive methods and full-length trajectory diffusion, providing good compositionality while avoiding the error compounding in terms of $T$.}

\subsection{Theoretical Analysis}
\label{sec:theory}

We provide theoretical guarantees for our proposed \acronym~method by analyzing its bias and variance. The first prerequisite assumption is standard in OPE \citep{xie2019towards,liu2020} and limits the ratio $\pi / \beta$. 
\begin{assumption}
\label{a1}
    There is a constant $\kappa$ such that $\frac{\pi(a|s)}{\beta(a|s)} \leq \kappa$ for all $s\in\mathcal{S}, a \in \mathcal{A}$.
\end{assumption}
The second prerequisite assumption bounds the total variation between the learned length-$w$ trajectory distribution $\hat{p}_\beta^w$ and the true distribution $p_\beta^w$ under the behavior policy $\beta$.
\begin{assumption}
\label{a2}
    $TV(p_\beta^w, \hat{p}_\beta^w) \leq \delta_\beta$ for some constant $\delta_\beta$.
\end{assumption}

Our main result is a bound on the mean squared error of the \acronym~estimator. We defer the full proofs, technical lemmas, and definitions to Appendix~\ref{app:theory}.
\begin{theorem}
\label{thm:main}
Define $\hat{p}_\pi$ as the (length-$T$) trajectory distribution of the guided diffusion model, and $p_\pi$ as the true trajectory distribution under the target policy $\pi$. Under Assumptions \ref{a1} and \ref{a2}, the mean squared error (MSE) of the \acronym~return $\hat{J}$ satisfies:
\begin{equation*}
        \mathbb{E}_{\hat{p}_\pi}\left[(\hat{J} - J(\pi))^2 \right] 
        \leq 
        \underbrace{\left(\frac{2 B_w}{1 - \gamma^w} \kappa^w  \delta_\beta\right)^2}_{\mathrm{Bias}^2} 
        + \underbrace{10 \left( \frac{T}{w} \right)^2 B_w^2 \kappa^w \delta_\beta 
          + \frac{8 B_w^2}{1 - \gamma^{2w}}\kappa^w\delta_\beta  + \mathrm{Var}_{p_\pi}(J)}_{\mathrm{Variance}},
    \end{equation*}
where $B_w = \frac{1 - \gamma^w}{1 - \gamma} \sup_{s,a}|R(s,a)|$ is a bound on the maximum length-$w$ discounted return, and $J$ is the return under $p_\pi$.
\end{theorem}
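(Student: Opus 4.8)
The plan is to start from the exact bias--variance decomposition
\[
\mathbb{E}_{\hat{p}_\pi}[(\hat{J} - J(\pi))^2] = \underbrace{(\mathbb{E}_{\hat{p}_\pi}[\hat{J}] - J(\pi))^2}_{\mathrm{Bias}^2} + \underbrace{\mathrm{Var}_{\hat{p}_\pi}(\hat{J})}_{\mathrm{Variance}},
\]
and then to control each piece by comparing the stitched guided law $\hat{p}_\pi$ against the true target law $p_\pi$. Throughout I would keep the windowed decomposition of the discounted return, $J(\tau) = \sum_{m=0}^{T/w-1}\gamma^{wm} R^{(m)}(\tau)$, where $R^{(m)}(\tau) = \sum_{u=0}^{w-1}\gamma^u R(s_{wm+u},a_{wm+u})$ is the return of the $m$-th window and satisfies $|R^{(m)}| \le B_w$. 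The reason to keep the windows explicit is that both $p_\pi$ and $\hat{p}_\pi$ factor over windows (eq.~\ref{eq:dope-p-pi}), with each window reweighted by its \emph{local} ratio $W_m = \prod_{u}\pi(a_{wm+u}\mid s_{wm+u})/\beta(a_{wm+u}\mid s_{wm+u}) \le \kappa^w$ by Assumption~\ref{a1}, while Assumption~\ref{a2} controls the learned behavior factors in total variation by $\delta_\beta$.

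The crucial lemma I would establish first is a \emph{localization} estimate: for any function $f$ depending only on the variables inside a single window $m$ with $|f|\le B$,
\[
\bigl|\mathbb{E}_{\hat{p}_\pi}[f] - \mathbb{E}_{p_\pi}[f]\bigr| = \Bigl|\int f(\tau)\,W(\tau)\,[\hat{p}_\beta^w(\tau) - p_\beta^w(\tau)]\,\mathrm{d}\tau\Bigr| \le 2 B\,\kappa^w \delta_\beta .
\]
The point is that the global weight $W = \prod_{m'} W_{m'}$ does \emph{not} cost $\kappa^T$ here: because $\pi$ and $\beta$ share the transition kernel $P$, each windowed ratio obeys the change-of-measure identity $\mathbb{E}_{p_\beta(\cdot\mid s)}[W_{m'}] = 1$, so when $f$ touches only window $m$ the weights of all other windows integrate to one and only the single surviving factor $W_m \le \kappa^w$ remains, with the residual discrepancy between $\hat{p}_\beta^w$ and $p_\beta^w$ absorbed into $\delta_\beta$ via Assumption~\ref{a2}. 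Applying this with $f = R^{(m)}$ and summing over windows using $\sum_{m\ge 0}\gamma^{wm} = 1/(1-\gamma^w)$ yields $|\mathrm{Bias}| \le \frac{2B_w}{1-\gamma^w}\kappa^w\delta_\beta$, which is exactly the stated bias term after squaring.

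For the variance I would write $\mathrm{Var}_{\hat{p}_\pi}(\hat{J}) = \mathrm{Var}_{p_\pi}(J) + (\mathbb{E}_{\hat{p}_\pi}[\hat{J}^2] - \mathbb{E}_{p_\pi}[J^2]) - (\mathbb{E}_{\hat{p}_\pi}[\hat{J}]^2 - \mathbb{E}_{p_\pi}[J]^2)$, isolating the irreducible $\mathrm{Var}_{p_\pi}(J)$ that appears in the bound. The last parenthesis is controlled by the bias estimate together with $|\mathbb{E}[\hat{J}]|,|\mathbb{E}[J]| \le B_w/(1-\gamma^w)$. The second-moment gap is handled by expanding $\hat{J}^2 = \sum_{m,m'}\gamma^{w(m+m')}R^{(m)}R^{(m')}$ and splitting into diagonal and off-diagonal parts. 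The diagonal terms ($m=m'$) involve the single-window function $(R^{(m)})^2 \le B_w^2$, so the localization lemma applies directly and the discounted sum $\sum_m \gamma^{2wm} = 1/(1-\gamma^{2w})$ produces the $\frac{B_w^2}{1-\gamma^{2w}}\kappa^w\delta_\beta$ contribution (up to the stated constant $8$). The off-diagonal terms force the cruder estimate: the integrand $R^{(m)}R^{(m')}$ couples two distinct windows, so single-window localization no longer isolates one clean $\kappa^w$ factor; bounding each such term by $2B_w^2\kappa^w\delta_\beta$ and counting the at most $(T/w)^2$ pairs (discarding the discount) gives the $(T/w)^2 B_w^2 \kappa^w\delta_\beta$ contribution, with numerical slack absorbed into the constant $10$.

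The main obstacle I anticipate is precisely this off-diagonal bookkeeping. Because consecutive windows are stitched through the shared boundary state $s_{wm}$, the windowed returns $R^{(m)}$ and $R^{(m')}$ are statistically dependent under both $p_\pi$ and $\hat{p}_\pi$, and the error between the two laws propagates across the intermediate windows. Making the localization argument rigorous there requires a telescoping (hybrid) decomposition of $\hat{p}_\beta^w - p_\beta^w$ into per-window differences and careful tracking of which importance-weight factors integrate to one and which survive, while also verifying that the learned per-window normalizers $\hat{Z}_m = \mathbb{E}_{\hat{p}_\beta(\cdot\mid s)}[W_m]$ stay uniformly close to one so that the guided law is a genuine (approximately normalized) reweighting. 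Once this accounting is in place, collecting the diagonal, off-diagonal, and squared-mean contributions completes the variance bound and hence the theorem.
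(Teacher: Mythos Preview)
Your bias argument is essentially the paper's: both reduce to the per-window estimate $|\mathbb{E}_{\hat p_\pi}[Y_m]-\mathbb{E}_{p_\pi}[Y_m]|\le 2B_w\kappa^w\delta_\beta$ and sum $\sum_m\gamma^{wm}\le 1/(1-\gamma^w)$. The paper phrases this via a total-variation lemma ($\mathrm{TV}(p_\pi^{(w)},\hat p_\pi^{(w)})\le\kappa^w\delta_\beta$) rather than your importance-weight localization, but the content is the same.

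The variance analysis is where the routes diverge. You expand $\hat J^2$ directly and split into diagonal and off-diagonal windows. The paper instead conditions on the \emph{boundary state chain} $X=(X_0,\dots,X_N)$ and applies the law of total variance,
\[
\mathrm{Var}_{\hat p_\pi}(\hat J)=\mathbb{E}_{\hat p_\pi}\bigl[\mathrm{Var}_{\hat p_\pi}(\hat J\mid X)\bigr]+\mathrm{Var}_{\hat p_\pi}\bigl(\mathbb{E}_{\hat p_\pi}[\hat J\mid X]\bigr).
\]
The key structural lemma, proved by d-separation in the stitched generative graph, is that the per-chunk returns $Y_m$ are \emph{conditionally independent given $X$}. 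This kills all cross terms in the first piece, so $\mathrm{Var}(\hat J\mid X)=\sum_m\gamma^{2wm}\mathrm{Var}(Y_m\mid X_m)$ and the $\tfrac{8B_w^2}{1-\gamma^{2w}}\kappa^w\delta_\beta$ term falls out from per-chunk TV bounds alone. The second piece becomes the variance of $\sum_m g_m(X_m)$ with $g_m(x)=\mathbb{E}[Y_m\mid X_m=x]$, a sum of \emph{single-argument} functions each bounded by $B_w$; counting its $N^2$ second-moment terms and applying the same per-chunk TV bound gives the $10(T/w)^2B_w^2\kappa^w\delta_\beta$ contribution.

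What this buys over your approach is precisely the obstacle you flag: you never justify why a two-window product $R^{(m)}R^{(m')}$ incurs only a single $\kappa^w$ rather than $\kappa^{2w}$, and your localization lemma as stated applies only to single-window observables. The paper's conditioning trick avoids two-window observables entirely---every quantity that meets a TV bound is either a function of one chunk $S_m$ (first piece) or one boundary state $X_m$ (second piece), so the single $\kappa^w$ factor is automatic. Your hybrid/telescoping plan could be made to work, but the conditional-independence observation is the cleaner missing idea.
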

\paragraph{Remarks.} Theorem \ref{thm:main} resembles the $O(\exp(c T))$ bound of IS-based methods \cite{liu2018breaking,liu2020}, but is expressed in terms of $w$ rather than $T$ (with a more favorable $O(T^2)$ dependence). Since $w$ is a fixed hyper-parameter typically chosen to be much smaller than $T$ in practice, \textbf{\acronym~provides an exponential reduction in MSE versus both importance sampling and length-$T$ diffusion!} In Section \ref{sec:experiments}, we validate this claim further by showing that \acronym~outperforms full trajectory diffusion (PGD) on most benchmarks (see Appendix \ref{sec:app-sensitivity-w} for an additional experiment confirming small $w > 1$ is ideal in practice). The MSE decreases as the error in the learned behavior model $\delta_\beta$ decreases. In practice, $\delta_\beta$ is easier to estimate and control than the error of the target density $p_\pi^w$. It is also important to note that the variance cannot be reduced below $\mathrm{Var}_{p_\pi}(J)$, the intrinsic variance of the environment and the target policy.


\section{Empirical Evaluation}
\label{sec:experiments}

Our empirical evaluation aims to answer the following research questions:
\begin{enumerate}
    \item Does the combination of conditional diffusion and negative guidance (as hypothesized in Table \ref{tab:motivation}) translate to robust OPE performance on standard benchmarks?
    \item Is \acronym~robust across problem size (e.g., state/action dimension, horizon)?
    \item Is \acronym~robust across different levels of optimality of the target policy and the classes of policies?
\end{enumerate}

\subsection{Experiment Details}

\begin{figure*}
    \centering
    \includegraphics[width=\linewidth]{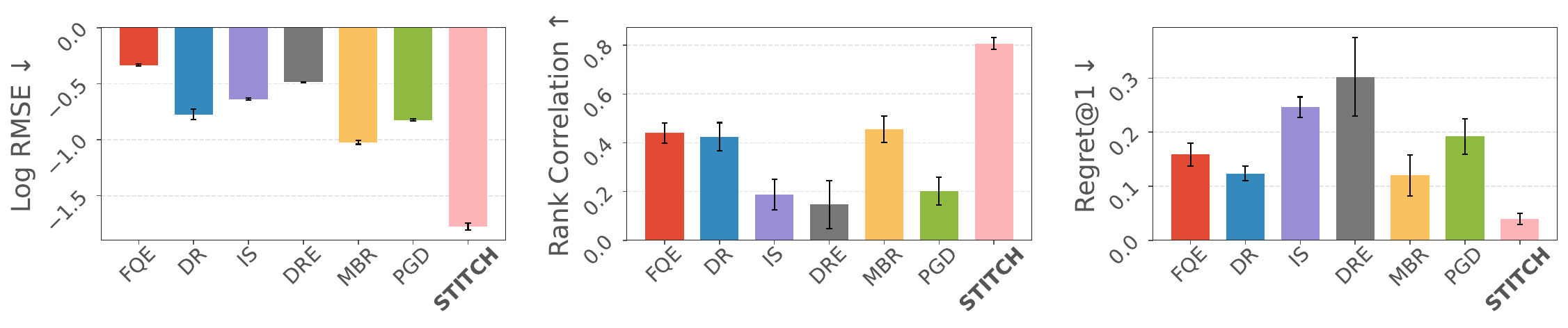}
    \caption{Mean overall performance of all baselines, averaged across environments. Error bars represent +/- one standard error.}
    \label{fig:bar-plots-main}
\end{figure*}

\begin{table*}[t]
\centering
\resizebox{\textwidth}{!}{
\begin{tabular}{@{}l@{\hspace{1em}}lccccccc@{}}
\toprule
& & FQE & DR & IS & DRE & MB & PGD & \textbf{Ours} \\
\midrule
\multirow{5}{*}{\rotatebox[origin=c]{90}{Log RMSE ↓}} 
& Hopper      & -0.42 ± 0.03 & -0.57 ± 0.02 & -0.48 ± 0.01 & -0.42 ± 0.00 & -1.70 ± 0.04 & -1.22 ± 0.02 & \bfseries -2.33 ± 0.02 \\
& Walker2d    & -0.48 ± 0.01 & -1.25 ± 0.08 & -0.71 ± 0.01 & -0.45 ± 0.00 & -0.88 ± 0.01 & -0.32 ± 0.01 & \bfseries -1.33 ± 0.01 \\
& HalfCheetah & -0.05 ± 0.00 &  0.01 ± 0.01 & -0.84 ± 0.02 & -1.19 ± 0.00 & -0.37 ± 0.00 & \bfseries -1.47 ± 0.00 & -0.85 ± 0.01 \\
& Pendulum    & -0.58 ± 0.00 & -1.02 ± 0.04 & -0.15 ± 0.00 & -0.58 ± 0.00 & -0.43 ± 0.01 & -0.91 ± 0.01 & \bfseries -2.34 ± 0.07 \\
& Acrobot     & -0.14 ± 0.00 & -0.49 ± 0.06 & -1.00 ± 0.01 & 0.20 ± 0.01 & -1.54 ± 0.02 & -0.13 ± 0.01 & \bfseries -2.02 ± 0.05 \\
\midrule[\heavyrulewidth]
\multirow{5}{*}{\rotatebox[origin=c]{90}{Rank Corr. ↑}}
& Hopper      &  0.17 ± 0.05 &  0.69 ± 0.06 & -0.06 ± 0.13 & -0.09 ± 0.14 &  0.52 ± 0.03 &  0.36 ± 0.09 & \bfseries  0.76 ± 0.02 \\
& Walker2d    &  0.41 ± 0.05 &  0.50 ± 0.02 &  0.51 ± 0.11 & 0.42 ± 0.05 & \bfseries 0.65 ± 0.04 & -0.07 ± 0.10 &  0.63 ± 0.03 \\
& HalfCheetah & -0.03 ± 0.06 & -0.48 ± 0.07 &  0.57 ± 0.06 &  0.80 ± 0.02 &  0.32 ± 0.03 &  0.50 ± 0.00 &  \bfseries 0.87 ± 0.01 \\
& Pendulum    &  0.89 ± 0.03 &  0.72 ± 0.07 & -0.60 ± 0.00 & -0.40 ± 0.15 &  0.84 ± 0.06 &  0.54 ± 0.02 & \bfseries  0.96 ± 0.02 \\
& Acrobot     &  0.75 ± 0.02 &  0.63 ± 0.08 &  0.52 ± 0.01 & 0.01 ± 0.12 &  0.53 ± 0.11 &  0.43 ± 0.14 & \bfseries  0.82 ± 0.04 \\
\midrule[\heavyrulewidth]
\multirow{5}{*}{\rotatebox[origin=c]{90}{Regret@1 ↓}}
& Hopper      & 0.13 ± 0.03 & 0.05 ± 0.02 & 0.13 ± 0.02 & 0.27 ± 0.17 & 0.04 ± 0.03 & \bfseries 0.04 ± 0.01 & 0.11 ± 0.04 \\
& Walker2d    & 0.23 ± 0.04 & 0.12 ± 0.00 & 0.09 ± 0.06 & 0.11 ± 0.00 & 0.05 ± 0.04 & 0.32 ± 0.16 & \bfseries <0.01 ± 0.00 \\
& HalfCheetah & 0.36 ± 0.00 & 0.37 ± 0.00 & 0.03 ± 0.01 & \bfseries <0.01 ± 0.00 & 0.32 ± 0.03 & 0.10 ± 0.00 & 0.08 ± 0.01 \\
& Pendulum    & 0.03 ± 0.03 & 0.08 ± 0.03 & 0.98 ± 0.00 & 0.85 ± 0.13 & 0.07 ± 0.03 & 0.13 ± 0.00 & \bfseries <0.01 ± 0.01 \\
& Acrobot     & 0.04 ± 0.01 & \bfseries <0.01 ± 0.00 & \bfseries <0.01 ± 0.00 & 0.28 ±0.06 & 0.10 ± 0.06 & 0.22 ± 0.06 & \bfseries0.01 ± 0.01 \\
\bottomrule
\end{tabular}}
\caption{Comparison of OPE methods across environments.  
Error bars represent ± one standard error across 5 seeds; any regret shown as <0.01 is nonzero but rounds to zero at two decimals.}
\label{tab:results}
\end{table*}

\paragraph{Domains} We evaluate the performance of \acronym~in high-dimensional long-horizon tasks using the standard D4RL benchmark \citep{fu2020d4rl} and their respective benchmark policies \citep{GooglePaper}. Specifically, we use the \texttt{halfcheetah-medium}, \texttt{hopper-medium} and \texttt{walker2d-medium} behavior datasets. Each evaluation consists of 10 target policies $\pi_1, \pi_2, \dots \pi_{10}$ trained at varying levels of ability \citep{GooglePaper}. We also carry out similar experiments using classical control tasks (Pendulum and Acrobot) from OpenAI Gym \citep{brockman2016openai}, to evaluate the competitiveness of \acronym~on standard benchmarks on which other baselines have been extensively evaluated. For this set of environments, we obtain the target policies by running the twin-delayed DDPG algorithm \citep{dankwa2019twin} (see Appendix \ref{sec:app-policies} for details). We set the trajectory length to $T = 768$ for all D4RL problems, $T = 256$ for Acrobot, and $T = 196$ for Pendulum. We also use $\gamma = 0.99$ in all experiments. The domain details are provided in Appendix \ref{sec:app-domains}, and the training details of \acronym~are provided in Appendix \ref{app:stitch-hparams}.

\paragraph{Baselines} We include the following model-free estimators: \textbf{Fitted Q-Evaluation (FQE)} \citep{le2019batch}, \textbf{Doubly-Robust OPE (DR)} \citep{thomas2016data}, \textbf{Importance Sampling (IS)} \citep{precup2000eligibility}, and \textbf{Density Ratio Estimation (DRE)} \citep{mousaviblack2020}. We also include the following model-based estimators: \textbf{Model-Based (MB)} \citep{jiang2016doubly,voloshin1empirical}, and \textbf{Policy-Guided Diffusion (PGD)} \citep{jackson2024policy}. The implementation details are provided in Appendix \ref{sec:app-baselines}.


\paragraph{Metrics} Each baseline method is evaluated on each pair of behavior dataset and target policy for 5 random seeds. We also generate ground-truth estimates of each target policy value by running each policy in the environment. We evaluate the performance of each baseline using the \textbf{Log Root Mean Squared Error (LogRMSE)}, the \textbf{Spearman Correlation}, and the \textbf{Regret@1} calculated as the difference in return between the best policy selected using the baseline policy value estimates and the actual best policy. Furthermore, to compare metrics consistently across tasks, we normalize the returns following \cite{GooglePaper}. Appendix \ref{sec:app-metrics} contains the technical details for metric calculation.

\subsection{Discussion}
Table \ref{tab:results} summarizes the performance of each method per domain, while Figure \ref{fig:bar-plots-main} summarizes the aggregated performance averaged across all domains. \acronym~outperforms all baselines in 11 out of 15 instances (shown in bold), with general agreement among the different metrics. \acronym~soundly outperforms both single-step (MB) and full-trajectory (PGD) model-based methods. This reaffirms our argument in Section \ref{sec:conditional-diffusion} that intermediate values of $w$ provide a good balance between compositionality and compounding errors. Furthermore, \acronym~performs particularly well according to rank correlation and Regret@1 (with very low standard error) and can accurately rank and identify the best-performing policy. This suggests that the target policy score function (with the negative behavior term) provides very informative guidance during denoising, allowing it to correctly evaluate target policies of varying levels of ability, even as some of those policies deviate significantly from the behavior policy. Finally, we see that \acronym~performance remains consistent across the problem dimension, highlighting the scalability of diffusion when applied to OPE for high-dimensional problems.

\subsection{Off-Policy Evaluation with Diffusion Policies}

To demonstrate the ability of \acronym~to evaluate more complex policy classes, we replace target policies with diffusion policies, which have led to significant advances in robotics~ \citep{chi2023diffusion,wang2023diffusion} (see Appendix \ref{sec:app-diffusion-policy} for details). Since \acronym~only requires the score of the target policy, it is computationally straightforward to perform OPE with diffusion policies, which is not the case for other estimators that require an explicit probability distribution $\pi_i(a | s)$ over actions (i.e. IS, DR). D4RL results are provided in Table \ref{tab:resultsDiffusion}. We see that \acronym~outperforms all other baselines in 6 out of 9 instances, demonstrating robust OPE performance across multiple target policy classes.

\subsection{Ablations}

We conduct additional experiments to test the sensitivity of \acronym~to the choice of guidance coefficients ($\alpha$ and $\lambda$) and the window size $w$. Due to space limitations, we defer results to Appendix \ref{sec:app-ablations}. In summary, the best performance occurs when $0 < \lambda < \alpha$, reaffirming our claim in Section \ref{sec:negative-guidance} that optimal regularization occurs for small $\lambda$. The best performance also occurs for $w = 8$, showing that \acronym~provides an optimal balance between autoregressive and full-trajectory diffusion.

\begin{table*}[t]
\centering
\resizebox{\textwidth}{!}{
\begin{tabular}{@{}l@{\hspace{0.1em}}lcccccc@{}}
\toprule
& & FQE & DRE & MBR & PGD & \textbf{Ours} \\
\midrule
\parbox[c]{6em}{\multirow{3}{*}{\rotatebox[origin=c]{0}{\resizebox{!}{0.9\height}{Log RMSE ↓}}}}
& Hopper & -0.21 ± 0.01 & -0.38 ± 0.00 & -1.56 ± 0.02 & -0.89 ± 0.00 & \bfseries -1.65 ± 0.01 \\[0.2em]
& Walker2d & -0.59 ± 0.01 & -0.49 ± 0.00 & -0.81 ± 0.01 & -0.50 ± 0.00 & \bfseries -1.20 ± 0.01 \\[0.2em]
& HalfCheetah & -0.19 ± 0.00 & \bfseries -1.19 ± 0.00 & -0.24 ± 0.01 & -0.96 ± 0.00 & -0.50 ± 0.00 \\[0.2em]
\midrule[0.08em]
\parbox[c]{6em}{\multirow{3}{*}{\rotatebox[origin=c]{0}{\resizebox{!}{0.9 \height}{Rank Corr. ↑}}}}
& Hopper & 0.35 ± 0.06 & 0.35 ± 0.04 & 0.68 ± 0.02  & 0.45 ± 0.00 & \bfseries 0.81 ± 0.01 \\[0.2em]
& Walker2d & 0.03 ± 0.04 & 0.45 ± 0.03 & 0.47 ± 0.02 & \bfseries 0.52 ± 0.01 & 0.46 ± 0.09 \\[0.2em]
& HalfCheetah & 0.59 ± 0.01 & 0.80 ± 0.03 & 0.75 ± 0.05 & 0.46 ± 0.06 & \bfseries 0.81 ± 0.02 \\[0.2em]
\midrule[0.08em]
\parbox[c]{6em}{\multirow{3}{*}{\rotatebox[origin=c]{0}{\resizebox{!}{0.99\height}{Regret@1 ↓}}}}
& Hopper & 0.06 ± 0.03 & 0.41 ± 0.22 & 0.18 ± 0.00 & \bfseries <0.01 ± 0.00 & \bfseries <0.01 ± 0.00 \\[0.2em]
& Walker2d & 0.24 ± 0.02 & 0.59 ± 0.13 & 0.17 ± 0.02 & 0.23 ± 0.00 & \bfseries 0.03 ± 0.00 \\[0.2em]
& HalfCheetah & \bfseries <0.01 ± 0.00 & \bfseries <0.01 ± 0.00 & 0.03 ± 0.01 & 0.02 ± 0.01 & 0.02 ± 0.01 \\
\bottomrule
\end{tabular}}
\caption{Comparison of OPE methods across environments when the target policy is a diffusion policy; any regret shown as <0.01 is nonzero but rounds to zero at two decimals.}
\label{tab:resultsDiffusion}
\end{table*}

\section{Limitations}
\label{sec:limitations}

The theory of \acronym~relies on (standard) Assumptions \ref{a1} and \ref{a2}. In practice, if there exist many $(s,a)$ pairs such that $\beta(a |s) = 0$ but $\pi(a | s) > 0$, then the behavior data may be incomplete and diffusion guidance could generate infeasible trajectories and produce biased estimates of $J(\pi)$. Diffusion models trained on image data in other applications are often easy to interpret; however, evaluating the fidelity of the trajectories generated from the trained behavior model $p_\beta(\tau)$ is challenging when the state is complex and difficult to interpret or partially observable. Finally, while \acronym~has demonstrated excellent performance on existing OPE benchmarks, it remains unanswered whether its benefits also apply to domain-specific problems outside robotics.

\section{Conclusion}

We presented \acronym~for off-policy evaluation in high-dimensional, long-horizon environments. \acronym~trains a conditional diffusion model to generate behavior sub-trajectories, and applies diffusion guidance using the score of the target policy to correct the distribution shift induced by the target policy. Our novelties include trajectory stitching and negative behavior policy guidance, which were shown to improve composition and generalization. Using D4RL and OpenAI Gym benchmarks, we showed that \acronym~outperforms state-of-the-art OPE methods across MSE, correlation and regret metrics. Future work could investigate online data collection to address severe distribution shift, or explore ways to adapt the guidance coefficients or incorporate additional knowledge into the guidance function (e.g. additional structure on the dynamics). It also remains an open question whether the advantages of \acronym~apply to offline policy optimization.

\begin{ack}
The authors were partially supported by the Toyota Research Institute (TRI).
\end{ack}

\bibliography{arxiv_2025}
\bibliographystyle{abbrvnat}


\clearpage

\appendix
\renewcommand{\contentsname}{Contents}

\renewcommand{\thesection}{\Alph{section}}

\theoremstyle{plain}
\newtheorem{apptheorem}{Theorem}[section]
\newtheorem{applemma}[apptheorem]{Lemma}
\newtheorem{appcorollary}[apptheorem]{Corollary}

\theoremstyle{definition}
\newtheorem{appdefinition}[apptheorem]{Definition}
\newtheorem{appassumption}[apptheorem]{Assumption}

\vspace{7mm} 
\begin{center}
\vspace{7mm} 
\noindent\makebox[\textwidth]{\rule{\textwidth}{4.0pt}} \\
\vspace{3mm}    
{\bf {\LARGE \acronym: Trajectory Stitching with Guided  \\ \vspace{1.5mm} Diffusion for Off-Policy Evaluation} \\
\vspace{5mm} {\Large Supplementary Material} }
\vspace{5mm} 
\noindent\makebox[\textwidth]{\rule{\textwidth}{1.0pt}}
  \vskip 0.2in
\end{center}

\begin{abstract}
This supplement to the paper discusses algorithmic and experiment details that were not included in the main paper due to space limitations. It includes proofs of all main theoretical claims, as well as all configurations and parameter settings that are required to reproduce the experiments. It includes additional experiments and ablation studies that were excluded from the main paper due to space limitations. It also includes a review of the recent literature on off-policy evaluation in RL.
\end{abstract}

\setcounter{tocdepth}{-1}
\tableofcontents
\addtocontents{toc}{\protect\setcounter{tocdepth}{2}}
\clearpage

\section{Pedagogical Example for Guided Diffusion}
\label{sec:app-example}

We consider the simple two-component mixture of Gaussians with density function
\begin{equation*}
    p(x) = 0.5 \mathcal{N}(x; 1, 0.5^2) + 0.5 \mathcal{N}(x; -1, 0.5^2), 
\end{equation*}
where $\mathcal{N}(x; \mu, \sigma^2)$ is the density function of a $\mathcal{N}(\mu, \sigma^2)$ distribution. Using the standard substitution
\begin{equation*}
    \epsilon(x^k, k) = -\sigma_k \nabla \log p(x^k)
\end{equation*}
in the backward diffusion process, produces the backward diffusion process
\begin{equation*}
    x^{k - 1} | x^k \sim \mathcal{N}((x^k + (1 - \alpha_k) \nabla \log p(x^k)) / \sqrt{\alpha_k}, \sigma_k^2).
\end{equation*}

To illustrate the effects of a guidance function on the sampling process, we consider the guidance function associated with the (unscaled) score of a $\mathcal{N}(1, 0.5^2)$ distribution, i.e.
\begin{equation*}
    g(x) = -(x - 1) / 0.5^2.
\end{equation*}
Then, the guided backward diffusion process has mean:
\begin{align*}
    &\frac{x^k + (1 - \alpha_k) \nabla \log p(x^k))}{\sqrt{\alpha_k}} + \sigma_k^2 g(x^k) \\
    &= \frac{x^k + (1 - \alpha_k)\left(\nabla \log p(x^k) + \sigma_k^2 g(x^k) \sqrt{\alpha_k}/(1 - \alpha_k) \right)}{\sqrt{\alpha_k}},
\end{align*}
which corresponds to a standard backward diffusion process with the modified score function
\begin{equation*}
    \nabla \log p(x^k) + \sigma_k^2 g(x^k) \sqrt{\alpha_k}/(1 - \alpha_k),
\end{equation*}
which would place more weight on the rightmost mode of the Gaussian mixture during the backward diffusion process. 

We run the backward denoising diffusion process using the exact score function $\nabla \log p(x^k)$. The sampling distributions of $x^k$ are plotted at various denoising time steps $k$ in Figure \ref{fig:guided-diffusion}.

\begin{figure*}[!htb]
    \centering
    \includegraphics[width=0.25\linewidth]{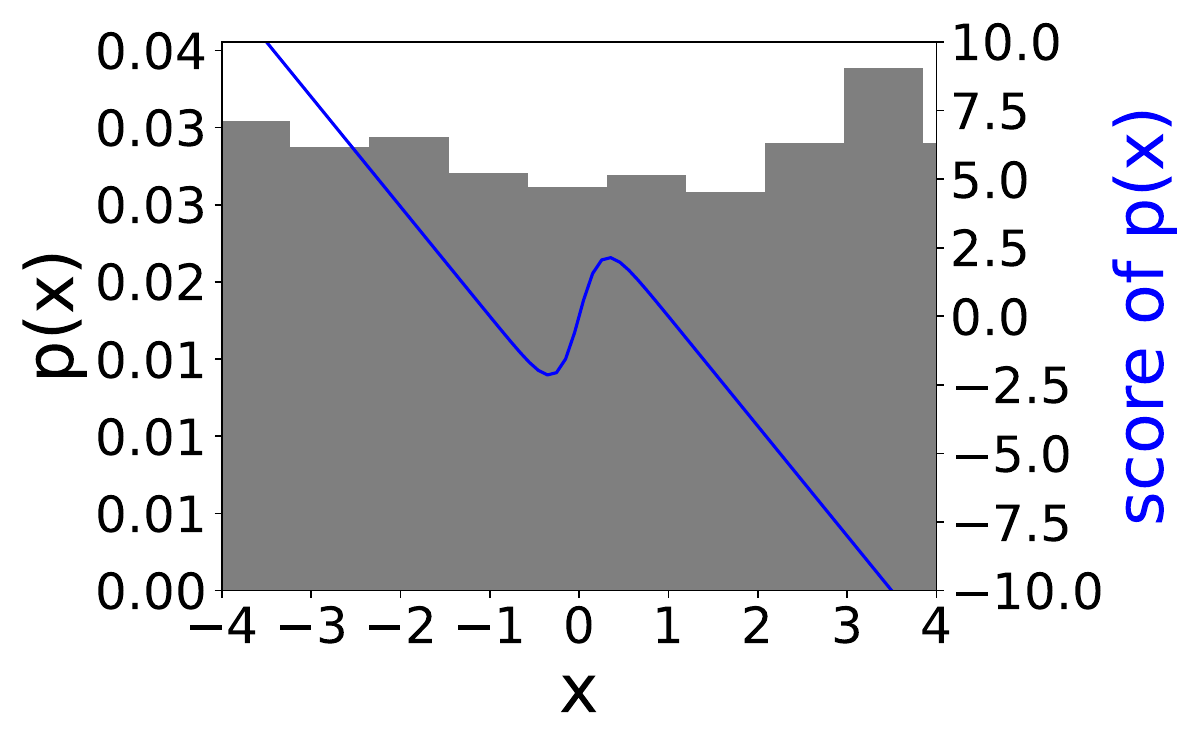}%
    \includegraphics[width=0.25\linewidth]{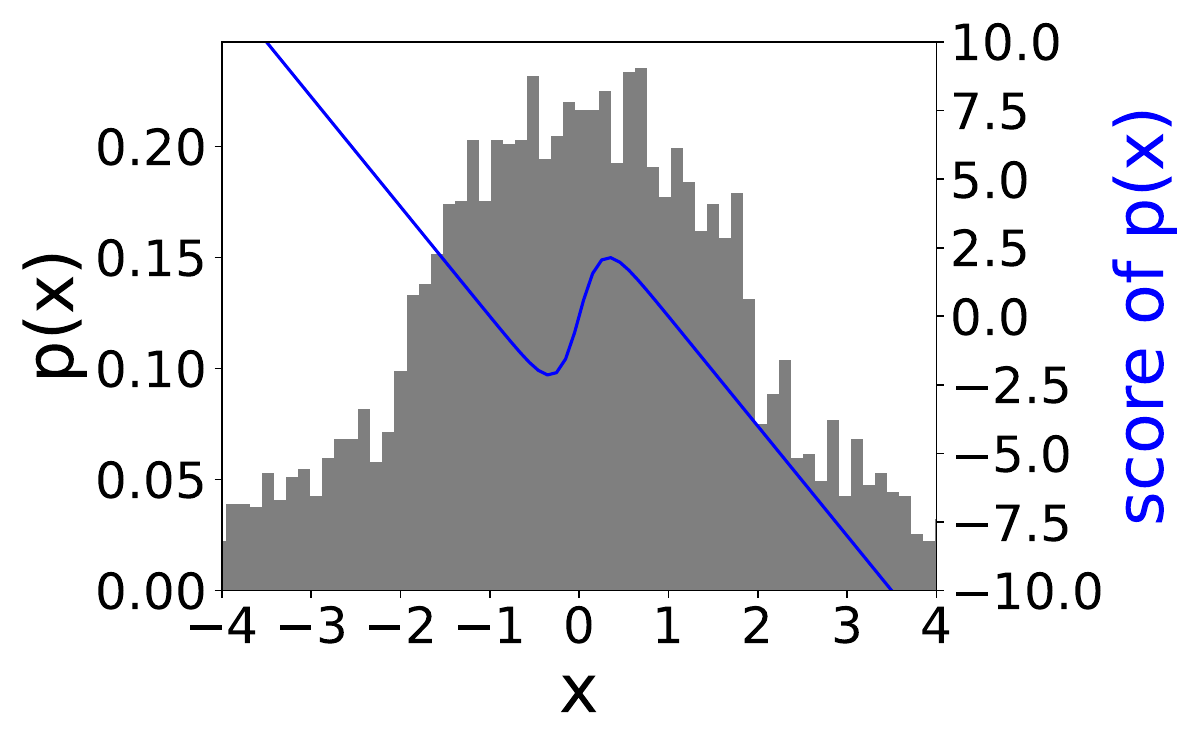}%
    \includegraphics[width=0.25\linewidth]{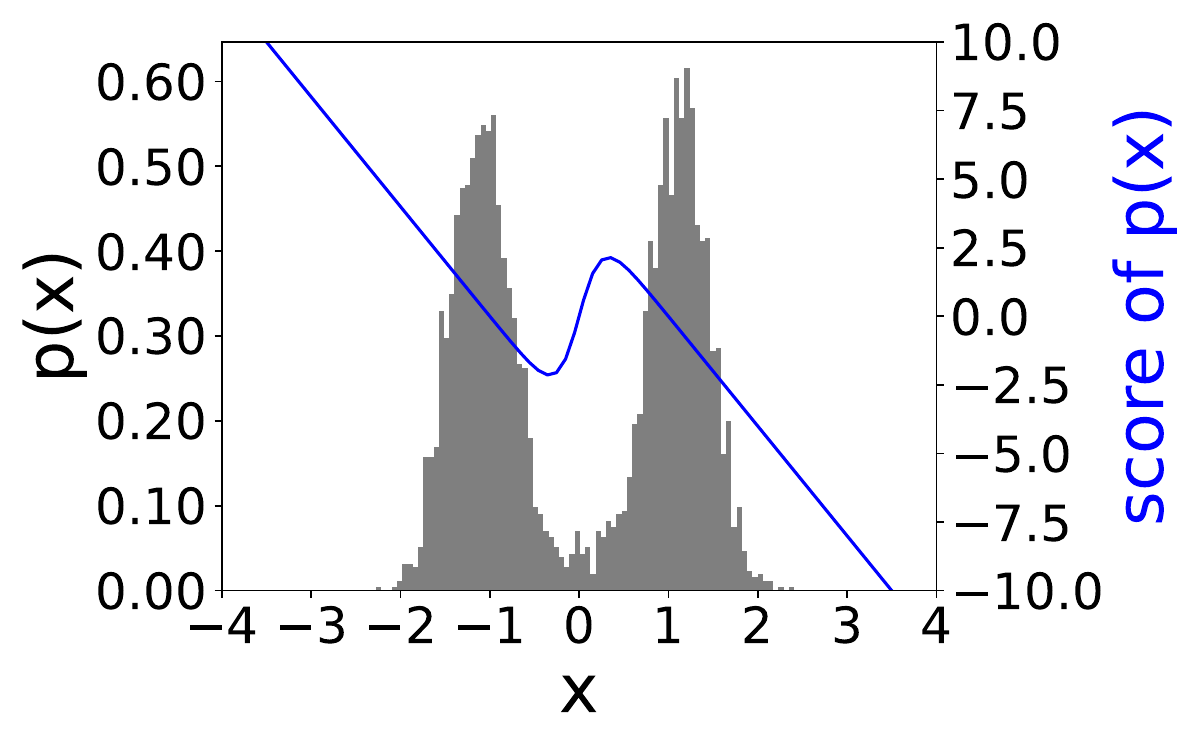}%
    \includegraphics[width=0.25\linewidth]{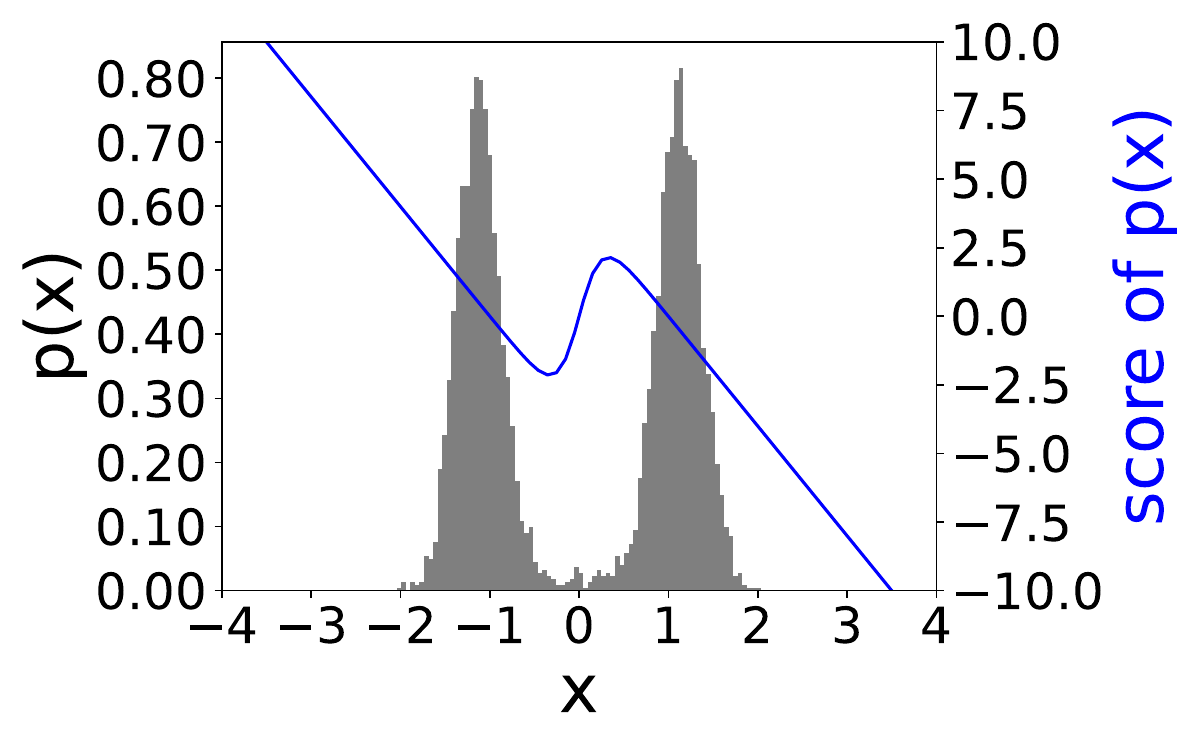}\\
    \includegraphics[width=0.25\linewidth]{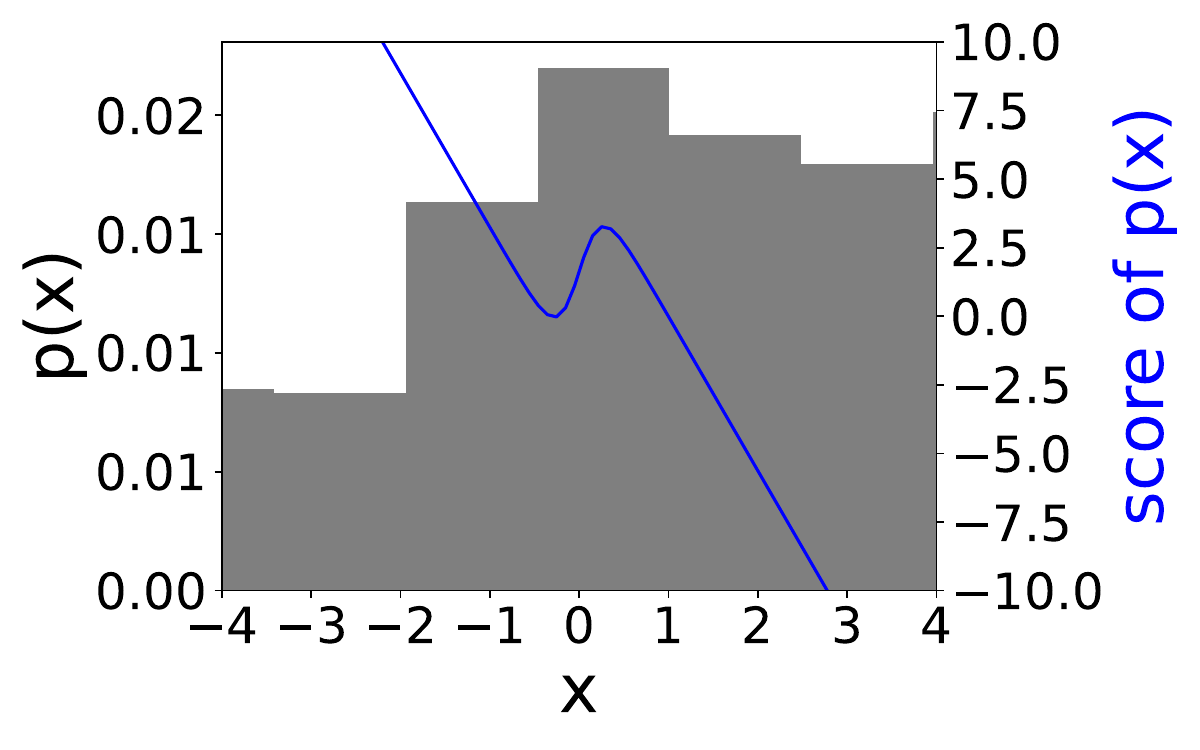}%
    \includegraphics[width=0.25\linewidth]{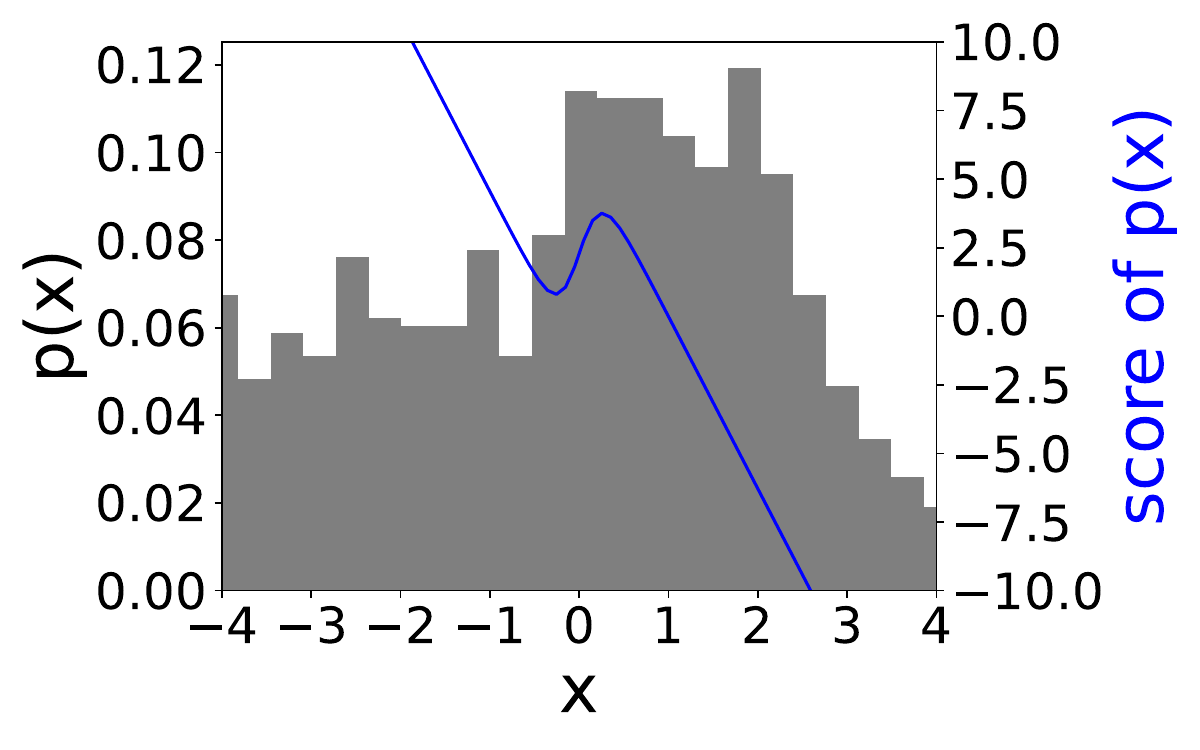}%
    \includegraphics[width=0.25\linewidth]{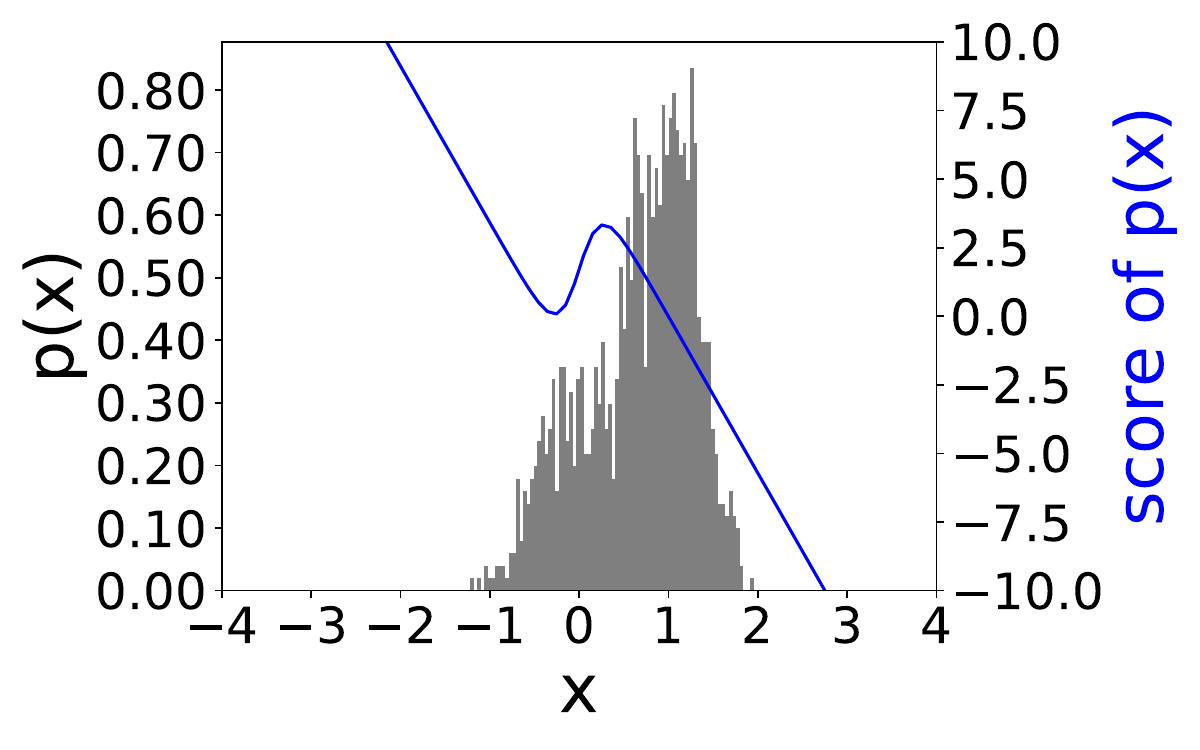}%
    \includegraphics[width=0.25\linewidth]{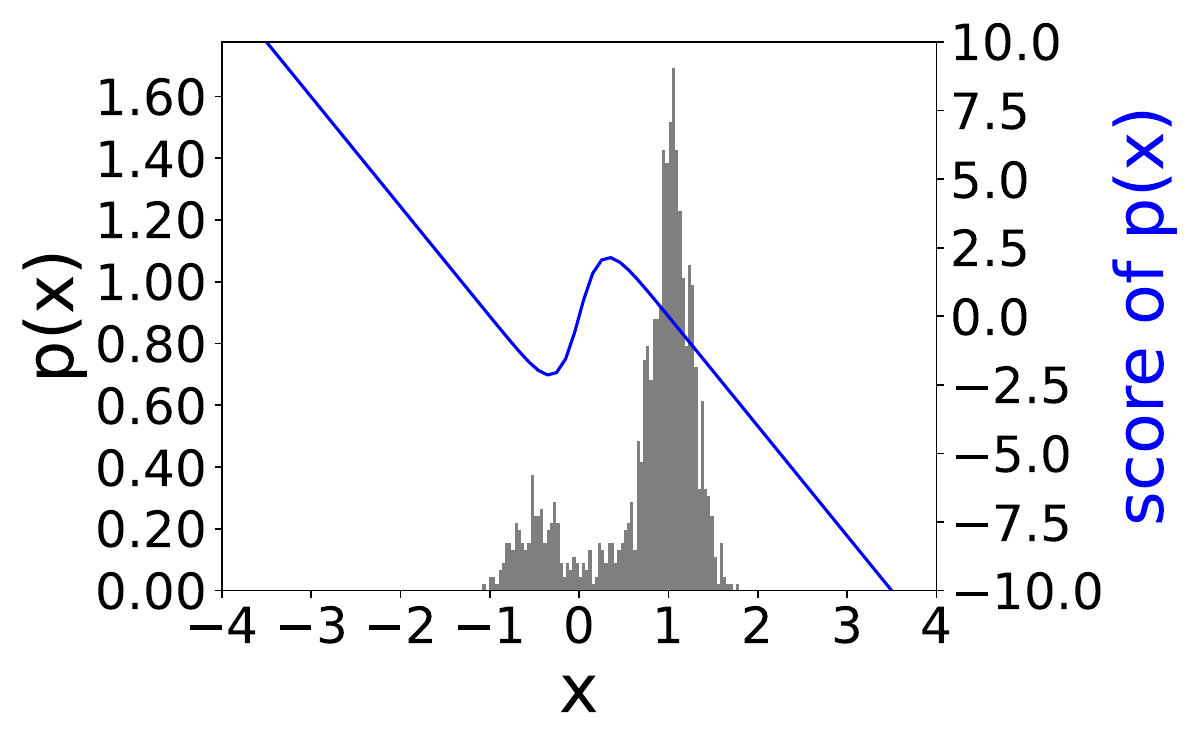}    
    \caption{Pedagogical example illustrating guided diffusion sample generation for a Gaussian mixture $0.5 \mathcal{N}(1, 0.5^2) + 0.5 \mathcal{N}(-1, 0.5^2)$. \textbf{Top row:} histograms of samples from unguided backward diffusion at steps $k = 8, 6, 4, 0$, where $\nabla \log p(x)$ is the score of the Gaussian mixture shown in \textcolor{blue}{blue}. \textbf{Bottom row:} histograms of samples from guided diffusion (\ref{eq:diffusion-guidance}) using the score function of a $\mathcal{N}(1, 0.5^2)$ distribution, i.e. $g(x) = -(x - 1) / 0.5^2$. The modified score function corresponding to the guided diffusion process is shown in \textcolor{blue}{blue}. The guided score function (the score of the actual sampling density) is significantly shifted and skewed, relative to the original score function, at the intermediate denoising time steps ($k=6, 4$). This ensures that the right mode of the Gaussian mixture is sampled more frequently during denoising.}
    \label{fig:guided-diffusion}
\end{figure*}

\section{GaussianWorld Domain}
\label{app:gaussianworld}

The GaussianWorld domain is a toy 2-dimensional Markov decision process defined designed to illustrate and compare generalization and compositionality of diffusion models (Table \ref{tab:motivation}). It is defined as follows:

\paragraph{Decision Epochs} The decision epochs are $t = 0, 1, 2, \dots T$ where we set $T = 128$ in our experiments.

\paragraph{State Space} $\mathcal{S} = \mathbb{R}^2$ describes all positions $(x_t, y_t)$ of a particle in space at every decision epoch $t$. It is assumed that $x_t$ is the x-coordinate and $y_t$ is the y-coordinate. The initial state is $s_0 = (0, 0)$ unless otherwise specified.

\paragraph{Action Space} $\mathcal{A} = \mathbb{R}$ describes the (counterclockwise) angle of the movement vector of the particle at every decision epoch, relative to the horizontal.

\paragraph{Transitions} Letting $a_t$ be the angle of movement of the particle at time $t$, the transitions of $x_t$ and $y_t$ are defined as follows:
\begin{equation*}
\begin{aligned}
    x_{t+1} = x_t + 0.02 \cdot \cos(a_t + \varepsilon_t),\qquad  y_{t+1} = y_t + 0.02\cdot  \sin(a_t + \varepsilon_t), \qquad \varepsilon_t \sim \mathcal{N}(0, 0.2^2).
\end{aligned}
\end{equation*}
Here, $\varepsilon_t$ is an i.i.d. Gaussian noise added to the actions before they are applied by the controller.

\paragraph{Reward Function and Discount} The problem is not solved so we leave the reward unspecified. We also leave the discount factor unspecified. 

\section{Proof that Conditional Diffusion Increases Entropy}
\label{sec:app-proof-conditional}

We begin with the following definitions.

\begin{appdefinition}[Entropy]
    Let $p(x)$ be a density function of a random variable $X$ with support $\mathcal{X}$. The \emph{entropy} of $X$ is defined as
    \begin{equation*}
        H(X) = \int_{\mathcal{X}} p(x) \log\left(\frac{1}{p(x)}\right) \,\mathrm{d}x. 
    \end{equation*}
\end{appdefinition}

\begin{appdefinition}[Conditional Entropy]
    The \emph{conditional entropy} of $X$ given $Y$ on support $\mathcal{Y}$ is defined as
    \begin{equation*}
        H(X | Y) = \mathbb{E}_{y \in \mathcal{Y}} [H(X | Y = y)].
    \end{equation*}
\end{appdefinition}

Our goal is to prove
\begin{apptheorem}
    Let $S_t$ be the random state at time $t$ sampled according to the conditional distribution $p(S_{t+1}=s | S_t=x, A_t = u)$, and let $A_t$ be a random action following some conditional distribution $p(A_t = a | S_t = x)$. Then $H(\tau | S_t) \geq H(\tau | S_0, A_0 \dots S_t)$, where $\tau$ is a (random) sub-trajectory beginning in state $S_t$.
\end{apptheorem}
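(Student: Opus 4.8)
The plan is to reduce the claim to the elementary information-theoretic fact that conditioning on additional variables never increases (differential) entropy. Write $\mathcal{H}_t = (S_0, A_0, S_1, A_1, \dots, S_t)$ for the full history up to time $t$, and decompose it as the pair $\mathcal{H}_t = (S_t, \mathcal{P}_t)$, where $\mathcal{P}_t = (S_0, A_0, \dots, S_{t-1}, A_{t-1})$ collects the strictly-past states and actions. With this bookkeeping the target inequality is exactly $H(\tau \mid S_t) \geq H(\tau \mid S_t, \mathcal{P}_t)$; that is, augmenting the conditioning set $\{S_t\}$ by the extra variables $\mathcal{P}_t$ cannot raise the conditional entropy of $\tau$.

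First I would state and justify the supporting lemma: for any jointly continuous random variables $X, Y, Z$ one has $H(X \mid Y) \geq H(X \mid Y, Z)$. The standard route is to write the gap as a conditional mutual information, $H(X \mid Y) - H(X \mid Y, Z) = I(X; Z \mid Y) = \mathbb{E}_{Y}\bigl[\, D_{\mathrm{KL}}\bigl(p(X, Z \mid Y)\,\|\,p(X \mid Y)\,p(Z \mid Y)\bigr)\bigr] \geq 0$, where non-negativity follows from Jensen's inequality applied to the convex function $-\log$ (equivalently, from Gibbs' inequality). This argument is purely about the non-negativity of KL divergence, so it applies verbatim to the continuous state space $\mathcal{S} = \mathbb{R}^2$ and action space $\mathcal{A} = \mathbb{R}$ of the MDP, despite differential entropy being signed.

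Second, I would instantiate the lemma with $X = \tau$, $Y = S_t$, and $Z = \mathcal{P}_t$, so that $(Y, Z) = \mathcal{H}_t$, yielding $H(\tau \mid S_t) \geq H(\tau \mid \mathcal{H}_t)$, which is the assertion. To match the notation of the statement I would finally unfold the definition of conditional entropy, $H(\tau \mid S_t) = \mathbb{E}_{S_t}[\,H(\tau \mid S_t = x)\,]$ and likewise for $\mathcal{H}_t$, confirming that both sides are the quantities in the theorem.

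There is no deep obstacle here; the only care needed is notational/measure-theoretic. The one point to verify cleanly is that $S_t$ together with the strict past $\mathcal{P}_t$ reconstitutes the entire history $\mathcal{H}_t$, so that a single application of the lemma closes the gap. I would also add a remark explaining why the statement is phrased as an inequality: under the Markov structure of the MDP the future sub-trajectory satisfies $\tau \indep \mathcal{P}_t \mid S_t$, so $I(\tau; \mathcal{P}_t \mid S_t) = 0$ and the bound is in fact tight; the weaker $\geq$ form is all that is needed downstream and holds unconditionally, without invoking the Markov property.
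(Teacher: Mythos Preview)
Your proposal is correct and takes essentially the same route as the paper: both reduce the claim to the non-negativity of the conditional mutual information $I(\tau; \mathcal{P}_t \mid S_t)$, established via Jensen's inequality applied to $-\log$. The paper spells this out as a chain rule $H(U,\tau\mid S_t)=H(U\mid S_t)+H(\tau\mid U,S_t)$ followed by sub-additivity $H(U,\tau\mid S_t)\le H(U\mid S_t)+H(\tau\mid S_t)$, whereas you name the gap as $I(X;Z\mid Y)\ge 0$ directly; your added remark that the Markov property forces equality is a nice touch not present in the paper.
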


\begin{proof}
First, letting ${U} = (S_0, A_0, \dots S_{t-1}, A_{t-1})$, observe that:
\begin{align*}
    &H(U, \tau | S_t = s) \\
    &\indent=  \iint p({U}=u, \tau | S_t = s) \log\left(\frac{1}{p(U=u, \tau | S_t = s)}\right) \,\mathrm{d}u \, \mathrm{d}\tau \\
    &\indent= \iint p(U=u, \tau | S_t = s) \log\left(\frac{1}{p(U=u | S_t=s) p(\tau | U=u, S_t = s)}\right) \,\mathrm{d}u\, \mathrm{d}\tau \\
    &\indent= \int p(U=u | S_t = s) \log\left(\frac{1}{p(U=u | S_t = s)} \right) \mathrm{d}u \\
    &\qquad + \int p(U=u | S_t = s) \int p(\tau | U=u, S_t = s) \log\left(\frac{1}{p(\tau | U=u, S_t = s)} \right) \mathrm{d}u\,\mathrm{d}\tau \\
    &\indent= H(U | S_t = s) + \mathbb{E}_{u\in \mathcal{U}|S_t=s}[H(\tau | U=u, S_t = s)].
\end{align*} 
Next, using the additivity property of expectation and law of total expectation:
\begin{align*}
    H(U, \tau | S_t) 
    &= \mathbb{E}_{s\in \mathcal{S}_t}[H(U | S_t = s)] + \mathbb{E}_{s\in \mathcal{S}_t,u\in \mathcal{U}}[H(\tau | U=u, S_t = s)] \\
    &= H(U | S_t) + H(\tau | U, S_t).
\end{align*}

Next, we prove sub-additivity of conditional entropy:
\begin{align*}
    &H(U, \tau | S_t = s) - H(U | S_t = s) - H(\tau | S_t = s)\\
    &= \iint p(U=u, \tau | S_t = s) \log\left(\frac{1}{p(U=u, \tau | S_t = s)}\right)\,\mathrm{d}u\,\mathrm{d}\tau \\
    &\quad - \int p(U =u| S_t = s)  \log\left(\frac{1}{p(U=u | S_t = s)}\right) \,\mathrm{d}u 
    -\int p(\tau | S_t = s)  \log\left(\frac{1}{p(\tau | S_t = s)}\right) \,\mathrm{d}\tau \\
    &= \iint p(U=u, \tau | S_t = s) \log \left(\frac{p(\tau | S_t = s) p(U=u | S_t = s)}{p(U=u, \tau | S_t = s)} \right) \,\mathrm{d}u\,\mathrm{d}\tau \\
    &\leq \log \iint p(U=u, \tau | S_t = s)\left(\frac{p(\tau | S_t = s) p(U=u | s_t = s)}{p(U=u, \tau | S_t = s)} \right)  \,\mathrm{d}u\,\mathrm{d}\tau \\
    &= \log 1 = 0,
\end{align*}
where the inequality in the derivation follows by Jensen's inequality. This implies that
\begin{equation*}
    H(U, \tau | S_t = s) \leq H(U | S_t = s) + H(\tau | S_t = s).
\end{equation*}
Taking expectation of both sides with respect to $S_t$, and using the monotonicity and additivity properties of expectation:
\begin{align*}
    H(U, \tau | S_t) 
    &= \mathbb{E}_{s\in \mathcal{S}_t}[H(U, \tau | S_t = s)] \\
    &\leq \mathbb{E}_{s\in \mathcal{S}_t}[H(U | S_t = s) + H(\tau | S_t = s)] 
    = H(U | S_t) + H(\tau | S_t).
\end{align*}

Finally, putting it all together:
\begin{align*}
    H(\tau | U, S_t) 
    = H(U, \tau | S_t) - H(U | S_t) 
    \leq H(U | S_t) + H(\tau | S_t) - H(U | S_t) = H(\tau | S_t),
\end{align*}
which completes the proof.
\end{proof}

\section{Theoretical Analysis} 
\label{app:theory}

\subsection{Assumptions and Definitions}

We decompose a full trajectory of length \( T \) into \( N = T/w \) non-overlapping sub-trajectories (or chunks), each of length \( w \). Each \emph{chunk} \( S_i \in \Tau^{(w)} \) is defined as
\[
S_i := (s_{iw}, a_{iw}, s_{iw+1}, a_{iw+1}, \dots, s_{(i+1)w}).
\]
Let the \emph{full trajectory} be defined as
\[
S = (S_0, S_1, \dots, S_{N-1}).
\]
We define the \emph{boundary state} \( X_i \) as the initial state of chunk \( S_i \):
\[
X_i := s_{iw}, \quad i = 0, 1 \dots N,
\]
which form the backbone of the generative process.

We assume the following factored generative process for trajectories
\[
p(S_0, S_1, \dots, S_{N-1}) = p(X_0) \prod_{i=0}^{N-1} p(S_i \mid X_i)\, p(X_{i+1} \mid S_i).
\]
This implies that the boundary state sequence \( X = (X_0, X_1, \dots, X_N) \) forms a first-order Markov chain
\[
p(X_{i+1} \mid S_i) = p(X_{i+1} \mid X_i).
\]

Each chunk \( S_i \) produces a scalar discounted return \( Y_i \), defined as
\[
Y_i := f(S_i) = \sum_{j=0}^{w-1} \gamma^j \hat{R}(s_{iw+j}, a_{iw+j}),
\]
where \( \hat{R} \) is a learned reward model, and \( \gamma \in [0, 1] \) is the discount factor.

Given a bound $R_{\max} < \infty$ on the absolute reward, we define the \emph{maximum per-chunk return bound} as:
\[
B_w := \sum_{j=0}^{w-1} \gamma^j R_{\max} = \frac{R_{\max}(1 - \gamma^w)}{1 - \gamma}
\quad \Rightarrow \quad |Y_i| \leq B_w.
\]

The cumulative return over the full trajectory is approximated by
\[
\hat{J} = \sum_{i=0}^{N-1} \gamma^{iw} Y_i,
\]
and the expected return under the target policy \( \pi \) is:
\[
J(\pi) := \mathbb{E}_{p_\pi}[\hat{J}]
= \mathbb{E}_{p_\pi} \left[ \sum_{i=0}^{N-1} \gamma^{iw} Y_i \right].
\]

\begin{appdefinition}[Chunked Behavior Distributions]
Let \( p_\beta^{(w)} \) denote the true distribution over behavior chunks \( S_i \), and let \( \hat{p}_\beta^{(w)} \) be the learned conditional distribution modeled by the diffusion process. These distributions describe how chunks are generated given boundary states:
\[
p_\beta^{(w)}(S_i \mid X_i), \qquad \hat{p}_\beta^{(w)}(S_i \mid X_i).
\]
\end{appdefinition}

\begin{appdefinition}[Total Variation Distance]
The \emph{total variation distance} between two probability distributions \( P \) and \( Q \) over the same measurable space \( \mathcal{X} \) is defined as
\[
\mathrm{TV}(P, Q) := \sup_{A \subseteq \mathcal{X}} |P(A) - Q(A)|.
\]
\end{appdefinition}

We now restate the two assumptions presented in the main text for convenience.

\begin{appassumption}[Bounded Likelihood Ratio]
There is a constant $\kappa$ such that $\frac{\pi(a | s)}{\beta(a|s)} \leq \kappa$ for all $s \in \mathcal{S}$ and $a \in \mathcal{A}$.
\label{sec:app_boundlikelihood}
\end{appassumption}
Note that this assumption can be easily verified in our experimental setting. Since the action spaces are closed intervals and the behavior and target policy distributions are both represented as truncated Gaussian distributions, the ratio of the two policies is bounded over the action space.

\begin{appassumption}[Chunk-wise Model Fit] 
\label{sec:app_assumption_modelfit}
The total variation distance between the true chunk distribution \( p_\beta^{(w)} \) and the learned conditional distribution \( \hat{p}_\beta^{(w)} \) is bounded by some constant $\detlabeta > 0$,
\[
\mathrm{TV}\left(p_\beta^{(w)}, \hat{p}_\beta^{(w)}\right) \leq \detlabeta.
\]
\end{appassumption}

\subsection{Analysis of the Bias}

We begin by bounding the total variation distance between the true target distribution \( p_\pi^{(w)} \) and the guided model \( \hat{p}_\pi^{(w)} \).

\begin{applemma} \label{lem:1}
The total variation distance between the guided model \( \hat{p}_\pi^{(w)} \) and the true target distribution \( p_\pi^{(w)} \) satisfies
\[
\mathrm{TV}\left( p_\pi^{(w)}, \hat{p}_\pi^{(w)} \right) \leq \kappa^2 \cdot \detlabeta
\]
\end{applemma}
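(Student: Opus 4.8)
The plan is to recognize that both $p_\pi^{(w)}$ and $\hat{p}_\pi^{(w)}$ are obtained from the respective behavior chunk distributions by multiplying by the \emph{same} per-chunk importance weight and renormalizing, as dictated by (\ref{eq:dope-p-pi}). Abbreviating $p := p_\beta^{(w)}(\cdot \mid X_i)$, $\hat{p} := \hat{p}_\beta^{(w)}(\cdot \mid X_i)$ and $W(S) := \prod_{u=iw}^{(i+1)w-1} \pi(a_u \mid s_u)/\beta(a_u \mid s_u)$, we may write $p_\pi^{(w)} = pW/Z_p$ and $\hat{p}_\pi^{(w)} = \hat{p}W/Z_{\hat p}$ with $Z_p := \int pW$ and $Z_{\hat p} := \int \hat{p}W$. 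Under Assumption \ref{sec:app_boundlikelihood} each of the $w$ ratios in a chunk is at most $\kappa$, so $0 \le W \le \kappa^w$; this single bound is the source of the exponential-in-$w$ constant (the same $\kappa^w$ that propagates into Theorem \ref{thm:main}).

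The key structural fact I would exploit is that reweighting the \emph{true} behavior chunk distribution by the \emph{exact} likelihood ratio reproduces the true target chunk distribution, which is already normalized. Hence $Z_p = \int pW = 1$ exactly, so that $p_\pi^{(w)} = pW$ requires no renormalization. The learned normalizer $Z_{\hat p}$ need not equal $1$, but it is controlled by the same model error: since $Z_{\hat p} - 1 = Z_{\hat p} - Z_p = \int(\hat{p}-p)W$, the weight bound together with Assumption \ref{sec:app_assumption_modelfit} gives $|Z_{\hat p}-1| \le \kappa^w\int|\hat{p}-p| \le 2\kappa^w\detlabeta$.

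With these two facts, I would finish by the triangle inequality. Writing $p_\pi^{(w)} - \hat{p}_\pi^{(w)} = (p-\hat{p})W + \hat{p}_\pi^{(w)}(Z_{\hat p}-1)$ and integrating, the first term contributes $\int|p-\hat{p}|W \le \kappa^w\int|p-\hat{p}| = 2\kappa^w\,\mathrm{TV}(p,\hat{p}) \le 2\kappa^w\detlabeta$, and the second contributes $|Z_{\hat p}-1|\int\hat{p}_\pi^{(w)} = |Z_{\hat p}-1| \le 2\kappa^w\detlabeta$. Summing and halving yields $\mathrm{TV}(p_\pi^{(w)},\hat{p}_\pi^{(w)}) \le 2\kappa^w\detlabeta$, which is the asserted bound up to the absolute constant, the exponent recording the $w$ per-transition ratios inside one chunk.

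The step I expect to be the main obstacle is the normalization term, because guidance reweights the \emph{approximate} model $\hat{p}$ rather than the true $p$, so $\hat{p}_\pi^{(w)}$ is a ratio whose denominator $Z_{\hat p}$ is itself corrupted by the model error; a naive pointwise comparison of $pW$ and $\hat{p}W$ would not close the bound. The resolution is exactly the identity $Z_p = 1$, which pins the target normalizer and lets me charge the error in $Z_{\hat p}$ back to the single total-variation budget $\detlabeta$ via Assumption \ref{sec:app_assumption_modelfit}; the remaining manipulations are routine integrations using $W \le \kappa^w$.
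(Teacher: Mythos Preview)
Your argument is correct and shares the paper's core idea: write both target chunk distributions as the corresponding behavior chunk distribution times the same per-chunk importance weight $W \le \kappa^w$, then pull out $\kappa^w$ and invoke Assumption~\ref{sec:app_assumption_modelfit}. The difference is that the paper simply takes $\hat{p}_\pi^{(w)} := \hat{p}_\beta^{(w)} \cdot W$ as its \emph{definition} of the guided chunk model (the ``reweighted form''), so no renormalization enters and the proof is the one-line computation
\[
\mathrm{TV}(p_\pi^{(w)}, \hat{p}_\pi^{(w)}) = \tfrac{1}{2}\int \bigl|p_\beta^{(w)} - \hat{p}_\beta^{(w)}\bigr| \cdot W \le \kappa^w \,\mathrm{TV}(p_\beta^{(w)}, \hat{p}_\beta^{(w)}) \le \kappa^w \delta_\beta,
\]
yielding the sharper constant. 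Your extra work bounding $|Z_{\hat p}-1|$ is what one would need if $\hat{p}_\pi^{(w)}$ were instead defined as the \emph{normalized} reweighting $\hat{p}_\beta^{(w)} W / Z_{\hat p}$; under the paper's convention that step is unnecessary, which is why your bound carries the additional factor of~$2$. (You are also right that the exponent should be $\kappa^w$, matching the $w$ ratios per chunk; the $\kappa^2$ in the displayed statement is a typo, as confirmed by the paper's own proof and its subsequent use of $\delta_\pi \le \kappa^w \delta_\beta$.)
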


\begin{proof}
By the definition of total variation distance
\[
\mathrm{TV}(p_\pi^{(w)}, \hat{p}_\pi^{(w)}) = \frac{1}{2} \int \left| p_\pi^{(w)}(\tau) - \hat{p}_\pi^{(w)}(\tau) \right| \mathrm{d}\tau.
\]
Using the reweighted form of each distribution
\[
\mathrm{TV}(p_\pi^{(w)}, \hat{p}_\pi^{(w)})= \frac{1}{2} \int \left| \left(p_\beta^{(w)}(\tau) - \hat{p}_\beta^{(w)}(\tau)\right) \cdot \prod_{j=0}^{w-1} \frac{\pi(a_j \mid s_j)}{\beta(a_j \mid s_j)} \right| \mathrm{d}\tau,
\]
and applying the bound on the likelihood ratio (Assumption \ref{sec:app_boundlikelihood}):
\[
\mathrm{TV}(p_\pi^{(w)}, \hat{p}_\pi^{(w)})
\leq \frac{\kappa^w}{2} \int \left| p_\beta^{(w)}(\tau) - \hat{p}_\beta^{(w)}(\tau) \right| \mathrm{d}\tau = \kappa^w \cdot \mathrm{TV}(p_\beta^{(w)}, \hat{p}_\beta^{(w)}) \leq \kappa^w \cdot \detlabeta.
\]
This completes the proof.
\end{proof}

\noindent Let the total variation distance between the true target distribution and the guided diffusion model be denoted by
\[
\delta_\pi := \mathrm{TV}\left( p_\pi^{(w)}, \hat{p}_\pi^{(w)} \right).
\]
By Lemma~\ref{lem:1}, we have the bound
\[
\delta_\pi \leq \kappa^w \cdot \detlabeta.
\]

We now derive a bound on the absolute bias of the estimated return when sampling chunks from the guided model \( \hat{p}_\pi^{(w)} \) instead of the true target distribution \( p_\pi^{(w)} \).

\begin{applemma}[Expectation Difference Bound via Total Variation]\label{lem:tv-expectation}
Let \(p\) and \(q\) be two probability densities on a probability space \(\mathcal X\).  Let 
\[
\|f\|_\infty \;=\;\sup_{x\in\mathcal X}\bigl|f(x)\bigr|
\]
be the supremum norm of a bounded function \(f:\mathcal X\to\mathbb R\), and let:
\[
\|p - q\|_1 =\int_{\mathcal X}\bigl|p(x)-q(x)\bigr|\,\mathrm{d}x,
\qquad
\mathrm{TV}(p,q)=\tfrac12\,\|p-q\|_1.
\]
Then
\[
\bigl|\mathbb{E}_{x\sim p}[f(x)] \;-\;\mathbb{E}_{x\sim q}[f(x)]\bigr|
\le 2\,\|f\|_\infty\,\mathrm{TV}(p,q).
\]
\end{applemma}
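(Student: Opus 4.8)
The plan is to prove this standard total-variation bound directly from the definitions, using the elementary inequality $\left|\int f\,(p-q)\right| \le \int |f|\,|p-q|$. This is the cleanest route because the right-hand side $2\|f\|_\infty \mathrm{TV}(p,q)$ is exactly $\|f\|_\infty \|p-q\|_1$, so the entire claim reduces to bounding the expectation difference by the weighted $L^1$ distance and then pulling out the supremum norm.

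First I would write the difference of expectations as a single integral against the signed measure $p-q$, namely
\begin{equation*}
    \mathbb{E}_{x\sim p}[f(x)] - \mathbb{E}_{x\sim q}[f(x)] = \int_{\mathcal X} f(x)\bigl(p(x)-q(x)\bigr)\,\mathrm{d}x.
\end{equation*}
Next I would apply the triangle inequality for integrals (monotonicity of the integral together with $|fg|=|f||g|$) to obtain
\begin{equation*}
    \left|\int_{\mathcal X} f(x)\bigl(p(x)-q(x)\bigr)\,\mathrm{d}x\right|
    \le \int_{\mathcal X} \bigl|f(x)\bigr|\,\bigl|p(x)-q(x)\bigr|\,\mathrm{d}x.
\end{equation*}
Then I would bound $|f(x)| \le \|f\|_\infty$ pointwise and factor it out of the integral, giving $\|f\|_\infty \int_{\mathcal X} |p(x)-q(x)|\,\mathrm{d}x = \|f\|_\infty \|p-q\|_1$. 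Finally, substituting the definition $\|p-q\|_1 = 2\,\mathrm{TV}(p,q)$ yields the stated bound $2\|f\|_\infty\,\mathrm{TV}(p,q)$, completing the argument.

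There is really no substantive obstacle here, as this is a textbook inequality; the only thing to be careful about is keeping the chain of inequalities tight so that the constant is exactly $2$ rather than something larger, which is guaranteed by using the factor-of-$\tfrac12$ convention already fixed in the definition of total variation distance in the excerpt. I would note in passing that this lemma will be applied later with $f$ taken to be a bounded per-chunk or full-trajectory return, so that $\|f\|_\infty$ corresponds to the return bound $B_w$ and $\mathrm{TV}(p,q)$ corresponds to $\delta_\pi$ via Lemma~\ref{lem:1}; this connection motivates stating the result in this general form, but it plays no role in the proof itself.
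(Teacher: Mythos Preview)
Your proposal is correct and follows exactly the same chain of steps as the paper's proof: write the expectation difference as $\int f(p-q)$, apply the triangle inequality inside the integral, bound $|f|$ by $\|f\|_\infty$, and identify the remaining integral as $2\,\mathrm{TV}(p,q)$. There is nothing to add or change.
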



\begin{proof}
\[
\begin{aligned}
\bigl|\mathbb{E}_{p}[f]-\mathbb{E}_{q}[f]\bigr|
&=\Bigl|\int_{\mathcal X} f(x)\,p(x)\,\mathrm{d}x \;-\;\int_{\mathcal X} f(x)\,q(x)\,\mathrm{d}x\Bigr| \\[6pt]
&=\Bigl|\int_{\mathcal X} f(x)\,\bigl(p(x)-q(x)\bigr)\,\mathrm{d}x\Bigr|
\;\le\;\int_{\mathcal X}\bigl|f(x)\bigr|\;\bigl|p(x)-q(x)\bigr|\,\mathrm{d}x \\[6pt]
&\le\;\|f\|_\infty\;\int_{\mathcal X}\bigl|p(x)-q(x)\bigr|\,\mathrm{d}x
\;=\;2\,\|f\|_\infty\,\mathrm{TV}(p,q).
\end{aligned}
\]
This completes the proof.
\end{proof}

\begin{applemma}[Marginal TV Bound via Conditional TV]\label{lem:marginal-tv}
Let \( p(x \mid s) \) and \( \hat{p}(x \mid s) \) be conditional densities over chunk \( x \in \Tau^{(w)} \), given state \( s \in \mathcal{S} \), and let \( \mu(s) \) denote the marginal distribution over \( s \). Then
\[
\mathrm{TV}\left( \int p(x \mid s) \mu(s) \mathrm{d}s, \int \hat{p}(x \mid s) \mu(s) \mathrm{d}s \right)
\leq \int \mathrm{TV}\left( p(\cdot \mid s), \hat{p}(\cdot \mid s) \right) \mu(s) \mathrm{d}s.
\]
In particular, if \( \mathrm{TV}(p(\cdot \mid s), \hat{p}(\cdot \mid s)) \leq \epsilon \) for all \( s \), then
\[
\mathrm{TV}(p, \hat{p}) \leq \epsilon.
\]
\end{applemma}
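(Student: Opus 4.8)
The plan is to reduce the claim to the definition of total variation as half the $L^1$ distance between the two marginal densities, and then to push the absolute value inside the integral over $s$. First I would write the two marginals explicitly as $p(x) = \int p(x \mid s)\,\mu(s)\,\mathrm{d}s$ and $\hat{p}(x) = \int \hat{p}(x \mid s)\,\mu(s)\,\mathrm{d}s$, so that their difference is $p(x) - \hat{p}(x) = \int \bigl(p(x\mid s) - \hat{p}(x\mid s)\bigr)\mu(s)\,\mathrm{d}s$. Substituting into $\mathrm{TV}(p,\hat{p}) = \tfrac{1}{2}\int \lvert p(x) - \hat{p}(x)\rvert\,\mathrm{d}x$ then yields a double integral in which the absolute value sits outside the inner $s$-integral.

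Next I would apply the triangle inequality for integrals, namely $\bigl\lvert \int h(s)\,\mathrm{d}s\bigr\rvert \le \int \lvert h(s)\rvert\,\mathrm{d}s$, to the inner $s$-integral. Because $\mu(s) \ge 0$, this moves the absolute value onto $p(x\mid s) - \hat{p}(x\mid s)$ while leaving the nonnegative weight $\mu(s)$ untouched, producing the bound $\tfrac{1}{2}\iint \lvert p(x\mid s) - \hat{p}(x\mid s)\rvert\,\mu(s)\,\mathrm{d}s\,\mathrm{d}x$.

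The one technical point to justify is the interchange of the order of integration. Since the integrand $\lvert p(x\mid s) - \hat{p}(x\mid s)\rvert\,\mu(s)$ is nonnegative, Tonelli's theorem applies and I may swap $\mathrm{d}s$ and $\mathrm{d}x$ freely. After the swap, the inner integral over $x$ becomes $\int \lvert p(x\mid s) - \hat{p}(x\mid s)\rvert\,\mathrm{d}x = 2\,\mathrm{TV}\bigl(p(\cdot\mid s), \hat{p}(\cdot\mid s)\bigr)$, so the factor of $\tfrac{1}{2}$ cancels and I obtain exactly $\int \mathrm{TV}\bigl(p(\cdot\mid s), \hat{p}(\cdot\mid s)\bigr)\mu(s)\,\mathrm{d}s$, which is the claimed inequality.

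Finally, for the ``in particular'' clause I would substitute the uniform bound $\mathrm{TV}\bigl(p(\cdot\mid s), \hat{p}(\cdot\mid s)\bigr) \le \epsilon$ into this integral and use that $\mu$ is a probability measure, so $\int \mu(s)\,\mathrm{d}s = 1$, giving $\mathrm{TV}(p,\hat{p}) \le \epsilon \int \mu(s)\,\mathrm{d}s = \epsilon$. There is no genuine obstacle here; the only step deserving explicit care is the appeal to Tonelli's theorem, whose hypotheses are met precisely because the integrand is nonnegative.
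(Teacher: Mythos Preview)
Your proof is correct and follows essentially the same route as the paper's: write the marginals as mixtures, apply the triangle inequality (the paper calls it Jensen's inequality) to move the absolute value inside the $s$-integral, swap the order of integration, and identify the inner $x$-integral as twice the conditional TV. If anything, you are slightly more careful than the paper in explicitly invoking Tonelli's theorem to justify the interchange of integrals.
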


\begin{proof}
Let \( p(x) = \int p(x \mid s) \mu(s) \mathrm{d}s \), \( \hat{p}(x) = \int \hat{p}(x \mid s) \mu(s) \mathrm{d}s \). Then:
\[
\begin{aligned}
\mathrm{TV}(p, \hat{p}) 
&= \frac{1}{2} \int \left| p(x) - \hat{p}(x) \right| \mathrm{d}x \\
&= \frac{1}{2} \int \left| \int \mu(s) \left[ p(x \mid s) - \hat{p}(x \mid s) \right] \mathrm{d}s \right| \mathrm{d}x \\
&\leq \frac{1}{2} \iint\mu(s) \left| p(x \mid s) - \hat{p}(x \mid s) \right| \mathrm{d}s\, \mathrm{d}x \quad \text{(by Jensen's inequality)} \\
&= \int \mu(s) \left[ \frac{1}{2} \int \left| p(x \mid s) - \hat{p}(x \mid s) \right| \mathrm{d}x \right] \mathrm{d}s \\
&= \int \mu(s) \cdot \mathrm{TV}(p(\cdot \mid s), \hat{p}(\cdot \mid s)) \mathrm{d}s.
\end{aligned}
\]
If \( \mathrm{TV}(p(\cdot \mid s), \hat{p}(\cdot \mid s)) \leq \epsilon \) uniformly, the integral is bounded by \( \epsilon \).
\end{proof}

\begin{apptheorem}[Bias Bound for \acronym]
\label{app:thm-bias}
The bias of the return estimate under the guided diffusion model satisfies
\[
\left| \mathbb{E}_{\hat{p}_\pi}[\hat{J}] - J(\pi) \right| \leq \frac{2 B_{w}}{1 - \gamma^w} \cdot \delta_\pi.
\]
\end{apptheorem}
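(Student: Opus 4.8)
The plan is to exploit the additive, chunk-wise structure of the return estimate and reduce the bias to a sum of per-chunk expectation gaps, each controlled by the chunk-level total variation $\delta_\pi$ through the two supporting lemmas. First I would use linearity of expectation together with $\hat{J} = \sum_{i=0}^{N-1}\gamma^{iw}Y_i$ and $J(\pi) = \mathbb{E}_{p_\pi}[\hat{J}]$ to write
\[
\mathbb{E}_{\hat{p}_\pi}[\hat{J}] - J(\pi) = \sum_{i=0}^{N-1}\gamma^{iw}\left(\mathbb{E}_{\hat{p}_\pi}[Y_i] - \mathbb{E}_{p_\pi}[Y_i]\right),
\]
so that the triangle inequality reduces the problem to bounding each $|\mathbb{E}_{\hat{p}_\pi}[Y_i] - \mathbb{E}_{p_\pi}[Y_i]|$ and then summing a discounted geometric series. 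I deliberately avoid applying the expectation-difference lemma to $\hat{J}$ as a whole, since that would require the full length-$T$ trajectory TV, which is far larger than the chunk-level $\delta_\pi$; the per-chunk route instead produces the factor $1/(1-\gamma^w)$ from the geometric sum.

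Next, for a fixed chunk index $i$, I would treat $Y_i = f(S_i)$ as a bounded function of the single chunk $S_i$ and apply Lemma~\ref{lem:tv-expectation}. Because $|Y_i| \le B_w$, this gives $|\mathbb{E}_{\hat{p}_\pi}[Y_i] - \mathbb{E}_{p_\pi}[Y_i]| \le 2 B_w\,\mathrm{TV}(m_i, \hat{m}_i)$, where $m_i$ and $\hat{m}_i$ are the marginal laws of $S_i$ under $p_\pi$ and $\hat{p}_\pi$. To bound this marginal TV by the chunk-conditional quantity $\delta_\pi$, I would invoke Lemma~\ref{lem:marginal-tv}: each marginal is an average of the chunk conditional $p_\pi^{(w)}(\cdot \mid X_i)$ (resp. $\hat{p}_\pi^{(w)}(\cdot \mid X_i)$) over the boundary-state distribution, and since the conditional TV is uniformly at most $\delta_\pi$ in the boundary state, the marginal TV is bounded by $\delta_\pi$ as well. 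Hence every per-chunk gap is at most $2 B_w \delta_\pi$.

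Finally, I would assemble the bound and sum the geometric series,
\[
\left|\mathbb{E}_{\hat{p}_\pi}[\hat{J}] - J(\pi)\right| \le \sum_{i=0}^{N-1}\gamma^{iw}\, 2 B_w \delta_\pi = 2 B_w \delta_\pi\,\frac{1-\gamma^{T}}{1-\gamma^w} \le \frac{2 B_w}{1-\gamma^w}\,\delta_\pi,
\]
using $Nw = T$ and $\gamma \in [0,1]$ so that $1-\gamma^{T}\le 1$. The main obstacle I expect is the step that reduces the marginal chunk TV to a clean $\delta_\pi$: because each boundary state $X_i$ is itself produced by the imperfect chunk model, the marginal laws of $X_i$ under $p_\pi$ and $\hat{p}_\pi$ drift apart as $i$ grows, so a fully rigorous argument must apply Lemma~\ref{lem:marginal-tv} with $p_\pi^{(w)}$ and $\hat{p}_\pi^{(w)}$ compared against a \emph{common} boundary-state mixing distribution rather than two distinct ones. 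The cleanest way to keep the per-chunk error at exactly $\delta_\pi$ is to bound each chunk marginal by swapping the conditional while holding the boundary marginal fixed, deferring the accumulation of boundary-state error to the variance analysis, where it surfaces as the $O((T/w)^2)$ term.
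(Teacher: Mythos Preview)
Your proposal is correct and follows essentially the same route as the paper: decompose $\hat{J}$ chunkwise, apply the triangle inequality, bound each $|\mathbb{E}_{\hat{p}_\pi}[Y_i]-\mathbb{E}_{p_\pi}[Y_i]|$ by $2B_w\delta_\pi$ via Lemmas~\ref{lem:tv-expectation} and~\ref{lem:marginal-tv}, then sum the geometric series in $\gamma^{iw}$. The concern you flag about boundary-state drift (Lemma~\ref{lem:marginal-tv} requiring a common mixing measure over $X_i$) is a genuine subtlety that the paper's proof also leaves implicit, so your treatment is at least as careful as the original.
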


\begin{proof}
The return estimator is:
\[
\hat{J} = \sum_{i=0}^{N-1} \gamma^{iw} Y_i, \qquad \text{where} \quad Y_i = f(S_i) = \sum_{j=0}^{w-1} \gamma^j \hat{R}(s_{iw+j}, a_{iw+j}).
\]
Thus, the bias is:
\[
\left| \mathbb{E}_{\hat{p}_\pi}[\hat{J}] - \mathbb{E}_{p_\pi}[\hat{J}] \right| = \left| \sum_{i=0}^{N-1} \gamma^{iw} \left( \mathbb{E}_{\hat{p}_\pi}[Y_i] - \mathbb{E}_{p_\pi}[Y_i] \right) \right| \leq \sum_{i=0}^{N-1} \gamma^{iw} \left| \mathbb{E}_{\hat{p}_\pi}[Y_i] - \mathbb{E}_{p_\pi}[Y_i] \right|.
\]
For each chunk \( i \), \( Y_i \) depends only on \( S_i \), with marginal distributions \( \hat{p}_\pi^{(w,i)} \) and \( p_\pi^{(w,i)} \) under \( \hat{p}_\pi \) and \( p_\pi \), respectively. By Lemma~\ref{lem:tv-expectation} and Lemma~\ref{lem:marginal-tv}
\[
\left| \mathbb{E}_{\hat{p}_\pi}[Y_i] - \mathbb{E}_{p_\pi}[Y_i] \right| \leq 2 \cdot \sup |Y_i| \cdot \mathrm{TV}(p_\pi^{(w,i)}, \hat{p}_\pi^{(w,i)}).
\]
Since \( |\hat{R}(s, a)| \leq R_{\max} \), the per-chunk return is bounded:
\[
|Y_i| \leq \sum_{j=0}^{w-1} \gamma^j R_{\max} = R_{\max} \cdot \frac{1 - \gamma^w}{1 - \gamma}.
\]
Using Lemma~\ref{lem:1}, we know that \( \mathrm{TV}(p_\pi^{(w,i)}, \hat{p}_\pi^{(w,i)}) \leq \delta_\pi \), Thus we have
\[
\left| \mathbb{E}_{\hat{p}_\pi}[Y_i] - \mathbb{E}_{p_\pi}[Y_i] \right| \leq 2 \cdot \frac{R_{\max} (1 - \gamma^w)}{1 - \gamma} \cdot \delta_\pi.
\]
Summing over chunks:
\[
\left| \mathbb{E}_{\hat{p}_\pi}[\hat{J}] - \mathbb{E}_{p_\pi}[\hat{J}] \right| \leq \sum_{i=0}^{N-1} \gamma^{iw} \cdot 2 \cdot \frac{R_{\max} (1 - \gamma^w)}{1 - \gamma} \cdot \delta_\pi = 2 \cdot \frac{R_{\max} (1 - \gamma^w)}{1 - \gamma} \cdot \delta_\pi \cdot \sum_{i=0}^{N-1} \gamma^{iw}.
\]
The geometric sum is:
\[
\sum_{i=0}^{N-1} \gamma^{iw} \leq \sum_{i=0}^{\infty} \gamma^{iw} = \frac{1}{1 - \gamma^w}.
\]
Thus:
\[
\left| \mathbb{E}_{\hat{p}_\pi}[\hat{J}] - \mathbb{E}_{p_\pi}[\hat{J}] \right| \leq 2 \cdot \frac{R_{\max} (1 - \gamma^w)}{1 - \gamma} \cdot \delta_\pi \cdot \frac{1}{1 - \gamma^w} = \frac{2 B_{w}}{1 - \gamma^w} \cdot \delta_\pi.
\]
This completes the proof.
\end{proof}

\begin{appcorollary}[Bias Bound in Terms of Model Fit \( \delta_\beta \)]
Under the assumptions \( \sup_i \mathrm{TV}(p_\pi^{(w,i)}, \hat{p}_\pi^{(w,i)}) \leq \delta_\pi \leq \kappa^w \cdot \delta_\beta \) and \( \sup_\tau |\hat{J}(\tau)| \leq \frac{R_{\max}}{1 - \gamma} \), the bias satisfies
\[
\left| \mathbb{E}_{\hat{p}_\pi}[\hat{J}] - J(\pi) \right| \leq \frac{2 B_{w}}{1 - \gamma^w} \cdot \kappa^w \cdot \delta_\beta.
\]
\end{appcorollary}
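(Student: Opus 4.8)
The plan is to obtain this corollary by directly composing two results that are already established earlier in this section, with no new analysis required. First I would invoke the Bias Bound Theorem (Theorem~\ref{app:thm-bias}), which gives
\[
\left| \mathbb{E}_{\hat{p}_\pi}[\hat{J}] - J(\pi) \right| \leq \frac{2 B_{w}}{1 - \gamma^w} \cdot \delta_\pi,
\qquad \delta_\pi := \mathrm{TV}\!\left( p_\pi^{(w)}, \hat{p}_\pi^{(w)} \right).
\]
This already isolates every environment- and horizon-dependent constant into the prefactor $\frac{2 B_{w}}{1 - \gamma^w}$, so the only remaining task is to replace the abstract propagated error $\delta_\pi$ by a quantity that is controllable in terms of the learned behavior model. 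That replacement is supplied verbatim by Lemma~\ref{lem:1}, which establishes $\delta_\pi \leq \kappa^w \cdot \delta_\beta$.

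Concretely, I would substitute the Lemma~\ref{lem:1} bound into the prefactor from the bias theorem, yielding
\[
\left| \mathbb{E}_{\hat{p}_\pi}[\hat{J}] - J(\pi) \right| \leq \frac{2 B_{w}}{1 - \gamma^w} \cdot \kappa^w \cdot \delta_\beta,
\]
which is exactly the claimed inequality. It is worth recalling where the factor $\kappa^w$ originates: in the proof of Lemma~\ref{lem:1}, the bounded-likelihood-ratio Assumption~\ref{sec:app_boundlikelihood} is applied once per transition inside a length-$w$ chunk, converting the chunk-wise model-fit error of Assumption~\ref{sec:app_assumption_modelfit} into an error on the reweighted target distribution and accruing one factor of $\kappa$ per step. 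The bias theorem itself rests on the expectation-difference bound (Lemma~\ref{lem:tv-expectation}) together with the marginal total-variation bound (Lemma~\ref{lem:marginal-tv}), so the full chain is transparent.

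The hard part here is essentially nonexistent: the corollary is a one-line substitution chaining two already-proved statements. The only points requiring care are bookkeeping, not mathematics. I would verify the exponent on $\kappa$ for consistency, since the \emph{header} of Lemma~\ref{lem:1} displays $\kappa^2$ whereas its proof and this corollary both use $\kappa^w$; the intended bound is $\kappa^w \delta_\beta$, reflecting one importance-ratio factor per transition within a chunk, and the $\kappa^2$ in the lemma header should be read as a typographical slip. I would also note that the auxiliary hypothesis $\sup_\tau |\hat{J}(\tau)| \leq R_{\max}/(1-\gamma)$ listed in the corollary is not actually invoked by this argument, since the per-chunk boundedness $|Y_i| \leq B_w$ used inside the bias theorem already furnishes the relevant supremum estimate; it can be retained harmlessly as a sanity condition on the total return.
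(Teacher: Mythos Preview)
Your proposal is correct and matches the paper's approach exactly: the corollary is stated in the paper without an explicit proof, being an immediate substitution of the bound $\delta_\pi \leq \kappa^w \delta_\beta$ from Lemma~\ref{lem:1} into the bias bound of Theorem~\ref{app:thm-bias}. Your observations about the $\kappa^2$ typo in the lemma header and the unused $\sup_\tau |\hat{J}(\tau)|$ hypothesis are both accurate and well spotted.
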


\subsection{Analysis of the Variance}

\begin{figure}[!htb]
    \centering
    \begin{tikzpicture}[
        node distance = 2.4cm and 1.5cm,
        state/.style  = {circle, draw, thick, minimum size=1.2cm},
        chunk/.style  = {circle, draw, thick, minimum size=1.2cm},
        reward/.style = {circle, draw, thick, minimum size=1.2cm},
        >=Stealth
    ]

    \node[state] (x0) {$X_0$};
    \node[state, right=of x0] (x1) {$X_1$};
    \node[state, right=of x1] (x2) {$X_2$};
    \node[state, right=of x2] (xN) {$X_{N}$};

    \node[chunk, below=1.2cm of x0] (s0) {$S_0$};
    \node[chunk, below=1.2cm of x1] (s1) {$S_1$};
    \node[chunk, below=1.2cm of x2] (s2) {$S_2$};
    \node[chunk, below=1.2cm of xN] (sN) {$S_{N-1}$};

    \node[reward, below=1.0cm of s0] (y0) {$Y_0$};
    \node[reward, below=1.0cm of s1] (y1) {$Y_1$};
    \node[reward, below=1.0cm of s2] (y2) {$Y_2$};
    \node[reward, below=1.0cm of sN] (yN) {$Y_{N-1}$};

    \draw[->, thick] (x0) -- (x1);
    \draw[->, thick] (x1) -- (x2);
    \draw[dashed, thick] (x2) -- (xN);

    \draw[->, thick] (x0) -- (s0);
    \draw[->, thick] (x1) -- (s1);
    \draw[->, thick] (x2) -- (s2);
    \draw[->, thick] (xN) -- (sN);

    \draw[->, thick] (s0) to[bend left=20] (x1);
    \draw[->, thick] (s1) to[bend left=20] (x2);
    \draw[->, dashed] (s2) to[bend left=20] (xN);

    \draw[->, thick] (s0) -- (y0);
    \draw[->, thick] (s1) -- (y1);
    \draw[->, thick] (s2) -- (y2);
    \draw[->, thick] (sN) -- (yN);

    \end{tikzpicture}
    \caption{Illustration of the sub-trajectory decomposition. Each chunk $S_i$ generates a reward sequence $Y_i$ and leads to a boundary state $X_{i+1}$.}
\label{fig:app-graphical}
\end{figure}
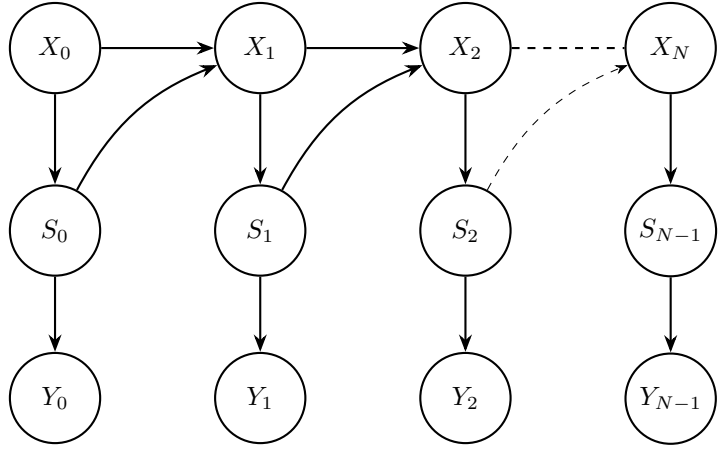

\begin{applemma}[Conditional Independence of Chunk Rewards]\label{lem:c independence}
Let \( X_i := s_{iw} \) be the boundary state at the start of chunk \( S_i \), and define:
\[
Y_i := f(S_i) = \sum_{j=0}^{w-1} \gamma^j\,\hat{R}(s_{iw+j}, a_{iw+j}).
\]
Assume the generative process satisfies the following properties:
\begin{itemize}
    \item Each chunk \( S_i \) is generated independently given \( X_i \)
    \item The return \( Y_i \) is a deterministic function of \( S_i \).
\end{itemize}
Then for all \( i \ne j \), the returns \( Y_i \) and \( Y_j \) are conditionally independent given the full boundary state chain \( X_0, X_1, \dots, X_N \),
\[
Y_i \;\perp\!\!\!\perp\; Y_j \;\big|\; X_0,\dots,X_N.
\]
\end{applemma}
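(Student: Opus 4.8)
The plan is to prove the stronger fact that, conditioned on the \emph{entire} boundary chain $X_0,\dots,X_N$, the chunks $S_0,\dots,S_{N-1}$ are \emph{mutually} conditionally independent, with each $S_i$ depending only on its own endpoints $X_i$ and $X_{i+1}$. The claimed independence of the returns then follows at once, since each $Y_i=f(S_i)$ is a deterministic (measurable) function of the single chunk $S_i$. I would work entirely within the factored generative model assumed in the text and established as a first-order Markov chain on the boundary states.

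First I would write the joint density of chunks and boundary states using the assumed factorization together with the boundary Markov property $p(X_{i+1}\mid S_i)=p(X_{i+1}\mid X_i)$,
\[
p(S_0,\dots,S_{N-1},X_0,\dots,X_N) = p(X_0)\prod_{i=0}^{N-1} p(S_i\mid X_i)\,p(X_{i+1}\mid S_i).
\]
Marginalizing each chunk out via $\int p(S_i\mid X_i)\,p(X_{i+1}\mid S_i)\,\mathrm{d}S_i = p(X_{i+1}\mid X_i)$ shows the boundary chain itself has density $p(X_0,\dots,X_N)=p(X_0)\prod_{i=0}^{N-1}p(X_{i+1}\mid X_i)$, a product of first-order transition kernels.

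Second, I would form the conditional law of the chunks given the full chain by dividing the joint by this marginal. The key observation is that the factor $p(X_0)\prod_i p(X_{i+1}\mid X_i)$ cancels, leaving a clean product,
\[
p(S_0,\dots,S_{N-1}\mid X_0,\dots,X_N) = \prod_{i=0}^{N-1}\frac{p(S_i\mid X_i)\,p(X_{i+1}\mid S_i)}{p(X_{i+1}\mid X_i)} = \prod_{i=0}^{N-1} p(S_i\mid X_i,X_{i+1}),
\]
where the last equality is Bayes' rule applied within each factor (using that $X_{i+1}$ depends on $X_i$ only through $S_i$). Since each factor involves a single chunk $S_i$ and the now-fixed endpoints, the conditional law factorizes across $i$, which is exactly mutual conditional independence of $\{S_i\}$ given $X_{0:N}$; in particular $S_i \indep S_j \mid X_0,\dots,X_N$ for all $i\neq j$. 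Finally, I would invoke the elementary fact that measurable functions of conditionally independent variables remain conditionally independent: as $Y_i=f(S_i)$, $Y_j=f(S_j)$, we obtain $Y_i \indep Y_j \mid X_0,\dots,X_N$.

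I expect the main obstacle to be the second step — justifying the exact cancellation of the boundary-chain normalizer and the per-factor Bayes rewrite — which rests on the structural fact that each chunk $S_i$ enters the joint in precisely the two factors $p(S_i\mid X_i)$ and $p(X_{i+1}\mid S_i)$, with no cross-index coupling once the boundary states are fixed. An equivalent and purely graphical route reads this conclusion off the DAG in Figure~\ref{fig:app-graphical}, where conditioning on all observed boundary nodes renders distinct chunks $d$-separated; I would nonetheless favor the algebraic derivation above because it is self-contained and makes the factorization explicit.
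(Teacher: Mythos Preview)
Your argument is correct and in fact establishes the stronger claim that the chunks $S_0,\dots,S_{N-1}$ are \emph{mutually} conditionally independent given $X_{0:N}$, from which pairwise independence of the $Y_i=f(S_i)$ follows immediately. The paper takes a different route: it invokes $d$-separation on the graphical model of Figure~\ref{fig:app-graphical}, arguing that every undirected path from $Y_i$ to $Y_j$ must pass through some boundary node $X_k$ that is a non-collider on that path, so conditioning on $X_{0:N}$ blocks all such paths. Your algebraic derivation is self-contained --- it does not presuppose familiarity with the $d$-separation criterion --- and it makes the per-chunk conditional $p(S_i\mid X_i,X_{i+1})$ explicit, which could be independently useful downstream; the paper's graphical argument is terser but leans on Bayesian-network machinery. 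You acknowledge the graphical alternative yourself at the end, so you are aware both routes reach the same conclusion.
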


\begin{proof}
Refer to the graphical model in Figure \ref{fig:app-graphical}. The nodes \( X_0, X_1, \dots, X_N \) form a Markov chain. Each chunk \( S_i \) is a child of \( X_i \), and each return \( Y_i \) is a child of \( S_i \).

Now consider any path from \( Y_i \) to \( Y_j \). Such a path must go through:
\[
Y_i \leftarrow S_i \leftarrow X_i \leadsto X_{i+1} \leadsto \cdots \leadsto X_j \rightarrow S_j \rightarrow Y_j.
\]
All such paths must traverse through at least one boundary node \( X_k \). Since we are conditioning on all \( X_0, \dots, X_N \), and these nodes are non-colliders on every path from \( Y_i \) to \( Y_j \), all such paths are blocked. By the criterion of d-separation (see, e.g. Chapter 8 in \citep{bishop2006pattern}), this implies \( Y_i \perp\!\!\!\perp Y_j \mid X_0,\dots,X_N \).
\end{proof}

\begin{apptheorem}[Variance Bound]
\label{app:thm-variance}
Let \( \hat{p}_\pi \) denote the trajectory distribution induced by the guided diffusion model, and \( p_\pi \) the true trajectory distribution under the target policy. Let \( \hat{J} \) be the return estimator using a learned reward model. Then
\[
\mathrm{Var}_{\hat{p}_\pi}(\hat{J})
\;\leq\;
\mathrm{Var}_{p_\pi}(J)
+ 10 \left( \frac{T}{w} \right)^2 B_w^2\kappa^w \delta_\beta 
+ \frac{2 B_w^2}{1 - \gamma^{2w}}\kappa^w \delta_\beta ,
\]
where \( B_w \) denotes the maximum per-chunk discounted return.
\end{apptheorem}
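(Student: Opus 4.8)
The plan is to compare the two variances of the \emph{same} return functional $\hat{J} = \sum_{i=0}^{N-1}\gamma^{iw} Y_i$ (with $N = T/w$), one taken under the guided model $\hat{p}_\pi$ and one under the true target law $p_\pi$, noting that $\mathrm{Var}_{p_\pi}(\hat{J}) = \mathrm{Var}_{p_\pi}(J)$. First I would apply the law of total variance conditioning on the boundary-state chain $X_{0:N} = (X_0,\dots,X_N)$ under each distribution,
\begin{equation*}
\mathrm{Var}(\hat{J}) = \mathbb{E}\!\left[\mathrm{Var}(\hat{J}\mid X_{0:N})\right] + \mathrm{Var}\!\left(\mathbb{E}[\hat{J}\mid X_{0:N}]\right).
\end{equation*}
Invoking the conditional-independence lemma (chunk returns are independent given the boundary chain, and $Y_i$ depends only on $S_i$, hence only on $X_i$), the conditional variance collapses to $\mathrm{Var}(\hat{J}\mid X_{0:N}) = \sum_{i} \gamma^{2iw}\,\mathrm{Var}(Y_i\mid X_i)$. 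This yields a decomposition into a \emph{within-chunk} term $W = \sum_i \gamma^{2iw}\mathbb{E}[\mathrm{Var}(Y_i\mid X_i)]$ and a \emph{between-chunk} term $B = \mathrm{Var}(h(X_{0:N}))$, where $h(X_{0:N}) = \sum_i \gamma^{iw} g_i(X_i)$ and $g_i(x) = \mathbb{E}[Y_i \mid X_i = x]$. Since the law of total variance is exact, $W_\pi + B_\pi = \mathrm{Var}_{p_\pi}(J)$, so it suffices to bound $W_{\hat\pi}-W_\pi$ and $B_{\hat\pi}-B_\pi$ and match them to the two error terms.

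For the within-chunk term I would argue chunk by chunk. Because $Y_i$ is a deterministic function of $S_i$ with $|Y_i|\le B_w$, the conditional variance $\mathrm{Var}(Y_i\mid X_i=x)$ is a bounded functional in $[0,B_w^2]$ of the chunk conditional $p_\pi^{(w)}(\cdot\mid x)$. Since the chunk conditionals of $\hat{p}_\pi$ and $p_\pi$ differ by at most $\delta_\pi$ in total variation, and the marginal law of $X_i$ is correspondingly close, the expectation-difference lemma (Lemma~\ref{lem:tv-expectation}) together with the marginal-TV lemma (Lemma~\ref{lem:marginal-tv}) bounds each per-chunk difference $|\mathbb{E}_{\hat{p}_\pi}[\mathrm{Var}(Y_i\mid X_i)] - \mathbb{E}_{p_\pi}[\mathrm{Var}(Y_i\mid X_i)]|$ by $2B_w^2 \delta_\pi$. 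Summing with the geometric weights $\sum_i \gamma^{2iw} \le \tfrac{1}{1-\gamma^{2w}}$ and substituting $\delta_\pi \le \kappa^w\delta_\beta$ from Lemma~\ref{lem:1} produces exactly $\tfrac{2B_w^2}{1-\gamma^{2w}}\kappa^w\delta_\beta$.

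For the between-chunk term I would treat $h(X_{0:N})$ as a bounded function of the whole boundary chain, using the crude envelope $|h| \le \sum_i \gamma^{iw}B_w \le N B_w = \tfrac{T}{w}B_w$ (deliberately discarding discounting here, which is what makes this piece scale as $(T/w)^2$ rather than with a geometric factor). Writing the variance difference as a difference of second moments minus a difference of squared means and applying the expectation-difference lemma to the bounded functions $h^2$ and $h$ (of orders $(T/w)^2 B_w^2$ and $(T/w)B_w$ respectively), with the total-variation discrepancy controlled through $\delta_\pi \le \kappa^w\delta_\beta$, collects all the resulting constants into $10(T/w)^2 B_w^2 \kappa^w\delta_\beta$. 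Adding this to the within-chunk bound and to $\mathrm{Var}_{p_\pi}(J) = W_\pi + B_\pi$ gives the stated inequality.

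The main obstacle I anticipate is the between-chunk term. Unlike the within term, which decouples cleanly into per-chunk functionals, the conditional-mean function $h$ and the law of the boundary chain \emph{both} change when passing from $p_\pi$ to $\hat{p}_\pi$, and the total-variation discrepancy of the boundary chain is built from $N$ imperfect chunk kernels, which a priori can accumulate across chunks. The delicate step is to control this discrepancy so that the final bound stays at order $(T/w)^2$ and carries only a single factor of $\kappa^w\delta_\beta$ (through Lemma~\ref{lem:1}), rather than compounding uncontrollably. A secondary technical point is to verify that the conditional-independence structure used in the law of total variance holds identically under the guided model $\hat{p}_\pi$, so that the same decomposition into $W$ and $B$ is available on both sides of the comparison.
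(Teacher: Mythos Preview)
Your plan is essentially the paper's proof: apply the law of total variance conditioning on the boundary chain $X_{0:N}$, use Lemma~\ref{lem:c independence} to collapse the conditional variance into a per-chunk sum, then bound the within-chunk piece with a geometric sum and the between-chunk piece with the crude envelope $|h|\le NB_w$, each via Lemma~\ref{lem:tv-expectation} and $\delta_\pi\le\kappa^w\delta_\beta$.

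One discrepancy worth flagging: your within-chunk argument claims $|\mathbb{E}_{\hat p_\pi}[\mathrm{Var}_{\hat p_\pi}(Y_i\mid X_i)]-\mathbb{E}_{p_\pi}[\mathrm{Var}_{p_\pi}(Y_i\mid X_i)]|\le 2B_w^2\delta_\pi$ from a single invocation of the TV lemma, but both the \emph{function} $x\mapsto\mathrm{Var}(Y_i\mid X_i=x)$ and the \emph{marginal} of $X_i$ change between $\hat p_\pi$ and $p_\pi$. The paper handles these separately---first bounding $|\mathrm{Var}_{\hat p_\pi}(Y_i\mid X_i)-\mathrm{Var}_{p_\pi}(Y_i\mid X_i)|\le 6\delta_\pi B_w^2$ via $|a^2-b^2|=|a-b||a+b|$, then adding $2\delta_\pi B_w^2$ for the measure change---arriving at $8\delta_\pi B_w^2$ per chunk and hence $\tfrac{8B_w^2}{1-\gamma^{2w}}\kappa^w\delta_\beta$ (the constant $2$ in the displayed statement is a typo; the main text's Theorem~\ref{thm:main} and the boxed conclusion of the proof both carry the $8$). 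The obstacle you anticipate about TV accumulation in the boundary chain is real and is treated at the same level of informality in the paper's between-chunk analysis.
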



\begin{proof}
We begin by applying the law of total variance under the guided model distribution \( \hat{p}_\pi \)
\[
\mathrm{Var}_{\hat{p}_\pi}(\hat{J}) 
= \mathbb{E}_{\hat{p}_\pi}\left[ \mathrm{Var}_{\hat{p}_\pi}(\hat{J} \mid X) \right]
+ \mathrm{Var}_{\hat{p}_\pi}\left( \mathbb{E}_{\hat{p}_\pi}[\hat{J} \mid X] \right).
\]

Using Lemma~\ref{lem:c independence} we have that the chunk-level rewards \( Y_i \) and \( Y_j \) are conditionally independent given the boundary states \( X_0, X_1, \dots, X_N \):
\[
Y_i \;\indep\; Y_j \;\mid\; X_0, X_1, \dots, X_N
\qquad \text{for all } i \neq j.
\]
Using this conditional independence, the variance of the total return under \( \hat{p}_\pi \) factorizes:
\[
\mathrm{Var}_{\hat{p}_\pi}[\hat{J} \mid X]
= \mathrm{Var}_{\hat{p}_\pi} \left[ \sum_{i=0}^{N-1} \gamma^{iw} Y_i \;\middle|\; X \right]
= \sum_{i=0}^{N-1} \gamma^{2iw} \cdot \mathrm{Var}_{\hat{p}_\pi}(Y_i \mid X_i).
\]
To bound the difference in conditional variances, we apply the law of variance
\[
\mathrm{Var}(Y_i \mid X_i) = \mathbb{E}[Y_i^2 \mid X_i] - \left( \mathbb{E}[Y_i \mid X_i] \right)^2.
\]

Let us define a bound on the per-chunk return magnitude:
\[
B_w := \frac{R_{\max}(1 - \gamma^w)}{1 - \gamma}
\quad \Rightarrow \quad
|Y_i| \leq B_w, \quad Y_i^2 \leq B_w^2.
\]
Using Lemma~\ref{lem:tv-expectation} (Expectation Difference Bound via Total Variation), we have
\[
\left| \mathbb{E}_{p_\pi}[f] - \mathbb{E}_{\hat{p}_\pi}[f] \right|
\leq 2 \delta_\pi \cdot \|f\|_\infty.
\]
Applying this with \( f = Y_i \) and \( f = Y_i^2 \), and using the bound \( |Y_i| \leq B_w \), we obtain:
\[
\left| \mathbb{E}_{p_\pi}[Y_i] - \mathbb{E}_{\hat{p}_\pi}[Y_i] \right| \leq 2 \delta_\pi B_w,
\qquad
\left| \mathbb{E}_{p_\pi}[Y_i^2] - \mathbb{E}_{\hat{p}_\pi}[Y_i^2] \right| \leq 2 \delta_\pi B_w^2.
\]

We analyze the difference in conditional variances:
\[
\begin{aligned}
&\left| \mathrm{Var}_{\hat{p}_\pi}(Y_i \mid X_i) - \mathrm{Var}_{p_\pi}(Y_i \mid X_i) \right| \\
&= \left| \mathbb{E}_{\hat{p}_\pi}[Y_i^2] - \mathbb{E}_{p_\pi}[Y_i^2]
- \left( \mathbb{E}_{\hat{p}_\pi}[Y_i]^2 - \mathbb{E}_{p_\pi}[Y_i]^2 \right) \right| \\
&\leq \left| \mathbb{E}_{\hat{p}_\pi}[Y_i^2] - \mathbb{E}_{p_\pi}[Y_i^2] \right|
+ \left| \mathbb{E}_{\hat{p}_\pi}[Y_i]^2 - \mathbb{E}_{p_\pi}[Y_i]^2 \right| \\
&= \left| \mathbb{E}_{\hat{p}_\pi}[Y_i^2] - \mathbb{E}_{p_\pi}[Y_i^2] \right|
+ \left| \mathbb{E}_{\hat{p}_\pi}[Y_i] - \mathbb{E}_{p_\pi}[Y_i] \right|
\cdot \left| \mathbb{E}_{\hat{p}_\pi}[Y_i] + \mathbb{E}_{p_\pi}[Y_i] \right| \\
&\leq 2 \delta_\pi B_w^2 + (2 \delta_\pi B_w)(2 B_w) \\
&= 6 \delta_\pi B_w^2.
\end{aligned}
\]
This uses the triangle inequality and the identity \( |a^2 - b^2| = |a - b||a + b| \), along with the bounds
\( |Y_i| \leq B_w \), \( \|Y_i\|_\infty^2 \leq B_w^2 \), and total variation guarantees from Lemma~\ref{lem:tv-expectation}. Then
\[
  \left| \mathrm{Var}_{\hat{p}_\pi}(Y_i \mid X_i) - \mathrm{Var}_{p_\pi}(Y_i \mid X_i) \right|
  \leq 6 \delta_\pi B_w^2.
\]

We now return to bounding the first term in the law of total variance
\[
\mathbb{E}_{\hat{p}_\pi} \left[ \mathrm{Var}_{\hat{p}_\pi}(\hat{J} \mid X) \right]
= \mathbb{E}_{\hat{p}_\pi} \left[ \sum_{i=0}^{N-1} \gamma^{2iw} \cdot \mathrm{Var}_{\hat{p}_\pi}(Y_i \mid X_i) \right].
\]
Using the bound from the previous step
\[
\mathrm{Var}_{\hat{p}_\pi}(Y_i \mid X_i) \leq \mathrm{Var}_{p_\pi}(Y_i \mid X_i) + 6 \delta_\pi B_w^2.
\]
Taking expectation over \( \hat{p}_\pi \) on both sides
\[
\mathbb{E}_{\hat{p}_\pi} \left[ \mathrm{Var}_{\hat{p}_\pi}(Y_i \mid X_i) \right]
\leq \mathbb{E}_{\hat{p}_\pi} \left[ \mathrm{Var}_{p_\pi}(Y_i \mid X_i) \right] + 6 \delta_\pi B_w^2.
\]

Now, using the expectation difference bound from Lemma~\ref{lem:tv-expectation} again:
\[
\left| \mathbb{E}_{\hat{p}_\pi}[f] - \mathbb{E}_{p_\pi}[f] \right| \leq 2 \delta_\pi \|f\|_\infty,
\qquad \text{where } \quad f(X_i) := \mathrm{Var}_{p_\pi}(Y_i \mid X_i) \leq B_w^2.
\]
So
\[
\mathbb{E}_{\hat{p}_\pi} \left[ \mathrm{Var}_{p_\pi}(Y_i \mid X_i) \right]
\leq \mathbb{E}_{p_\pi} \left[ \mathrm{Var}_{p_\pi}(Y_i \mid X_i) \right] + 2 \delta_\pi B_w^2.
\]
Combining both components
\[
\mathbb{E}_{\hat{p}_\pi} \left[ \mathrm{Var}_{\hat{p}_\pi}(Y_i \mid X_i) \right]
\leq \mathbb{E}_{p_\pi} \left[ \mathrm{Var}_{p_\pi}(Y_i \mid X_i) \right] + 8 \delta_\pi B_w^2.
\]
Summing across all chunks:
\begin{align*}
\mathbb{E}_{\hat{p}_\pi} \left[ \mathrm{Var}_{\hat{p}_\pi}(\hat{J} \mid X) \right]
&= \sum_{i=0}^{N-1} \gamma^{2iw} \cdot \mathbb{E}_{\hat{p}_\pi} \left[ \mathrm{Var}_{\hat{p}_\pi}(Y_i \mid X_i) \right] \\
&\leq \sum_{i=0}^{N-1} \gamma^{2iw} \left( \mathbb{E}_{p_\pi} \left[ \mathrm{Var}_{p_\pi}(Y_i \mid X_i) \right] + 8 \delta_\pi B_w^2 \right).
\end{align*}
We can split the sum and factor out constants:
\[
\mathbb{E}_{\hat{p}_\pi} \left[ \mathrm{Var}_{\hat{p}_\pi}(\hat{J} \mid X) \right] = \sum_{i=0}^{N-1} \gamma^{2iw} \cdot \mathbb{E}_{p_\pi} \left[ \mathrm{Var}_{p_\pi}(Y_i \mid X_i) \right]
+ 8 \delta_\pi B_w^2 \sum_{i=0}^{N-1} \gamma^{2iw}.
\]

Let us define the chunk-level return variance
\[
\mathbb{E}_{p_\pi} \left[ \mathrm{Var}_{p_\pi}(\hat{J} \mid X) \right]
:= \sum_{i=0}^{N-1} \gamma^{2iw} \cdot \mathbb{E}_{p_\pi} \left[ \mathrm{Var}_{p_\pi}(Y_i \mid X_i) \right].
\]
Therefore
\[
\fbox{%
  $\displaystyle
  \mathbb{E}_{\hat{p}_\pi} \left[ \mathrm{Var}_{\hat{p}_\pi}(\hat{J} \mid X) \right]
  \leq \mathbb{E}_{p_\pi} \left[ \mathrm{Var}_{p_\pi}(\hat{J} \mid X) \right]
  + \frac{8 \delta_\pi B_w^2}{1 - \gamma^{2w}}$.
}
\]

To complete the law of total variance, we now analyze the second term:
\[
\mathrm{Var}_{\hat{p}_\pi}\left( \mathbb{E}_{\hat{p}_\pi}[\hat{J} \mid X] \right)
= \mathrm{Var}_{\hat{p}_\pi}(Z_{\hat{p}}),
\quad
\text{where }\, 
Z_{\hat{p}} := \sum_{k=0}^{N-1} g_k(X_k), \quad
g_k(x) := \mathbb{E}_{\hat{p}_\pi}[Y_k \mid X_k = x].
\]
We define the corresponding ideal (true model) version:
\[
Z_p := \sum_{k=0}^{N-1} \tilde{g}_k(X_k), \qquad
\tilde{g}_k(x) := \mathbb{E}_{p_\pi}[Y_k \mid X_k = x].
\]
Our goal is to bound the variance difference:
\[
\Delta_{\mathrm{mean}} := \mathrm{Var}_{\hat{p}_\pi}(Z_{\hat{p}}) - \mathrm{Var}_{p_\pi}(Z_p)
= (M_{\hat{p}} - M_p) - (m_{\hat{p}} - m_p)(m_{\hat{p}} + m_p),
\]
where \( M_{\hat{p}} := \mathbb{E}_{\hat{p}_\pi}[Z_{\hat{p}}^2] \), \( m_{\hat{p}} := \mathbb{E}_{\hat{p}_\pi}[Z_{\hat{p}}] \), and similarly for \( M_p \), \( m_p \).

Insert and subtract a common term:
\[
m_{\hat p} - m_p
= \sum_{k=0}^{N-1}
\left(
\mathbb{E}_{\hat p_\pi}[g_k(X_k)] - \mathbb{E}_{\hat p_\pi}[\tilde g_k(X_k)]
\right)
+
\sum_{k=0}^{N-1}
\left(
\mathbb{E}_{\hat p_\pi}[\tilde g_k(X_k)] - \mathbb{E}_{p_\pi}[\tilde g_k(X_k)]
\right).
\]
Each term is bounded by \( 2 \delta_\pi B_w \), so $|m_{\hat p} - m_p| \leq 4N \delta_\pi B_w$.

Expand both squares:
\begin{align*}
&Z_{\hat p}^2 = \sum_{k=0}^{N-1} g_k^2(X_k) + 2 \sum_{0 \le k < \ell \le N-1} g_k(X_k) g_\ell(X_\ell), \\
&Z_p^2 = \sum_{k=0}^{N-1} \tilde{g}_k^2(X_k) + 2 \sum_{0 \le k < \ell \le N-1} \tilde{g}_k(X_k) \tilde{g}_\ell(X_\ell).
\end{align*}
Each term (both diagonal and cross terms) is bounded in total variation with sup-norm \( B_w^2 \), yielding
\[
|M_{\hat{p}} - M_p| \le 2N^2 \delta_\pi B_w^2.
\]
From the bound on the means:
\[
|m_{\hat{p}}|, |m_p| \leq N B_w \quad \Rightarrow \quad |m_{\hat{p}} + m_p| \leq 2N B_w.
\]
So, the product term:
\[
|(m_{\hat{p}} - m_p)(m_{\hat{p}} + m_p)| \leq (4N \delta_\pi B_w)(2N B_w) = 8 N^2 \delta_\pi B_w^2.
\]
Combining both:
\[
|\Delta_{\mathrm{mean}}| = \left| \mathrm{Var}_{\hat{p}_\pi}(Z_{\hat{p}}) - \mathrm{Var}_{p_\pi}(Z_p) \right|
\leq 2N^2 \delta_\pi B_w^2 + 8 N^2 \delta_\pi B_w^2
= 10 N^2 \delta_\pi B_w^2,
\]
which yields
\[
\fbox{%
  $\displaystyle
  \left| \mathrm{Var}_{\hat{p}_\pi}\left( \mathbb{E}_{\hat{p}_\pi}[\hat{J} \mid X] \right)
  - \mathrm{Var}_{p_\pi}\left( \mathbb{E}_{p_\pi}[J \mid X] \right)
  \right| \leq 10 \cdot \frac{T^2}{w^2} \cdot \delta_\pi B_w^2$.
}
\]

Combining the two components from the law of total variance, we conclude:
\[
\begin{aligned}
\mathrm{Var}_{\hat{p}_\pi}(\hat{J})
&= \mathbb{E}_{\hat{p}_\pi}\left[ \mathrm{Var}_{\hat{p}_\pi}(\hat{J} \mid X) \right]
+ \mathrm{Var}_{\hat{p}_\pi}\left( \mathbb{E}_{\hat{p}_\pi}[\hat{J} \mid X] \right) \\[5pt]
&\leq \mathbb{E}_{p_\pi} \left[ \mathrm{Var}_{p_\pi}(\hat{J} \mid X) \right]
+ \frac{8 \delta_\pi B_w^2}{1 - \gamma^{2w}}
+ \mathrm{Var}_{p_\pi} \left( \mathbb{E}_{p_\pi}[J \mid X] \right)
+ 10 \left( \frac{T}{w} \right)^2 \delta_\pi B_w^2   \\[5pt]
&= \mathrm{Var}_{p_\pi}(J)
+ 10 \left( \frac{T}{w} \right)^2 \delta_\pi B_w^2
+ \frac{8 \delta_\pi B_w^2}{1 - \gamma^{2w}}.
\end{aligned}
\]
By Lemma \ref{lem:1},
\[
\fbox{%
  $\displaystyle
  \mathrm{Var}_{\hat{p}_\pi}(\hat{J})
  \leq
  \mathrm{Var}_{p_\pi}(J)
  + 10 \left( \frac{T}{w} \right)^2 B_w^2 \kappa^w \delta_\beta 
  + \frac{8 B_w^2}{1 - \gamma^{2w}}\kappa^w\delta_\beta $,
}
\]
and the proof is complete.
\end{proof}

\subsection{Proof of the Bias-Variance Decomposition (Theorem \ref{thm:main})}

Finally, we can bound the mean squared error of \acronym.
\begin{apptheorem}
\label{app:corr-mse-bound}
    Under Assumption \ref{sec:app_boundlikelihood} and \ref{sec:app_assumption_modelfit}, and using the notation of Theorem \ref{app:thm-bias} and Theorem \ref{app:thm-variance}, the mean squared error of \acronym~is bounded by
    \begin{equation*}
        \mathbb{E}_{\hat{p}_\pi}\left[(\hat{J} - J(\pi))^2 \right] \leq \left(\frac{2 B_w}{1 - \gamma^w} \kappa^w  \delta_\beta\right)^2 +        
          10 \left( \frac{T}{w} \right)^2 B_w^2 \kappa^w \delta_\beta 
          + \frac{8 B_w^2}{1 - \gamma^{2w}}\kappa^w\delta_\beta  + \mathrm{Var}_{p_\pi}(J).
    \end{equation*}
\end{apptheorem}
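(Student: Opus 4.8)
The plan is to invoke the elementary bias--variance decomposition of the mean squared error and then substitute the two bounds already established in Theorem \ref{app:thm-bias} and Theorem \ref{app:thm-variance}. Writing $\mu := \mathbb{E}_{\hat{p}_\pi}[\hat{J}]$ for the mean of the estimator under the guided model, I would first split
\[
\hat{J} - J(\pi) = \bigl(\hat{J} - \mu\bigr) + \bigl(\mu - J(\pi)\bigr),
\]
square both sides, and take the expectation under $\hat{p}_\pi$. The cross term $2\bigl(\mu - J(\pi)\bigr)\,\mathbb{E}_{\hat{p}_\pi}[\hat{J} - \mu]$ vanishes because $\mu - J(\pi)$ is a constant and $\mathbb{E}_{\hat{p}_\pi}[\hat{J} - \mu] = 0$ by the definition of $\mu$. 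This yields the exact identity
\[
\mathbb{E}_{\hat{p}_\pi}\bigl[(\hat{J} - J(\pi))^2\bigr] = \underbrace{(\mu - J(\pi))^2}_{\mathrm{Bias}^2} + \underbrace{\mathrm{Var}_{\hat{p}_\pi}(\hat{J})}_{\mathrm{Variance}}.
\]

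Next I would bound the two summands separately. For the squared bias, the corollary to Theorem \ref{app:thm-bias} gives $|\mu - J(\pi)| \le \frac{2 B_w}{1-\gamma^w}\kappa^w \delta_\beta$; since both sides are nonnegative, squaring preserves the inequality and produces the first term of the claimed bound, namely $\bigl(\tfrac{2 B_w}{1-\gamma^w}\kappa^w \delta_\beta\bigr)^2$. For the variance, the bound established in Theorem \ref{app:thm-variance} supplies
\[
\mathrm{Var}_{\hat{p}_\pi}(\hat{J}) \le \mathrm{Var}_{p_\pi}(J) + 10\Bigl(\tfrac{T}{w}\Bigr)^2 B_w^2 \kappa^w \delta_\beta + \frac{8 B_w^2}{1-\gamma^{2w}}\kappa^w\delta_\beta,
\]
which accounts for the remaining three terms. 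Adding the squared-bias bound to the variance bound reproduces the stated MSE inequality term for term, completing the argument.

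Since both halves have already been proven in full, there is essentially no analytical obstacle at this final step; the work reduces to careful bookkeeping. The one point that genuinely requires attention is that the decomposition must be taken consistently under $\hat{p}_\pi$ --- the same measure under which the estimator is actually sampled --- so that the mean-zero cross term truly cancels and the resulting variance term is exactly the quantity $\mathrm{Var}_{\hat{p}_\pi}(\hat{J})$ controlled in Theorem \ref{app:thm-variance}. I would also flag the small numerical consistency issue that the last summand of the variance bound should be instantiated with the constant $8$ (the value in the finalized bound of Theorem \ref{app:thm-variance}) rather than any intermediate constant, since this is the coefficient that appears in the target MSE expression.
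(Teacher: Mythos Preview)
Your proposal is correct and follows essentially the same route as the paper: perform the standard bias--variance decomposition under $\hat{p}_\pi$ (so the cross term vanishes), then substitute the bias bound from Theorem~\ref{app:thm-bias} (via its corollary) and the variance bound from Theorem~\ref{app:thm-variance}. The paper's proof is identical in structure and content.
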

\begin{proof}
    We start by adapting the standard bias-variance decomposition to our setting:
    \begin{align*}
        \mathbb{E}_{\hat{p}_\pi}\left[(\hat{J} - J(\pi))^2 \right] 
        &= \mathbb{E}_{\hat{p}_\pi}\left[(\hat{J} - \mathbb{E}_{\hat{p}_\pi}[\hat{J}] + \mathbb{E}_{\hat{p}_\pi}[\hat{J}]- J(\pi))^2 \right] \\
        &= \mathbb{E}_{\hat{p}_\pi}\left[(\hat{J} - \mathbb{E}_{\hat{p}_\pi}[\hat{J}])^2\right] 
        + \mathbb{E}_{\hat{p}_\pi}\left[(\mathbb{E}_{\hat{p}_\pi}[\hat{J}]- J(\pi))^2 \right] \\
        &\qquad+ \mathbb{E}_{\hat{p}_\pi}\left[(\hat{J} - \mathbb{E}_{\hat{p}_\pi}[\hat{J}]) (\mathbb{E}_{\hat{p}_\pi}[\hat{J}]- J(\pi))\right] \\
        &= \mathrm{Var}_{\hat{p}_\pi}(\hat{J}) + \mathrm{Bias}_{\hat{p}_\pi}(\hat{J})^2 + (\mathbb{E}_{\hat{p}_\pi}[\hat{J}]- J(\pi)) (\mathbb{E}_{\hat{p}_\pi}[\hat{J} - \mathbb{E}_{\hat{p}_\pi}[\hat{J}]]) \\
        &= \mathrm{Var}_{\hat{p}_\pi}(\hat{J}) + \mathrm{Bias}_{\hat{p}_\pi}(\hat{J})^2,
    \end{align*}
    since the last term is zero. Plugging in the bounds of Theorems \ref{app:thm-bias} and \ref{app:thm-variance} completes the proof.
\end{proof}

\section{Pseudocode}
\label{sec:app-pseudocode}
A high-level pseudocode of conditional diffusion model training in \acronym~is provided as Algorithm \ref{alg:dope-train}. A pseudocode of the off-policy evaluation subroutine for a single rollout is provided as Algorithm \ref{alg:dope}. Empirically, we have found that per-term normalization of the guidance function (line 9) resulted in more consistent performance, and allowed the guidance coefficients $\alpha$ and $\lambda$ to be more easily tuned.

\begin{algorithm}[!htb]
    \caption{Conditional Diffusion Model Training in \acronym}
    \label{alg:dope-train}
    \begin{algorithmic}[1]
        \Require diffusion model $\epsilon_\theta(\tau, k | s)$, behavior data $\mathcal{D}_\beta$, $w \geq 0$, learning rate $\eta > 0$, $\lbrace \sigma_k \rbrace_{k=1}^K$ and $\lbrace \alpha_k \rbrace_{k=1}^K$ positive
        \State $\bar{\alpha}_k \gets \prod_{t=1}^k \alpha_t$ \textbf{for} $k = 1 \dots K$
        \State \textbf{initialize} $\theta$ randomly
        \Repeat
            \State \textbf{sample} length-$w$ sub-trajectory $\tau^0 = (s_0, a_0, s_1, \dots s_w)$ from $\mathcal{D}_\beta$
            \State \textbf{sample} $k \sim \mathrm{Uniform}(\lbrace 1, \dots K\rbrace)$\Comment{Sample denoising time step $k$}
            \State \textbf{sample} $\epsilon \sim \mathcal{N}(0, I)$\Comment{Sample pure noise sub-trajectory}
            \State $\nabla_\theta \mathcal{L}(\theta) \gets \nabla_\theta \| \epsilon - \epsilon_\theta(\sqrt{\bar{\alpha_k}} \tau^0 + \sigma_k \epsilon, k | s_0) \|^2$\Comment{Gradient descent step on $\theta$}
            \State $\theta \gets \theta - \eta \nabla_\theta \mathcal{L}(\theta)$
        \Until{converged}
        \State \Return $\epsilon_\theta$
    \end{algorithmic}
\end{algorithm}

\begin{algorithm}[!htb]
\caption{Off-Policy Evaluation in \acronym}
\label{alg:dope}
\begin{algorithmic}[1]
\Require diffusion model $\epsilon_\theta(\tau, k | s)$ (Algorithm \ref{alg:dope-train}), empirical reward function $\hat{R}(s, a)$, behavior policy $\beta(a | s)$, target policy $\pi(a | s)$, $\alpha \geq 0$, $\lambda \geq 0$, $w \geq 0$ (divides $T$), $\lbrace \sigma_k \rbrace_{k=1}^K$ and $\lbrace \alpha_k \rbrace_{k=1}^K$ positive
\State $\hat{J} \gets 0$
\State \textbf{sample} $s_0^0 \sim d_0$\Comment{Sample initial state}
\For{$t = 0$ \textbf{to} $T / w - 1$}\Comment{Generation for decision epochs $wt$ to $w(t+1)$}
    \State \textbf{sample} $\tau_{wt:w(t+1)}^K \sim \mathcal{N}(0, I)$\Comment{Sample pure noise sub-trajectory}
    \For{$k = K$ \textbf{to} $1$}\Comment{Denoising step $k$}
        \State $\mu_{k-1} \gets \frac{1}{\sqrt{\alpha_k}}\left(\tau_{wt:w(t+1)}^{k} - \frac{1 - \alpha_k}{\sigma_k} \epsilon_\theta(\tau_{wt:w(t+1)}^{k}, k \,|\, s_{wt}^0) \right)$\Comment{Mean of diffusion}
        \State $g_k^\pi \gets \sum_{u=wt}^{w(t+1)-1} \nabla_\tau \log \pi(a_u^{k} | s_u^{k})$\Comment{Compute $\pi$ guidance term} 
        \State $g_k^\beta \gets \sum_{u=wt}^{w(t+1)-1} \nabla_\tau  \log \beta(a_u^{k} | s_u^{k})$\Comment{Compute $\beta$ guidance term}
        \State $g_k \gets \alpha (g_k^\pi/\|g_k^\pi\|_2) - \lambda (g_k^\beta/\|g_k^\beta\|_2)$ \Comment{Compute normalized guidance}
        \State \textbf{sample} $\tau_{wt:w(t+1)}^{k-1} \sim \mathcal{N}\left(\mu_k + \sigma_k^2 g_{k}, \sigma_k^2 I \right)$\Comment{Apply guided diffusion step}
    \EndFor
    \State $\hat{J} \gets \hat{J} + \sum_{u=wt}^{w(t+1)-1} \gamma^u \hat{R}(s_u^0, a_u^0)$\Comment{Update $\pi$ return using denoised $\tau_{wt:w(t+1)}^0$}
\EndFor
\State \Return $\hat{J}$
\end{algorithmic}
\end{algorithm}

\section{Domains}
\label{sec:app-domains}

We include experiments on the medium datasets from the D4RL offline suite \citep{fu2020d4rl}, and Pendulum and Acrobot domains from the OpenAI Gym suite \citep{brockman2016openai}. We set the evaluation horizon to $T = 768$ for D4RL, $T = 256$ for Acrobot and $T = 196$ for Pendulum, and we use $\gamma = 0.99$ in all experiments. Furthermore, Acrobot uses a discrete action space and is incompatible with our method, so we modified the domain to take continuous actions. Table \ref{tab:domain-details} summarizes the key properties of each domain.

\begin{table}[!htb]
    \centering
    \begin{tabular}{c|ccccc}
            \toprule
         \textbf{Description} & \textbf{Hopper} & \textbf{Walker} & \textbf{HalfCheetah} & \textbf{Pendulum} & \textbf{Acrobot} \\
            \midrule
         state dimension & 11 & 17 & 17 & 3 & 6 \\
         action dimension & 3 & 6 & 6 & 1 & 3 \\
         range of action & $[-1, 1]$ & $[-1, 1]$ & $[-1, 1]$ & $[-2, 2]$ & $[-1, 1]$ \\
         rollout length $T$ & 768 & 768 & 768 & 196 & 256 \\
         discount factor $\gamma$ & 0.99 & 0.99 & 0.99 & 0.99 & 0.99 \\
         \bottomrule
    \end{tabular}
    \caption{Properties of D4RL \citep{fu2020d4rl} and OpenAI Gym \citep{brockman2016openai} benchmark problems.}
    \label{tab:domain-details}
\end{table}

\section{Policies}
\label{sec:app-policies}

\paragraph{D4RL Offline Suite} Behavior and target policies and their trained procedures are described in \cite{GooglePaper}, and the policy parameters are borrowed from the official repository at \url{https://github.com/google-research/deep_ope} (Apache 2.0 licensed). The 10 target policies of varying ability, $\pi_{\theta_1}, \pi_{\theta_2}, \dots \pi_{\theta_{10}}$, are obtained by checkpointing the policy parameters $\theta_1, \theta_2 \dots \theta_{10}$ at various points during training. Each target policy network models the action probability distribution $\pi_i(a | s)$ using a set of independent Gaussian distributions, predicting the mean and variance $(\mu_i, \sigma_i^2)$ of each action component $a_i$ independently. This allows the score function of the target policy to be easily computed. As discussed in the main text, all policies are derived from the medium datasets in all experiments.

\paragraph{OpenAI Gym} We model target policies $\pi_1, \pi_2 \dots \pi_{5}$ as MLPs and train them in each environment following the Twin-Delayed DDPG (TD3) \citep{dankwa2019twin} algorithm. The total training time is set to 50000 steps, and we checkpoint policies every 5000 steps. The behavior policy is set to the target policy $\pi_{3}$. The complete list of hyper-parameters is provided in Table \ref{tab:gym-policy-parameters}.

\begin{table}[!htb]
    \centering
    \begin{tabular}{c|c}
            \toprule
         \textbf{Description} & \textbf{Value} \\
            \midrule
         number of hidden layers in actor and critic & 2 \\
         number of neurons per layer in actor and critic & 256 \\
         hidden activation function & ReLU \\
         output activation function & $\tanh$ \\
         Gaussian noise for exploration & $0.1$ \\
         noise added to target policy during critic update & $0.2$ \\
         target noise clipping & $0.5$ \\
         frequency of delayed policy updates & $2$ \\
         moving average of target $\theta'$ & $0.005$ \\
         learning rate of Adam optimizer & $0.0003$ \\
         batch size & 256      \\ 
         replay buffer size & 1000000\\
         \bottomrule
    \end{tabular}
    \caption{Hyper-parameters for training target policies on OpenAI Gym domains.}
    \label{tab:gym-policy-parameters}
\end{table}

\paragraph{Bounded Action Space} Since the action spaces for all domains are compact bounded intervals, we need to restrict the action space of the policy networks during evaluation. We accomplish this by applying the $\tanh$ transformation to each Gaussian action distribution and then scaling the result to the required range. Note that this transformation constrains the action probability distribution of all policies to a bounded range, and thus satisfies the requirement of Assumption \ref{a1}.

\section{Baselines}
\label{sec:app-baselines}

The following model-free baseline methods were chosen for empirical comparison with \acronym:

\paragraph{Fitted Q-Evaluation (FQE)} \cite{le2019batch} evaluates a target policy $\pi$ by estimating its Q-value function $Q_\theta(s,a)$ using a neural network. The loss function for $\theta$ is
\begin{equation*}
    \mathcal{L}_{FQE}(\theta) = \mathbb{E}_{\substack{(s,a,r,s') \sim \mathcal{D}_\beta, \\ a^\prime\sim\pi(\cdot | s')}}\left[\left(Q_\theta(s,a) - r - \gamma Q_{\theta}(s', a')\right)^2\right].
\end{equation*}
We follow \citep{mnih2015human,kostrikov2020statistical} and learn a target Q-network $Q_{\theta'}(s,a)$ in parallel for added stability. We use the AdamW algorithm \citep{loshchilov2018decoupled} for optimizing the loss function in a minibatched setting, with gradient clipping applied to limit the norm of each gradient update to 1. The complete list of hyper-parameters used is provided in Table \ref{tab:fqe-parameters}.

\begin{table}[!htb]
    \centering
    \resizebox{\textwidth}{!}{
    \begin{tabular}{c|ccccc}
            \toprule
         \textbf{Description} & \textbf{Hopper} & \textbf{Walker} & \textbf{HalfCheetah} & \textbf{Pendulum} & \textbf{Acrobot} \\
            \midrule
         number of hidden layers & 2 & 2 & 2 & 2 & 2 \\
         number of neurons per layer & 500 & 500 & 500 & 256 & 100 \\
         hidden activation function & sigmoid & sigmoid & sigmoid & sigmoid & sigmoid \\
         learning rate of AdamW optimizer & $0.001$ & $0.003$ & $0.00003$ & $0.003$ & $0.001$ \\
         moving average of target $\theta'$ & $0.05$ & $0.05$ & $0.001$ & $0.005$ & $0.05$ \\ 
         training epochs (passes over data set) & 100 & 50 & 70 & 100 & 200 \\
         batch size & 512 & 256 & 256 & 128 & 512   \\ 
         \bottomrule
    \end{tabular}}
    \caption{Hyper-parameters for Fitted Q-Evaluation (FQE).}
    \label{tab:fqe-parameters}
\end{table}

\paragraph{Doubly Robust (DR)} \cite{jiang2016doubly, thomas2016data} leverages both importance sampling and value function estimation to construct a combined estimate that is accurate when either one of the individual estimates is correct. First, we define an estimate $\hat{Q}(s,a)$ of the Q-value function of policy $\pi$, and let $\hat{V}(s) = \mathbb{E}_{a \sim \pi(\cdot | s)}[\hat{Q}(s,a)]$ be the corresponding value estimate. We also define $\rho_t = \frac{\pi(a_t | s_t)}{\beta(a_t|s_t)}$ as the policy ratio at step $t$. Then, the DR estimator is defined recursively as
\begin{equation*}
    V_{DR}^{t+1} = \hat{V}(s_t) + \rho_t \left(r_t + \gamma V_{DR}^{t} - \hat{Q}(s_t,a_t) \right),
\end{equation*}
such that the policy value estimate $\hat{J}_{DR}(\pi) = V_{DR}^0$. We parameterize both $\hat{Q}(s,a)$ and $\hat{V}(s)$ as MLPs and train them using AdamW in a mini-batched setting. Similar to FQE, we also update a target value network to improve convergence. The full list of hyper-parameters is provided in Table \ref{tab:dr-parameters}.

\begin{table}[!htb]
    \centering
    \resizebox{\textwidth}{!}{
    \begin{tabular}{c|ccccc}
            \toprule
         \textbf{Description} & \textbf{Hopper} & \textbf{Walker} & \textbf{HalfCheetah} & \textbf{Pendulum} & \textbf{Acrobot} \\
            \midrule
         number of hidden layers & 2 & 2 & 2 & 2 & 2 \\
         number of neurons per layer & 500 & 500 & 500 & 256 & 100 \\
         hidden activation function & sigmoid & sigmoid & sigmoid & sigmoid & sigmoid \\
         learning rate of AdamW optimizer & $0.0003$ & $0.003$ & $0.003$ & $0.003$ & $0.00003$ \\
         moving average of target $\theta'$ & $0.05$ & $0.05$ & $0.05$ & $0.05$ & $0.001$ \\ 
         training epochs (passes over data set) & 50 & 50 & 50 & 100 & 100 \\
         batch size & 32 & 256 & 512 & 256 & 128   \\ 
         \bottomrule
    \end{tabular}}
    \caption{Hyper-parameters for Doubly Robust (DR) estimation.}
    \label{tab:dr-parameters}
\end{table}

\paragraph{Importance Sampling (IS)} \citep{precup2000eligibility} evaluates the target policy by importance weighting the full trajectory returns in the behavior dataset, i.e.
\begin{equation*}
    \hat{J}_{IS}(\pi) = \mathbb{E}_{\tau \sim p_\beta}\left[\left( \prod_{t=0}^{T-1} \frac{\pi(a_t|s_t)}{\beta(a_t|s_t)}\right) \sum_{t=0}^{T-1}\gamma^t R(s_t,a_t) \right].
\end{equation*}
It requires access to the target and behavior policy probabilities in order to compute the weighting. Specifically, we use the \emph{per-decision} variant of IS (PDIS), i.e.
\begin{equation*}
    \hat{J}_{PDIS}(\pi) = \mathbb{E}_{\tau \sim p_\beta}\left[\sum_{t=0}^{T-1} \gamma^t \left(\prod_{u=0}^t \frac{\pi(a_u|s_u)}{\beta(a_u|s_u)} \right) R(s_t,a_t) \right],
\end{equation*}
which has lower variance than IS.

\paragraph{Density Ratio Estimation (DRE)}
\cite{mousaviblack2020} estimates the ratio $w(s,a) = d^\pi(s,a) / d^\beta(s,a)$ of the discounted state-action occupancies of the target policy $\pi$ relative to the behavior policy $\beta$. The \emph{discounted state-action occupancy} of a policy $\mu \in \lbrace \beta, \pi \rbrace$ is defined as
\begin{equation*}
    d^\mu(s,a) = \lim_{T\to \infty} \frac{\sum_{t=0}^T \gamma^t p(s_t = s, \, a_t = a \,|\, \mu)}{\sum_{t=0}^T \gamma^t},
\end{equation*}
where $p(s_t = s, \, a_t = a \,|\, \mu)$ indicates the probability of sampling state-action pair $(s, a)$ from $\mu$ at time step $t$. We also tested the variants of DICE \citep{yang2020off} but found their performance to be unsatisfactory, so they have been omitted from the study. The target policy value is estimated as 
\begin{equation*}
    \hat{J}(\pi) = \frac{1}{1 - \gamma} \mathbb{E}_{(s,a,r) \sim \mathcal{D}_\beta}[w(s,a) \cdot r].
\end{equation*}
$w(s,a)$ is parameterized as a feedforward neural network and its parameters are trained using Adam in a mini-batched setting. Fixed hyper-parameters necessary to reproduce the experiment are listed in Table \ref{tab:sde-parameters}. Additionally, since the method requires a kernel function to be specified, we use a Gaussian kernel $k(x,x') = \exp{(-\eta \| x - x'\|^2)}$, where $x$ and $x'$ are concatenations of the (standardized) state and action vectors. Since this requires setting a kernel bandwidth $\eta > 0$ which affects the overall performance significantly, we run this baseline for different values $\eta \in \lbrace 0.01, 0.1, 1, 10, 100 \rbrace$ and report the best performing result (according to log-RMSE).

\begin{table}[!htb]
    \centering
    \begin{tabular}{c|c}
            \toprule
         \textbf{Description} & \textbf{Value} \\
            \midrule
         number of hidden layers of $w(s,a)$ & 2 \\
         number of neurons per layer of $w(s,a)$ & 256 \\
         hidden activation function & Leaky ReLU \\
         output activation function & SoftPlus \\
         learning rate of Adam optimizer & $0.001$ \\
         training epochs (passes over the data set) & 20 (D4RL), 200 (Gym) \\
         batch size & 512      \\ 
         \bottomrule
    \end{tabular}
    \caption{Hyper-parameters for Density Ratio Estimation (DRE) \citep{mousaviblack2020}.}
    \label{tab:sde-parameters}
\end{table}


The following model-based baseline methods were also chosen for empirical comparison with \acronym. They were chosen to determine the benefits of \acronym~compared to fully autoregressive sampling, i.e. $w = 1$, and non-autoregressive sampling, i.e. $w = T$.

\paragraph{Model-Based (MB)} \citep{jiang2016doubly,voloshin1empirical} consists of learning dynamics $\hat{P}(s'|s,a)$, reward function $\hat{R}(s,a)$ and termination function $\hat{D}(s)$ trained on the behavior dataset to directly approximate the data-generating distribution of the target policy, $p_\pi(\tau)$. $\hat{P}$ directly predicts the next state $s'$ given the current state $s$ and action $a$. Both $\hat{P}$ and $\hat{R}$ can be found by solving a standard nonlinear regression problem, and $\hat{D}$ can be found by solving a binary classification problem trained on termination flags in the behavior dataset. We parameterize all functions as nonlinear MLPs and obtain their optimal parameters using Adam in a mini-batched setting. Once we obtain their optimal parameters, we estimate the target policy return by generating 50 length-$T$ rollouts from the estimated model, and average their empirical cumulative returns. The necessary hyper-parameters are described in Table \ref{tab:mbrl-parameters}.

\begin{table}[!htb]
    \centering
    \begin{tabular}{c|c}
            \toprule
         \textbf{Description} & \textbf{Value} \\
            \midrule
         number of hidden layers & 3 \\
         number of neurons per layer & 500 \\
         hidden activation function & ReLU \\
         learning rate of Adam optimizer & $0.0003$ \\
         training epochs (passes over data set) & 100 \\
         batch size & 1024   \\ 
         \bottomrule
    \end{tabular}
    \caption{Hyper-parameters for Model-Based (MB) estimation.}
    \label{tab:mbrl-parameters}
\end{table}

\paragraph{Policy-Guided Diffusion (PGD)} \citep{jackson2024policy} takes a generative approach by simulating target policy trajectories using a guided diffusion model. We follow the original implementation by training a diffusion model on the behavior data, using the official implementation located at \url{https://github.com/EmptyJackson/policy-guided-diffusion} (MIT licensed). We then generate 50 full-length trajectories from the model using guided diffusion \citep{janner2022planning} with the guidance function $g_{simple}(\tau)= \nabla_\tau \sum_{t} \log \pi(a_t | s_t)$, using which we estimate the empirical return of the target policy. All hyper-parameters for training the diffusion models are fixed as per the original paper and codebase (see Appendix A therein for details). However, we found that the policy guidance coefficient $\alpha$ and guidance normalization both have significant effects on performance, thus we ran PGD for different choices of $\alpha \in \lbrace 0.001, 0.01, 0.1, 1.0, 10, 100, 1000 \rbrace$ with and without guidance normalization, and report the best performing result (according to log-RMSE). 

\section{Metrics}
\label{sec:app-metrics}

Let $\pi_1, \dots \pi_{10}$ be the target policies, $\hat{J}_1(\pi_i), \hat{J}_2(\pi_i), \dots \hat{J}_5(\pi_i)$ be the estimates of the target policy values across the 5 seeds, and $J(\pi_1), \dots J(\pi_{10})$ be the target policy values estimated using 300 rollouts collected by running the target policies in the environments.

The following metrics were used to quantify and compare the performance of \acronym~and all metrics:

\paragraph{Log Root Mean Squared Error (LogRMSE)} 
This is defined as the log root mean squared error using the estimates $\hat{J}_j(\pi_1), \dots \hat{J}_j(\pi_{10})$ and the ground truth returns $J(\pi_1), \dots \dots J(\pi_{10})$, averaged across seeds $j =1\dots 5$. Mathematically,
\begin{equation*}
    \frac{1}{5} \sum_{j=1}^5 \log \sqrt{\frac{1}{10} \sum_{i=1}^{10}(\hat{J}_j(\pi_i) - J(\pi_i))^2}.
\end{equation*}

\paragraph{Spearman (Rank) Correlation} 
This is defined as the Spearman correlation \citep{spearman1904proof} between the estimates $\hat{J}_j(\pi_1), \dots \hat{J}_j(\pi_{10})$ and the ground truth returns $J(\pi_1), \dots \dots J(\pi_{10})$, averaged across seeds $j =1\dots 5$.

\paragraph{Regret@1} 
This is defined as the absolute difference in return between the best policy selected using the baseline policy returns $\hat{J}_j(\pi_i)$ and the policy selected according to the ground truth estimates $J(\pi_i)$, averaged across seeds $j =1\dots 5$, i.e:
\begin{equation*}
    \frac{1}{5} \sum_{j=1}^5 \left|J(\pi_{i_j^{max}}) - \max_{i=1\dots 10} J(\pi_i) \right|, \qquad \mathrm{where}\quad i_j^{max} = {\arg\!\max}_{i=1\dots 10}\, \hat{J}_j(\pi_i).
\end{equation*}

\paragraph{Normalization} In order to compare metrics consistently across environments, we follow \cite{GooglePaper} and use the normalized policy values:
\begin{equation*}
    \frac{ \hat{J}_j(\pi_i) - V_{min}}{V_{max}-V_{min}}, \qquad \mathrm{where}\quad V_{min} = \min_{i} J(\pi_i), \quad V_{max} = \max_{i} J(\pi_i),
\end{equation*}
where $V_{min}$ and $V_{max}$ are the minimum and maximum target policy values, respectively.

\paragraph{Error Bars} All tables and figures report error bars defined as +/- one standard error, i.e. $\hat{\sigma} / \sqrt{n}$ where $\hat{\sigma}$ is the empirical standard deviation of each metric value across seeds and $n$ is the number of seeds (fixed to 5 for all experiments).

\section{\acronym~Training and Hyper-Parameter Details}
\label{app:stitch-hparams}

We follow the configuration used in \cite{janner2022planning} for training the diffusion model, including architecture, optimizer, and noise schedule. Specifically, we parameterize the diffusion process $\epsilon$ as a UNet architecture with residual connections \citep{ronneberger2015u}, trained with a cosine learning rate schedule \citep{loshchilov2022sgdr}. The list of training hyper-parameters is provided in Table \ref{tab:stitch-parameters}. The reward predictor $\hat{R}(s,a)$ is a two-layer MLP with ReLU activations and 32 neurons per hidden layer, and is trained using Adam with a learning rate of $0.001$ and batch size of 64. 

\begin{table}[!htb]
    \centering
    \begin{tabular}{c|c}
            \toprule
         \textbf{Description} & \textbf{Value} \\
            \midrule
         diffusion architecture & UNet \\
         learning rate of Adam optimizer & $0.0003$ \\
         training epochs (passes over the data set) & 150 \\
         batch size & 128      \\ 
         training steps per epoch & 5000 (D4RL), 2000 (Gym) \\
         guidance coefficient for $\pi$, i.e. $\alpha$ & 0.5 (D4RL), 0.1 (Gym) \\
         guidance coefficient ratio for $\beta$ , i.e. $\frac{\lambda}{\alpha}$ & 0.5 (D4RL), 1 (Gym) \\
         window size of sub-trajectories, i.e. $w$ & 8 (D4RL), 16 (Gym)\\
         \bottomrule
    \end{tabular}
    \caption{Hyper-parameters for \acronym.}
    \label{tab:stitch-parameters}
\end{table}

\paragraph{Guidance Coefficients} For Gym domains, we use $\alpha = \lambda = 1$, corresponding to the theoretically justified guidance function in Equation~\ref{eq:dope-p-pi}, assuming low distribution shift. For D4RL tasks, we use tempered values $\alpha = 0.5$ and $\lambda = 0.25$ to improve sample stability and regularization, which we found empirically helpful in higher-dimensional settings.

\paragraph{Sub-Trajectory Length} We use $w = 16$ for Gym domains and $w = 8$ for most D4RL tasks. For HalfCheetah, we reduce to $w = 4$ due to the environment’s fast dynamics, which caused degradation in stitching fidelity with longer sub-trajectories. Also to add stability for cheetah, we set the clip denoised flag to True during the backward diffusion process.

\section{Diffusion Policy Training and Evaluation}
\label{sec:app-diffusion-policy}

We follow \cite{wang2023diffusion} and parameterize each target policy $\pi_i', \, i = 1\dots 10$ as a conditional diffusion model $\epsilon_{\phi_i}(a^k,k|s)$, whose parameters $\phi_i$ are learned by optimizing the behavior cloning objective (compare with (\ref{eq:diffusion-loss}))
\begin{equation*}
\begin{aligned}
    \mathcal{L}(\phi_i) = \mathbb{E}_{k, \,\epsilon \sim \mathcal{N}(0, I), \,s \sim \mathcal{D}_\beta,\, a \sim \pi_i(\cdot | s)}\left[\|\epsilon - \epsilon_{\phi_i}(a^k, k|s)\|^2\right].
\end{aligned}    
\end{equation*}

In order to use the fine-tuned $\epsilon_{\phi_i}(a^k, k | s)$ as a guidance function for off-policy evaluation in \acronym, we use the following equivalence between score-based models and denoising diffusion \citep{dhariwal2021diffusion} (extended trivially to the conditional setting)
\begin{equation*}
    \nabla_{a} \log \pi_i'(a | s)|_{a=a^k} = -\frac{\epsilon_{\phi_i}(a^k, k | s)}{\sigma_k}.
\end{equation*}
Specifically, this expression cannot be calculated at $k = 0$ since $\sigma_0 = 0$ using the standard parameterization of diffusion models, so we approximate it at $k = 1$ and use the resulting gradient in \acronym.

We implement the diffusion model using the CleanDiffuser package \citep{cleandiffuser} with official repository at \url{https://github.com/CleanDiffuserTeam/CleanDiffuser} (Apache 2.0 licensed). To train the diffusion policies, we first generate rollouts from each of the pre-trained target policies in D4RL \citep{GooglePaper}, and then minimize the behavior cloning objective $\mathcal{L}(\phi_i)$ above to obtain the diffusion policy parameters. The list of relevant hyper-parameters is provided in Table \ref{tab:diffusion-policy-parameters}.

\begin{table}[!htb]
    \centering
    \begin{tabular}{c|c}
            \toprule
         \textbf{Description} & \textbf{Value} \\
            \midrule
        embedding dimension & 64 \\
        hidden layer dimension & 256\\ 
         learning rate & $0.0003$ \\
         diffusion time steps & 32 \\
         EMA rate & $0.9999$ \\
         total training steps & 10000 \\
         number of transitions to generate for each dataset & 1000000 \\
         training batch size & 256      \\
         \bottomrule
    \end{tabular}
    \caption{Hyper-parameters for training diffusion policies.}
    \label{tab:diffusion-policy-parameters}
\end{table}

\section{Additional Experiments}
\label{sec:app-ablations}

\subsection{Sensitivity to Guidance Coefficients $\alpha$ and $\lambda$}
\label{sec:app-sensitivity-guidance}

We evaluate \acronym~across different choices of the guidance coefficients $\alpha$ and $\lambda$, and plot the resulting trends in Figure \ref{fig:hopper_landscape} for Hopper and Figure \ref{fig:walker_landscape} for Walker2D. Each plot is generated by applying bicubic interpolation to the grid evaluations of the Spearman correlation and LogRMSE. The optimal coefficient values of $\alpha$ and $\lambda$ remain consistent across environments. The optimal balance for off-policy evaluation is attained by assigning a moderate coefficient for the target policy score $\alpha$ (i.e. $\alpha < 1$) and a smaller but positive coefficient to the behavior policy score $\lambda$, i.e. $0 < \lambda < \alpha$. Recall that $\lambda$ controls the amount of distribution shift we are willing to accept during guided trajectory generation. $\lambda = 1$ is theoretically unbiased, but potentially under-regularized and leads to dynamically infeasible (high-variance) samples. Meanwhile, $\lambda = 0$ is often over-regularized and leads to trajectories that are heavily biased towards the behavior policy $p_\beta(\tau)$. From the plots, we see that a moderate amount of regularization is optimal (around 25\% of the value of $\alpha$), which is consistent with regularization in supervised machine learning (i.e., regression).

\begin{figure}[!htb]
    \centering
    \includegraphics[width=\textwidth]{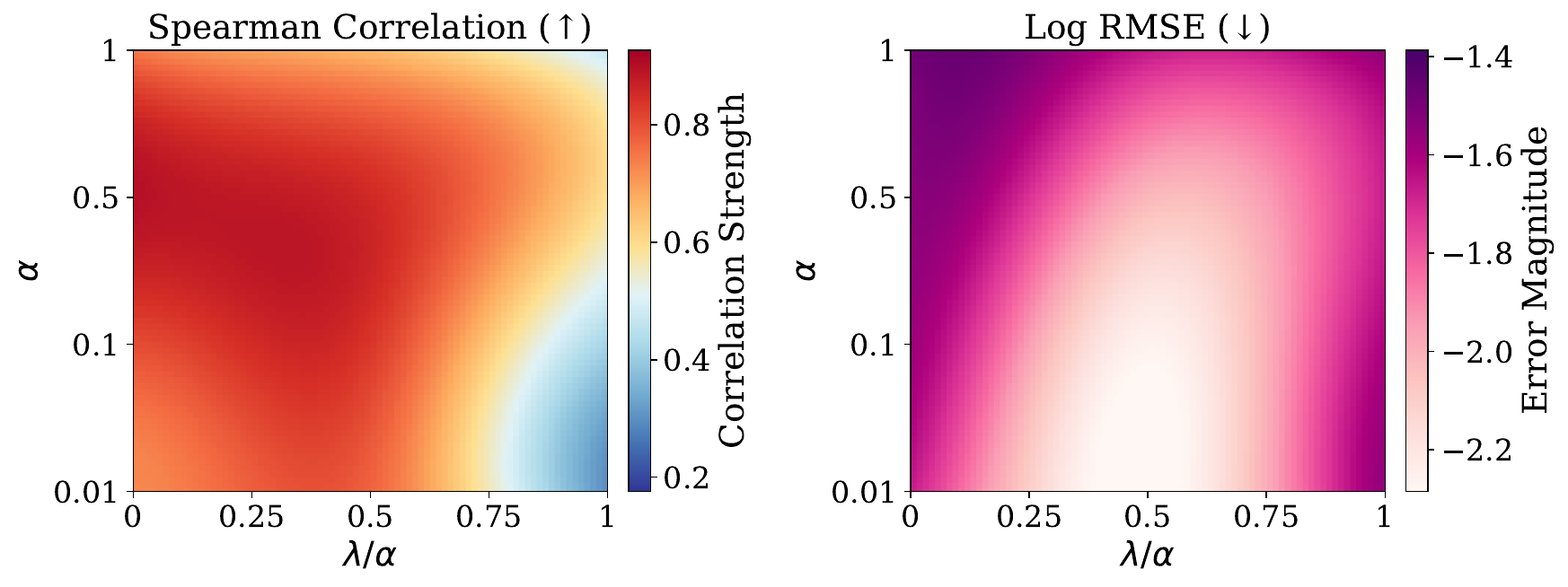}
    \caption{Smoothed performance landscape for Hopper. \textbf{Left:} Spearman correlation is largest around $\alpha \in [0.1, 0.5], \, \lambda \leq 0.5\alpha$. \textbf{Right:} The LogRMSE is smallest around $\alpha \in [0.01, 0.5],\, \lambda  \in [0.25\alpha, 0.75\alpha]$. These results confirm the optimal range of $\lambda$ is $0 < \lambda < \alpha$.}
    \label{fig:hopper_landscape}
\end{figure}

\begin{figure}[!htb]
    \centering
    \includegraphics[width=\textwidth]{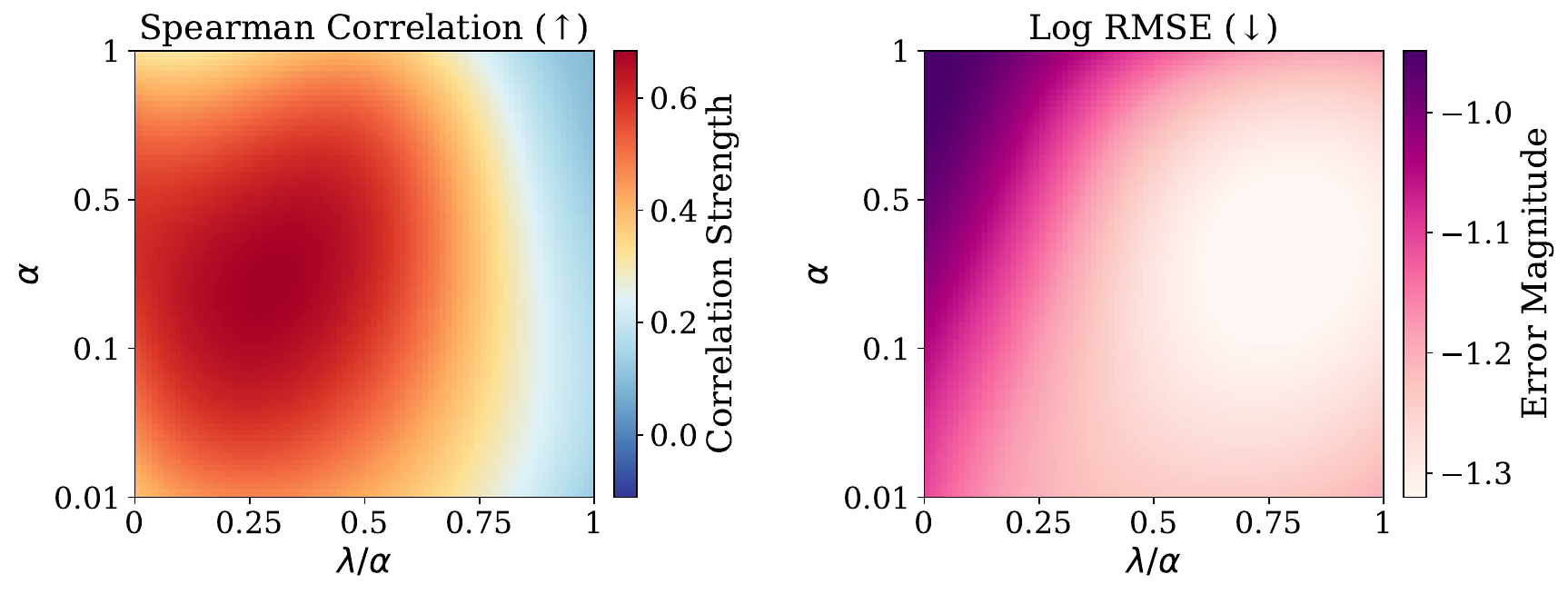}
    \caption{Smoothed performance landscape for Walker2d. Results are generally consistent with Hopper. \textbf{Left:} Spearman correlation is largest around $\alpha \in [0.1, 0.5], \, \lambda \approx 0.25 \alpha$. \textbf{Right:} The LogRMSE is smallest around $\alpha \in [0.1, 0.5],\, \lambda \approx 0.75 \alpha$. These results confirm the optimal range of $\lambda$ is $0 < \lambda < \alpha$.}
    \label{fig:walker_landscape}
\end{figure}

\subsection{Sensitivity to Window Size $w$}
\label{sec:app-sensitivity-w}

To further analyze the sensitivity to \( w \), we evaluate \acronym~across different intermediate values of $w$, and compare the performance according to LogRMSE and Spearman correlation metrics. As illustrated in Figure~\ref{fig:w-analysis-hopper} for Hopper and \ref{fig:w-analysis-pendulum} for Pendulum, the best performance is consistently achieved using moderate values of $w$, i.e. $w = 8$ for Hopper and $w = 16$ for Pendulum. As hypothesized in the main text, based on our analysis in Section \ref{sec:conditional-diffusion} and Section \ref{sec:theory}, low values of $w$ provide more flexibility when stitching trajectories and thus promote compositionality, but are more susceptible to the compounding of errors. High values of $w$ are less susceptible to error compounding but at the expense of compositionality and thus less adaptability to distribution shift. In the current ablation experiment, it is clear that the best balance between compositionality and error compounding occurs using moderate values of $w$, and the greatest deterioration in performance occurs for very small or very large values. It is also important to note that increasing $w$ reduces inference speed due to longer trajectory generations per diffusion step, highlighting a practical trade-off between computational cost and evaluation accuracy.

\begin{figure}[!htb]
    \centering
    \begin{subfigure}[t]{0.48\textwidth}
        \centering
        \includegraphics[width=\textwidth]{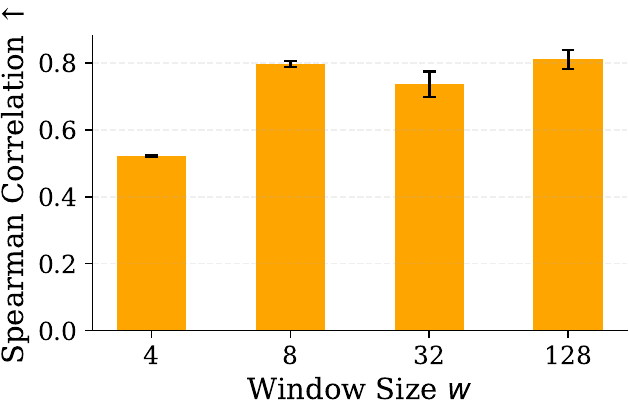}
    \end{subfigure}
    \hfill
    \begin{subfigure}[t]{0.48\textwidth}
        \centering
        \includegraphics[width=\textwidth]{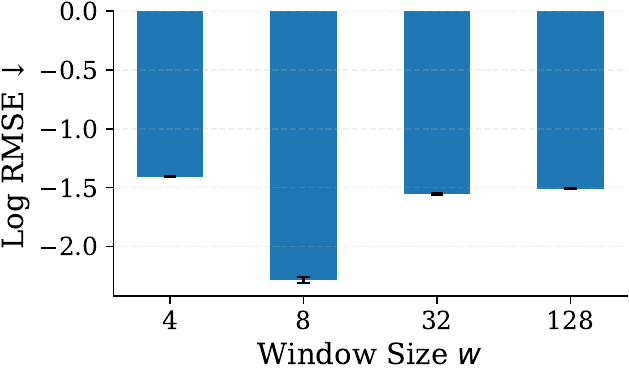}
    \end{subfigure}
    \caption{Sensitivity of STITCH-OPE to window size \(w\) in the Hopper-v2 environment.  
    \textbf{Left:} Spearman rank correlation. \textbf{Right:} Log RMSE.  Error bars denote one standard error over five random seeds. The overall best performance is attained for $w = 8$, suggesting a good balance between compositionality and error compounding.}
    \label{fig:w-analysis-hopper}
\end{figure}

\begin{figure}[!htb]
    \centering
    \begin{subfigure}[t]{0.48\textwidth}
        \centering
        \includegraphics[width=\textwidth]{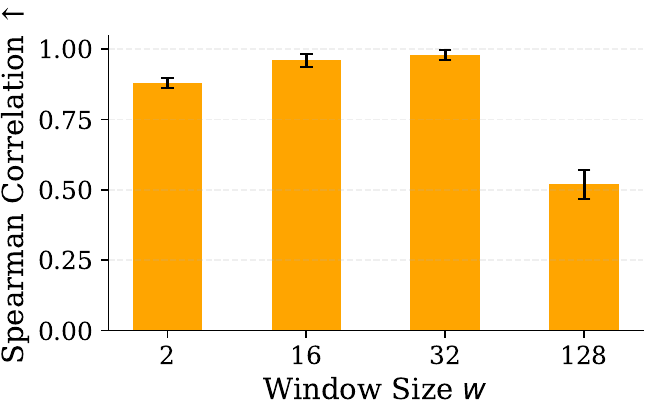}
    \end{subfigure}
    \hfill
    \begin{subfigure}[t]{0.48\textwidth}
        \centering
        \includegraphics[width=\textwidth]{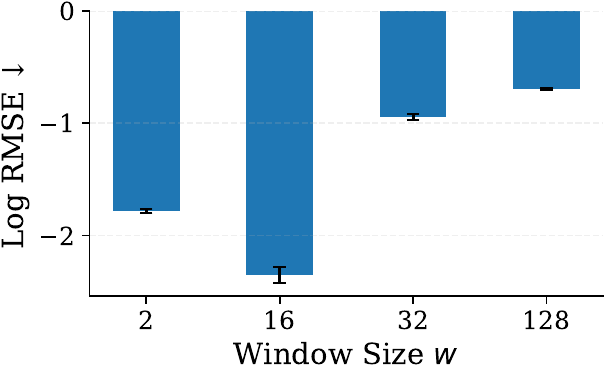}
    \end{subfigure}
    \caption{Sensitivity of STITCH-OPE to window size \(w\) in the Pendulum-v1 environment.  
    \textbf{Left:} Spearman rank correlation. \textbf{Right:} Log RMSE. Error bars denote one standard error over five random seeds. The overall best performance is attained for $w = 16$, suggesting a good balance between compositionality and error compounding.}
    \label{fig:w-analysis-pendulum}
\end{figure}
\subsection{Trajectory Visualizations}
\label{sec:app-trajectory-viz}

We visualize and compare trajectories generated by the guided and unguided versions of \acronym~and Policy-Guided Diffusion (PGD) \citep{jackson2024policy} against both random and optimal policies. These visualizations highlight differences in the quality of generated trajectories, alignment with target policies, and generalization capabilities across various environments. As shown in Figures~\ref{fig:traj-vis-1} and~\ref{fig:traj-vis-3}, \acronym~closely mimics the target policy behavior. On the other hand, PGD performs poorly, significantly overestimating the performance of the random policy. Figure~\ref{fig:traj-vis-2} further demonstrates that \acronym~maintains consistent and robust behavior across policy settings.

\begin{figure}[!htb]
    \centering
    \includegraphics[width=1\linewidth]{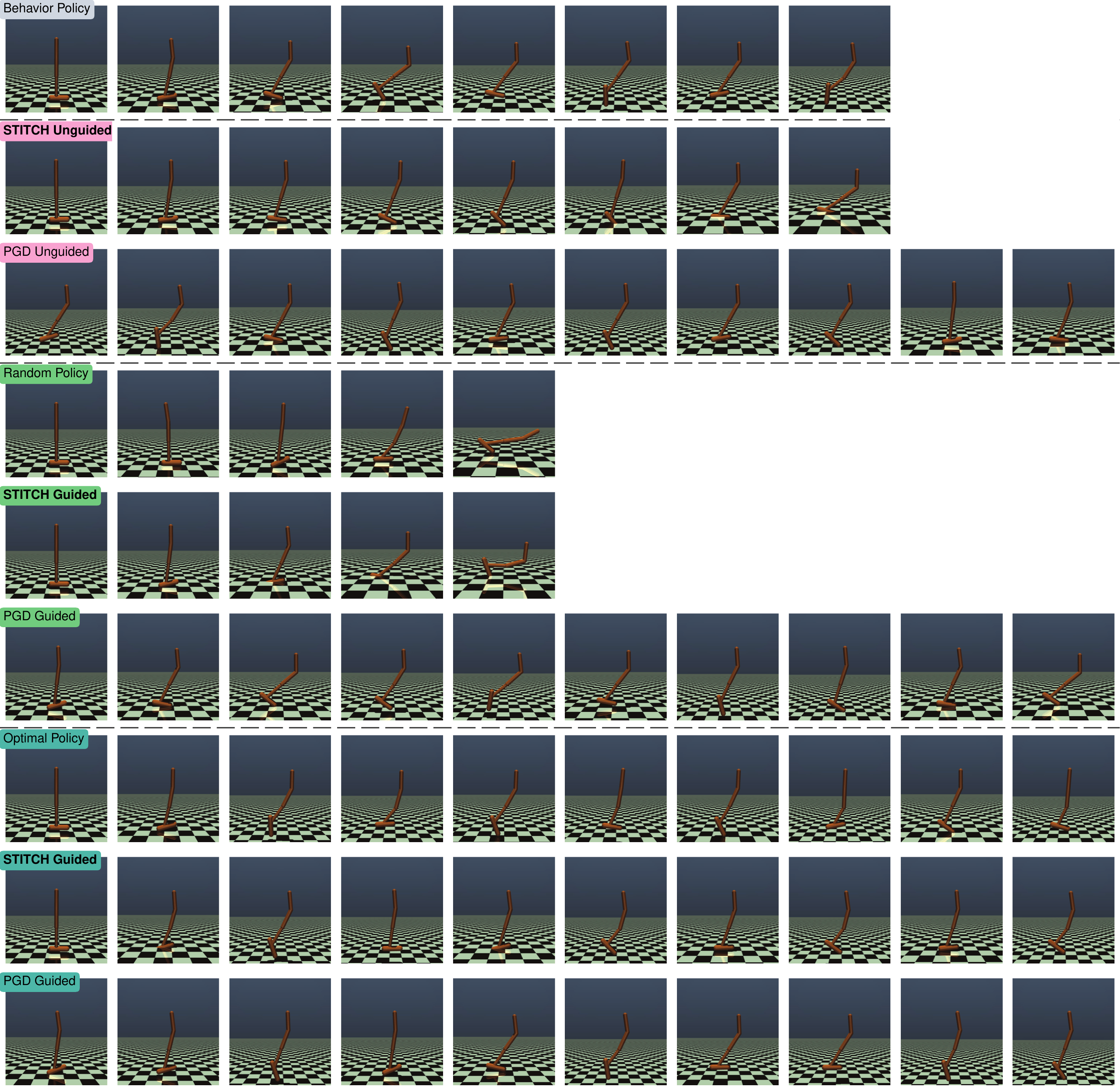}
    \caption{Trajectory visualizations in the Hopper environment. Both \acronym~and PGD track the optimal policy. PGD significantly overestimates the performance of the random policy, while \acronym~correctly models both the state trajectory and the termination.}
    \label{fig:traj-vis-1}
\end{figure}

\begin{figure}[!htb]
    \centering
    \includegraphics[width=1\linewidth]{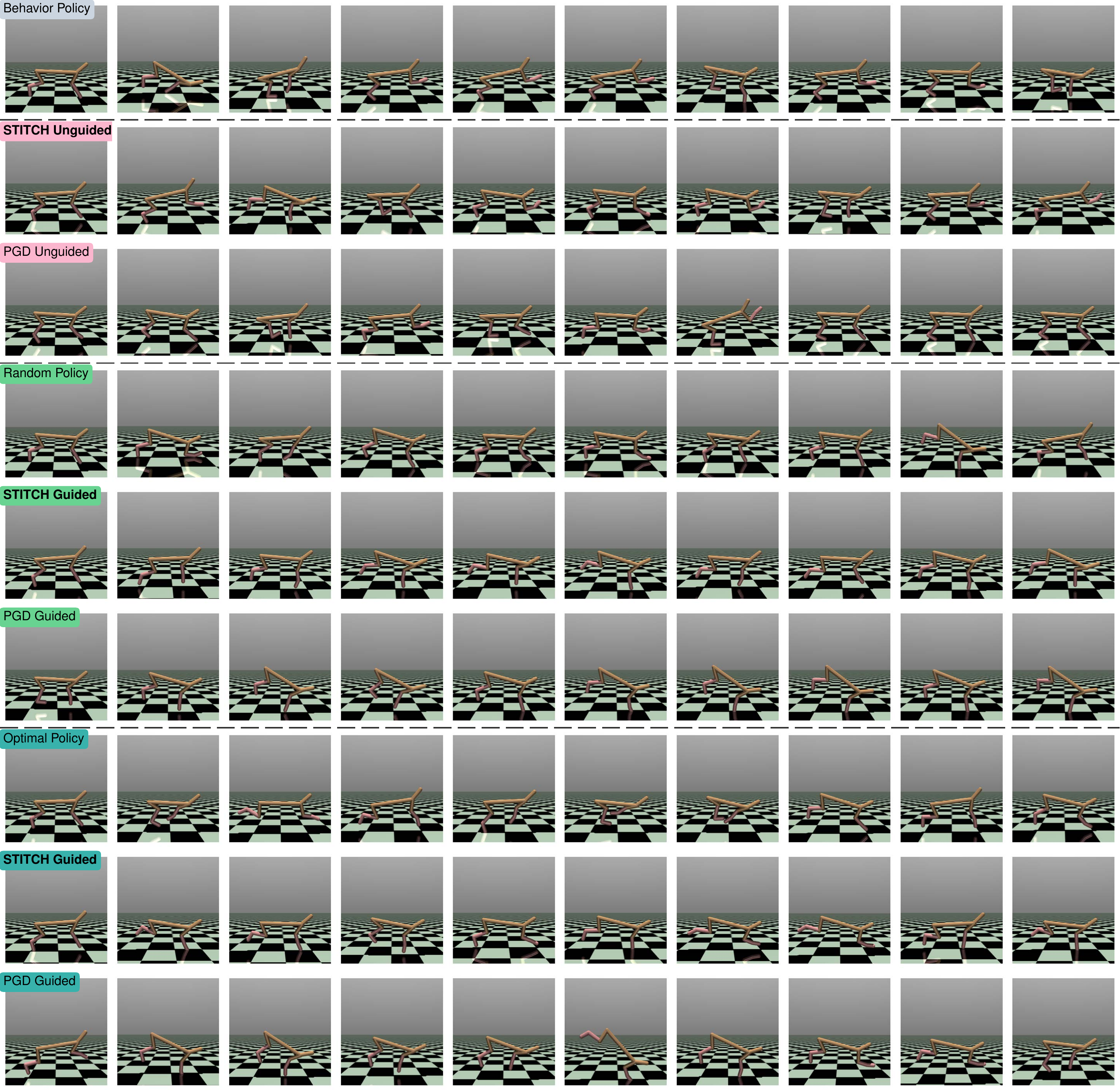}
    \caption{Trajectory visualizations in the HalfCheetah environment. \acronym~and PGD both demonstrate consistent behavior across all policy types, highlighting their robust generalization on this task.}
    \label{fig:traj-vis-2}
\end{figure}

\begin{figure}[!htb]
    \centering
    \includegraphics[width=1\linewidth]{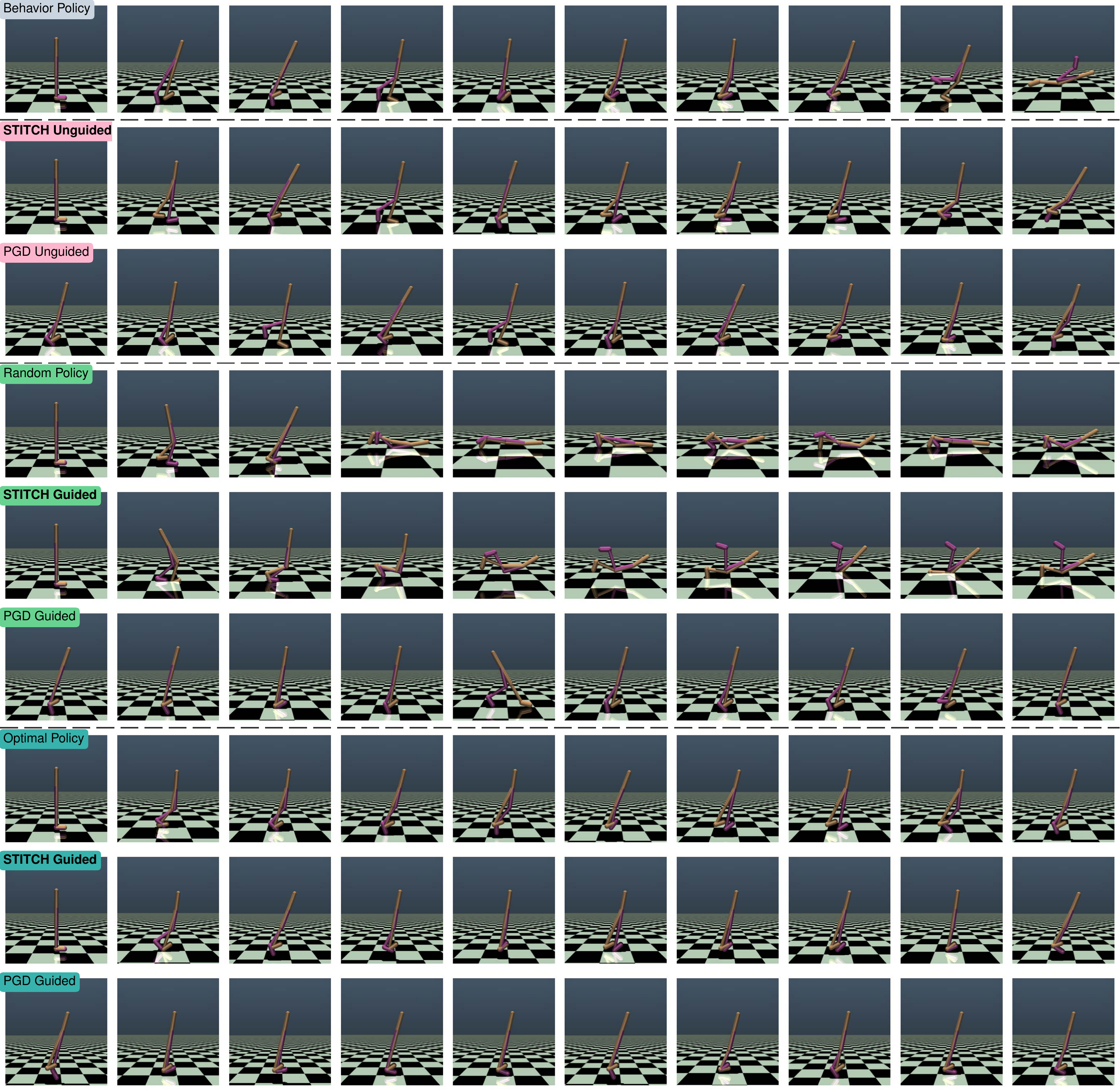}
    \caption{Trajectory visualizations in the Walker2d environment. \acronym~effectively imitates both random and optimal policies. As for the Hopper environment, PGD struggles to correctly imitate the random policy, significantly overestimating its performance.}
    \label{fig:traj-vis-3}
\end{figure}

\section{Computing Resources}
\label{sec:app-computing}   

\paragraph{Hardware and Software}
All experiments were conducted on a local workstation running Ubuntu 20.04 LTS and Python 3.9, with the following hardware:
\begin{itemize}
    \item {2× NVIDIA RTX 3090 GPUs} (24 GB each)
    \item {Intel(R) Core(TM) i9-9820X CPU @ 3.30GHz} (10 cores / 20 threads)
    \item {128 GB RAM}.
\end{itemize}

\paragraph{Runtime}
Each full training of a diffusion model for a D4RL task took approximately 20 hours to complete, depending on environment complexity and rollout length. Each OpenAI Gym task took approximately 5 hours. Each evaluation for a D4RL environment took around 18 hours in total (across all 5 seeds) to complete, and each OpenAI Gym environment took around 6 hours to complete.

\section{Related Work}
\label{sec:app-related-work}



Off-policy evaluation plays a critical role in offline reinforcement learning, enabling the evaluation of policies without directly interacting with the environment. OPE has been studied across a wide range of different domains including robotics \citep{KalashnikovQTOPT}, healthcare \citep{Murphy2001, raghu2018,nie2020learningwhentotreatpolicies} and recommender systems \citep{Dudk2014DoublyRP,Theocharous2015PersonalizedAR}. Relevant work includes model-free and model-based OPE approaches, including recent generative methods in offline RL.

\paragraph{Model-Free Methods} 
Model-free methods, such as Importance Sampling (IS) and per-decision Importance Sampling (PDIS) \citep{precup2000eligibility} reweight trajectories (or single-step transitions) from the behavior policy to approximate returns under a target policy. However, this class of methods suffers from the so-called “curse of horizon”, in which the variance grows exponentially in the length of the trajectory  \citep{liu2018breaking,liu2020}. Doubly Robust (DR) methods \citep{jiang2016doubly, thomas2016data, farajtabar2018more} further combine estimation of value functions with importance weights, reducing the overall variance. Distribution-correction methods (DICE) \citep{nachum2019dualdice,yang2020off,zhanggendice} and their variants \citep{liu2018breaking,mousaviblack2020} try to mitigate the curse-of-horizon by performing importance sampling from the stationary distribution of the underlying MDP. However, these methods perform relatively poorly on high-dimensional long-horizon tasks \citep{GooglePaper}.

\paragraph{Model-Based Methods}
Model-based OPE methods estimate the target policy value by learning approximate transition and reward models from offline data and simulating trajectories under the target policy \citep{jiang2016doubly,kidambi2020morel}. These methods have shown strong empirical performance, especially in continuous control domains \citep{Uehara2021PessimisticMO,zhang2021autoregressive}, but they often suffer from compounding errors during rollouts, which can lead to biased estimates in high-dimensional or long-horizon settings \citep{jiang2015dependence,janner2021sequence}. 

\paragraph{Offline Diffusion}
Inspired by the recent performance of diffusion models across many areas of machine learning \citep{ho2020denoising,dhariwal2021diffusion}, a new stream of reinforcement learning has emerged which leverages diffusion models trained on behavior data \citep{lu2023synthetic,zhu2023diffusion}. \cite{janner2022planning,Ajay2024cond} train diffusion models on behavior data that can be guided to achieve new goals. \cite{jackson2024policy,PolyGrad} apply guided diffusion to offline policy optimization by setting the guidance function to be the score of the learned policy, while \citep{zheng2024safe} applies guided diffusion to satisfy added safety constraints. Unlike \acronym, these works do not use negative guidance nor stitching, which we found leads to unstable policy values when applied directly for offline policy evaluation over a long-horizon. \cite{mao2024diffusiondice} applies DICE to estimate the stationary distribution of the underlying MDP, which is used as a guidance function to correct the policy distribution shift for offline policy optimization. Unlike \acronym, this work is not directly applicable to offline policy evaluation. Finally, \citep{li2024diffstitch} introduces a variant of trajectory stitching for augmenting behavior data, but does not apply it for offline policy evaluation. To the best of our knowledge, \acronym~is the first work to apply diffusion models to evaluate policies on offline data. 

\end{document}